\documentclass{article}

\usepackage[preprint]{neurips_2025}

\usepackage[utf8]{inputenc}
\usepackage[T1]{fontenc}
\usepackage{microtype}
\usepackage[table]{xcolor}
\usepackage{graphicx}
\usepackage{booktabs}
\usepackage{arydshln}
\usepackage{tabularx} 
\newcommand{\methodname}{ROVA}

\definecolor{basegray}{RGB}{240,240,240}  
\definecolor{medorange}{RGB}{255,165,0}
\definecolor{badred}{RGB}{255,0,0}
\definecolor{goodgreen}{RGB}{0,128,0}
\usepackage{amsmath,amssymb,amsthm}
\usepackage{mathtools}
\usepackage{mathrsfs}
\usepackage{bm}
\usepackage{bbm}
\usepackage{nicefrac}

\newtheorem{proposition}{Proposition}
\usepackage[table]{xcolor}

\definecolor{navy}{RGB}{0,0,65}
\definecolor{headerblue}{RGB}{220,230,245}
\definecolor{frameblue}{RGB}{70,130,180}
\definecolor{lightblue}{RGB}{219,234,254}
\definecolor{rowgray}{RGB}{245,245,245}
\definecolor{bestcell}{RGB}{230,245,230}
\definecolor{highlightgray}{RGB}{230,230,230}
\definecolor{baselinegray}{RGB}{215,215,215}
\definecolor{badred}{RGB}{200,50,50}
\definecolor{negred}{RGB}{180,0,0}
\definecolor{medorange}{RGB}{210,140,30}
\definecolor{goodgreen}{RGB}{30,150,30}
\definecolor{improveblue}{RGB}{40,100,200}
\definecolor{classifygold}{HTML}{FFF3CD}
\definecolor{mycitecolor}{HTML}{6666DD}

\usepackage[pagebackref,breaklinks,colorlinks]{hyperref}
\hypersetup{
  colorlinks = true,
  citecolor  = mycitecolor,
  linkcolor  = red,
  urlcolor   = magenta,
}

\usepackage[capitalize,noabbrev]{cleveref}

\usepackage{booktabs}
\usepackage{tabularx}
\usepackage{multirow}
\usepackage{array}
\usepackage{arydshln}
\usepackage{colortbl}

\usepackage{graphicx}
\usepackage{subcaption}
\usepackage{wrapfig}
\definecolor{basegray}{RGB}{250,250,250}
\definecolor{oursgray}{HTML}{EAF2FB}
\usepackage{enumitem}
\usepackage{url}
\usepackage{soul}
\usepackage{tcolorbox}
\usepackage{titletoc}
\usepackage{tocloft}

\usepackage{algorithm}
\usepackage{algorithmic}

\usepackage{listings}
\definecolor{darkgreen}{rgb}{0,0.5,0}
\definecolor{azureblue}{rgb}{0,0.5,1}
\definecolor{darkgreen}{rgb}{1,0,0}
\definecolor{color1}{HTML}{006EB8}
\definecolor{color2}{HTML}{009B55}

\usepackage{pifont}

\Crefname{section}{Section}{Sections}
\Crefname{table}{Table}{Tables}
\crefname{section}{Sec.}{Secs.}
\crefname{table}{Tab.}{Tabs.}
\crefname{figure}{Fig.}{Figs.}
\crefname{appendix}{Sec.}{Secs.}

\newcommand{\methodnamefull}{RObust Video Alignment (ROVA)}
\newcommand{\benchname}{PVRBench}
\newcommand{\benchnamefull}{Perturbed Video Reasoning Benchmark}

\title{Are Video Reasoning Models Ready to Go Outside?}
\author{%
  Yangfan He \\
  NTU Singapore\\
  \texttt{yhe873232@gmail.com} \\
  \And
  Changgyu Boo \\
  Korea University \\
  \texttt{2019150348@korea.ac.kr} \\
  \AND
  Jaehong Yoon\thanks{Corresponding author} \\
  NTU Singapore \\
  \texttt{jaehong.yoon@ntu.edu.sg} \\
}
\begin{document}

\maketitle
\begin{abstract}
In real-world deployment, vision-language models often encounter disturbances such as weather, occlusion, and camera motion. Under such conditions, their understanding and reasoning degrade substantially, revealing a gap between clean, controlled (i.e., unperturbed) evaluation settings and real-world robustness.
To address this limitation, we propose \methodname{}, a novel training framework that improves robustness by modeling a robustness-aware consistency reward under spatio-temporal corruptions. 
\methodname{} introduces a difficulty-aware online training strategy that prioritizes informative samples based on the model’s evolving capability.
Specifically, it continuously re-estimates sample difficulty via self-reflective evaluation, enabling adaptive training with a robustness-aware consistency reward.
We also introduce \benchname{}, a new benchmark that injects real-world perturbations into embodied video datasets to assess both accuracy and reasoning quality under realistic disturbances.
We evaluate \methodname{} and baselines on \benchname{}, UrbanVideo, and VisBench, where open-source and proprietary models suffer up to 35\% and 28\% drops in accuracy and reasoning under realistic perturbations. \methodname{} effectively mitigates performance degradation, boosting relative accuracy by at least 24\% and reasoning by over 9\% compared with baseline models (QWen2.5/3-VL, InternVL2.5, Embodied-R). These gains transfer to clean standard benchmarks, yielding consistent improvements.

\begin{center}
\vspace{0.1in}
Project Page: \url{https://robust-video-reason.github.io/}
\end{center}

\end{abstract}


\begin{figure*}[t!]
    \centering
    \includegraphics[width=\textwidth]{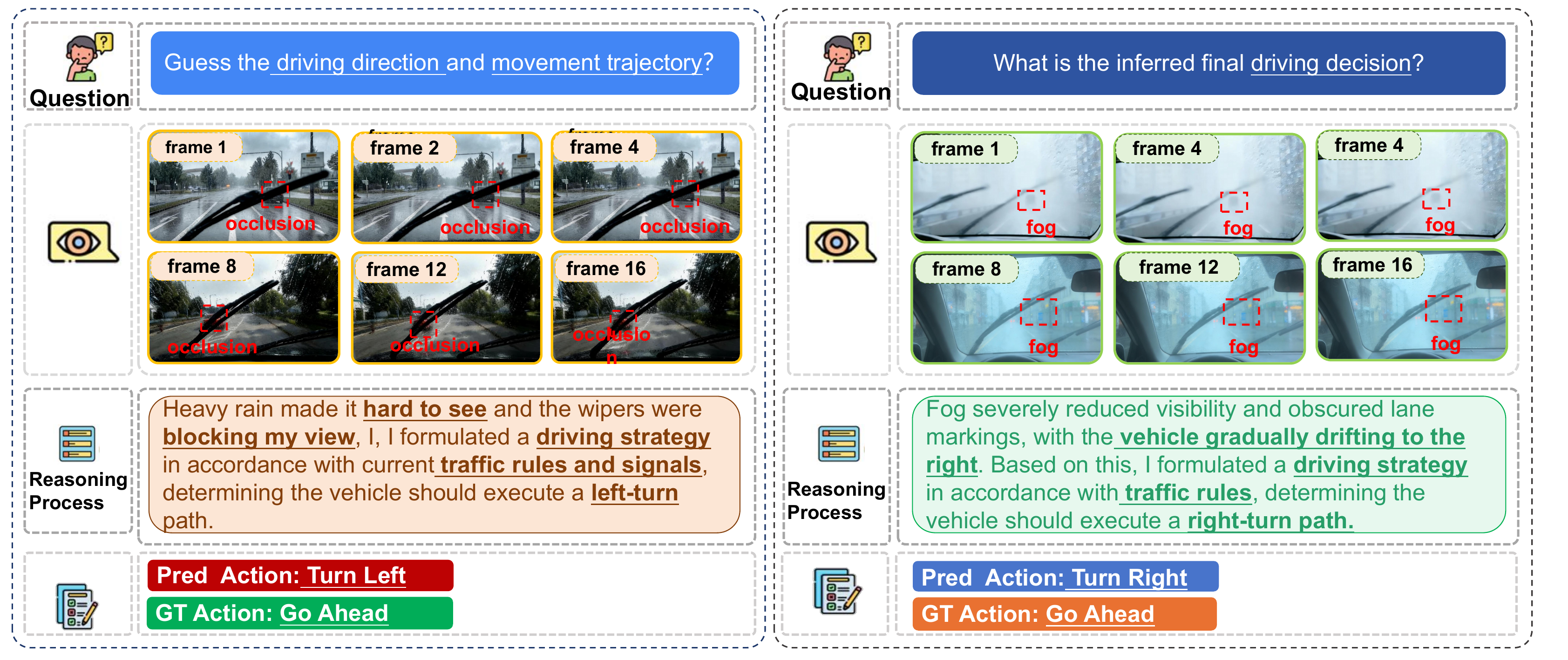}
    \caption{Failure cases of Qwen2.5-VL under two representative perturbations: (a) occlusion (left) and (b) adverse weather (right). The model incorrectly predicts Turn Left'' under occlusion and Turn Right'' under fog, despite the ground-truth being ``Go Ahead'' in both cases, demonstrating how realistic perturbations mislead reasoning and motivating the need for robustness-aware training.}
    \label{fig:motivation}
    \vspace{-0.65cm}
\end{figure*}
\section{Introduction}
Vision-language models (VLMs)~\citep{zhang2023video,maaz2024video,Shu2024Video,yuan2025tarsier2advancinglargevisionlanguage,li2025videochatr1enhancingspatiotemporalperception,yu2025crema,clark2026molmo2} have rapidly advanced video understanding and reasoning, allowing systems to interpret complex scenes and perform temporally grounded inference. These capabilities support many real-world applications, yet a key question remains: \textit{are current VLMs robust enough to operate reliably beyond clean, controlled conditions?} In practice, these models frequently face challenging video streams, corrupted by adverse weather (e.g., rain, fog, snow), dynamic occlusions (e.g., pedestrians, vehicles, vegetation), abrupt illumination changes (e.g., glare, shadows, low light), and camera motion induced by vibration or viewpoint shifts. Such perturbations are common in the real world, yet these models severely degrade perception and lead to brittle or unreliable reasoning~(\cref{fig:motivation}). For instance, under conditions such as video occlusion or adverse weather, baseline models may incorrectly output “Turn Left” or “Turn Right” rather than the ground-truth “Going Ahead.”
This gap between benchmark assumptions and real-world conditions highlights the need for training frameworks that promote reliable generalization under realistic variability and uncertainty.
A few prior studies~\citep{mao2022understanding,zhou2024revisitingadversarialrobustnessvision,zhang2024benchmarkinglargemultimodalmodels} have explored improving the robustness of VLMs through generic data augmentation, random frame masking, zero-shot, or adversarial training. However, these methods typically treat robustness as a single objective, overlooking that different perturbations induce distinct failure modes. Consequently, they struggle to address structured, semantically meaningful corruptions common in real-world environments, since perturbation-specific failure behaviors are not explicitly modeled.


To address this challenge, we propose \textit{\methodnamefull{}}, a novel training approach for robust vision reasoning under realistic visual disturbances. We first apply corruption-based augmentation to generate perturbed videos. \methodname{} then measures divergence in reasoning coherence and answer quality between clean and corrupted videos as a proxy for corruption-induced difficulty. Moderately difficult instances are used for training, while overly easy samples are discarded and excessively difficult ones are stored in a temporal memory buffer for later revisiting.
Unlike curriculum learning, which follows a fixed, easy-to-hard schedule, this \textit{self-reflective evaluation} estimates the difficulty and informativeness of each video–query instance based on the model’s current capability, enabling an adaptive curriculum that prioritizes informative samples while deferring overly difficult ones through memory replay. 
Next, we introduce a dual-branch alignment objective that enforces output consistency between paired clean and perturbed inputs. This robustness-aware consistency alignment is guided by reward modeling over reasoning and answer consistency, and optimized using group relative policy optimization.
Specifically, we enforce output consistency between paired clean and perturbed video inputs through reward-guided optimization that evaluates both reasoning and answer consistency, trained via group relative policy optimization~\citep{shao2024deepseekmath}.

\begin{table*}[t!]
\centering
\large  
\definecolor{cmark}{HTML}{2E8B57}
\definecolor{xmark}{HTML}{CD5C5C}
\definecolor{oursrow}{HTML}{EAF2FB}
\newcommand{\yes}{\textcolor{cmark}{\ding{51}}}
\newcommand{\no}{\textcolor{xmark}{\ding{55}}}
\caption{Comparison of \benchname{} with existing video understanding benchmarks.
\protect\textit{\#Types} counts perturbation subtypes. 
\protect\textit{\#Cat.} counts scene or class categories.
\textit{{Syn}thetic}, \textit{{Spat}ial}, and \textit{{Temp}oral} indicate artificially generated, spatially grounded, and temporally consistent perturbations, respectively.
\benchname{} covers 27 tasks covering \textit{\textbf{in}door}, \textit{\textbf{out}door}, and \textit{\textbf{emb}odied AI} scenarios.
$^\ddagger$: An image-level benchmark for reference.
}
\label{tab:benchmark_comparison}
\resizebox{\textwidth}{!}{%
\setlength{\tabcolsep}{4pt}        
\renewcommand{\arraystretch}{1.2}     
\begin{tabular}{l cc ccccc cccc}
\toprule
\multirow{2}{*}{\textbf{Benchmark}} & \multicolumn{2}{c}{\textbf{Scale}} & \multicolumn{5}{c}{\textbf{Perturbation Properties}} & \multicolumn{4}{c}{\textbf{Scene Coverage}} \\
\cmidrule(lr){2-3} \cmidrule(lr){4-8} \cmidrule(lr){9-12}
& \textbf{\#Videos} & \textbf{\#QAs} & \textbf{Synthetic} & \textbf{Real} & \textbf{Spatial} & \textbf{Temporal} & \textbf{\#Types} & \textbf{Ind.} & \textbf{Out.} & \textbf{Emb.} & \textbf{\#Cat.} \\
\midrule
ImageNet-C$^\ddagger$~\citep{xie2020self}   & 50K  & 50K  & \yes & \no & \no & \no & 19        & \yes & \yes & \no  & 1K \\
MVBench~\citep{li2024mvbench}               & 4K   & 4K   & \no  & \no & \no & \no & 0         & \yes & \yes & \no  & 20 \\
Video-MME~\citep{fu2025video}               & 900  & 2.7K & \no  & \no & \no & \no & 0         & \yes & \yes & \no  & 30 \\
ALFRED~\citep{shridhar2020alfred}           & 8K   & 25K  & \no  & \no & \no & \no & 0         & \yes & \no  & \yes & 7  \\
Ego4D~\citep{grauman2022ego4d}              & 3.7K & 3.8M & \no  & \no & \no & \no & 0         & \yes & \yes & \yes & 5  \\
VisBench~\citep{yang2025thinking}           & 500  & 3K   & \no  & \no & \no & \no & 0         & \yes & \no  & \yes & 11 \\
UrbanVideo~\citep{zhao2025urbanvideo}       & 1.5K & 6K   & \no  & \no & \no & \no & 0         & \no  & \yes & \yes & 16 \\
\midrule
\rowcolor{oursrow}
\textbf{\benchname{} (Ours)} & \textbf{9K} & \textbf{52K} & \yes & \no & \yes & \yes & \textbf{12} & \yes & \yes & \yes & \textbf{27} \\
\bottomrule
\end{tabular}%
}
\label{tab : benchmark}
\vspace{-0.2in}
\end{table*}



We further introduce \textit{\benchnamefull{} (\benchname{})}, for evaluating the robustness of video reasoning under diverse realistic perturbations.
Unlike existing benchmarks, including VisBench~\citep{yang2025thinking} and UrbanVideo~\citep{zhao2025urbanvideo}, which primarily evaluate models on curated environments, \benchname{} systematically injects perturbations from 12 corruption styles associated with lighting, camera motion, occlusion, and weather (\cref{tab:benchmark_comparison}), across 27 scene categories. Notably, all perturbations are spatially aware and temporally coherent, capturing realistic video disturbances. We observe that performant proprietary models (GPT-4o~\citep{hurst2024gpt} / Gemini-3-Pro~\citep{team2023gemini}) suffer 11–17\% and 10–14\% drops in accuracy and reasoning, and open-source models degrade by up to 35\% and 26\%, respectively, highlighting robustness gaps in VLMs under realistic conditions.

\methodname{} consistently outperforms proprietary and open-source models on \benchname{}, UrbanVideo, and VisBench across all perturbation types in both answer accuracy and reasoning quality. Specifically, \methodname{} surpasses the strongest open-source baselines of comparable size, Embodied-R, by 17\%, while larger variants (13B/72B) match or exceed leading proprietary models such as Gemini-3-Pro and GPT-4o. Notably, these improvements extend to clean videos, demonstrating enhanced generalizability and stronger performance on clean data. Furthermore, \methodname{} achieves higher reasoning quality, with improved consistency and belief scores, reflecting more stable, confident reasoning under visual corruption.
\section{Related Work}
\textbf{Robust Training for Multimodal Models.}
Several works~\citep {mao2022understanding,zhao2023evaluatingadversarialrobustnesslarge,sheng2025rtptimprovingadversarialrobustness,oh2025understanding,agarwal-etal-2025-mvtamperbench,schiappa2023robustnessanalysisvideolanguagemodels} have explored robustness to distribution shifts and adversarial inputs through data augmentation~\citep{duan2023improvevideorepresentationtemporal}, test-time adaptation~\citep{zhao2024testtimeadaptationclipreward}, and transfer-based strategies~\citep{Tong2025zero,cai2025clapisolatingcontentstyle}. However, these approaches primarily address generic perturbations or optimization efficiency, rather than the structured, semantically grounded disturbances encountered in real-world video settings.
In video reasoning, recent methods~\citep{zhou2025reagent,wang2025timer1posttraininglargevision,chen2025datasetsrecipesvideotemporal,wang2025video} improve efficiency via adaptive frame sampling or data filtering, but they do not explicitly model realistic corruption patterns~\citep{Zeng2024benchmarking,Yang2025robench} that alter scene visibility and temporal coherence. As a result, robustness is treated as incidental resilience rather than being explicitly modeled during optimization.
In contrast, \methodname{} incorporates structured and semantically grounded perturbations that reflect realistic environmental disturbances. 
The proposed architecture and training objectives enforce representation consistency between clean and 
perturbed videos, progressively strengthening disturbance-aware reasoning.

\textbf{Robust Video Reasoning in Real-World Environments.}
Recent advances in video–language models~\citep{zhang2023video,nguyen-etal-2024-video,yuan2025tarsier2advancinglargevisionlanguage,yu2025crema,clark2026molmo2} have substantially improved temporal reasoning and long-horizon embodied planning~\citep{chen2025exploringembodiedmultimodallarge,azzolini2025cosmos,zhang2025embodied,zhao2025embodied,yu2026and,yeo2026worldmm}.
However, most existing benchmarks evaluate models under nearly clean visual conditions~\citep{maaz2024video}, implicitly assuming stable lighting, unobstructed views, and smooth camera movement. Although robustness is sometimes measured via synthetic textual perturbations~\citep{wu2025pay}, such evaluations do not capture structured, semantically grounded visual disturbances encountered in real-world environments. Consequently, no standardized benchmark systematically integrates realistic disturbances into embodied video reasoning, leaving a gap between benchmarks and deployment conditions.
In contrast, we introduce \benchname{} that integrates semantically meaningful perturbations into temporally coherent reasoning tasks. Rather than treating corruption as incidental noise, we ask models to reliably reason about scene content, even in the presence of disturbances.
\begin{figure*}[t!]
\centering
\includegraphics[width=\linewidth]{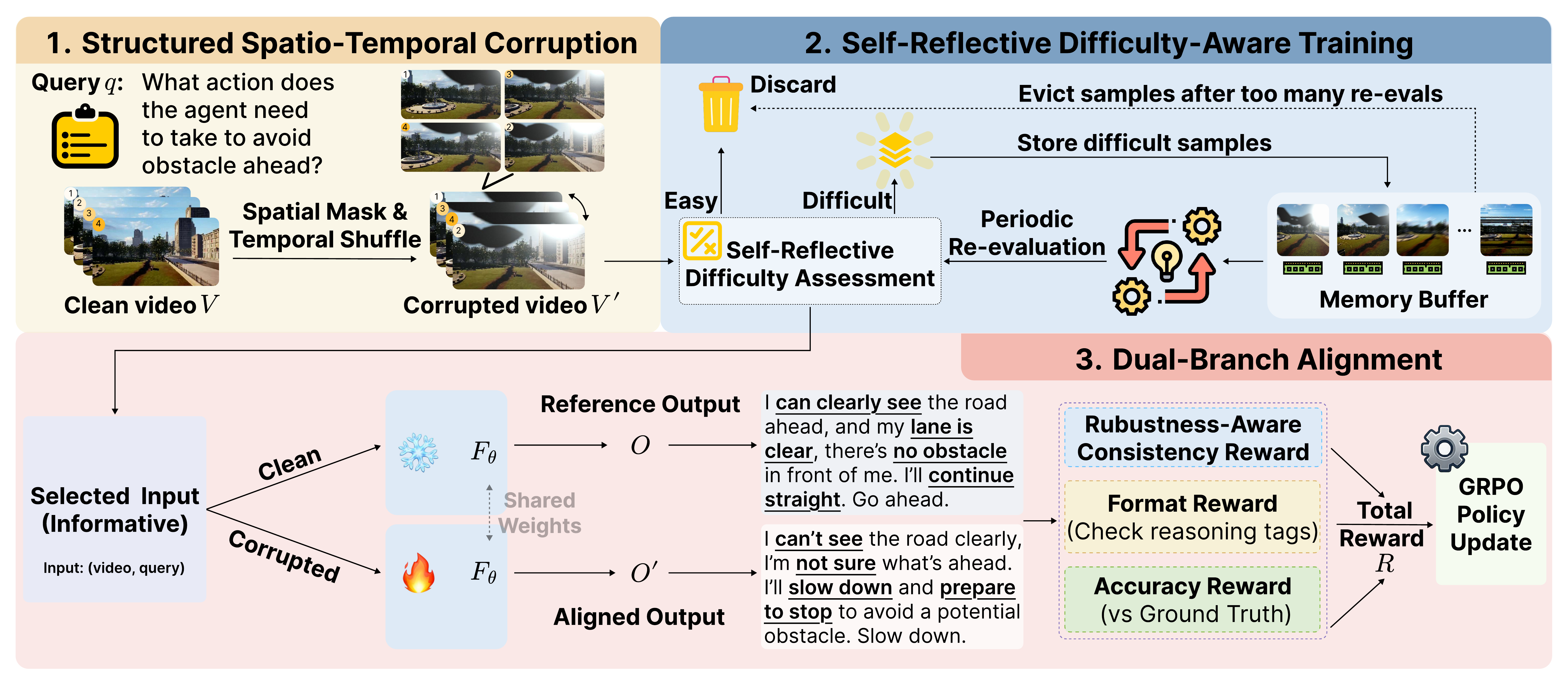}
\caption{\textbf{Overview of \methodname{}}: (1) structured spatio-temporal corruption that generates realistic perturbations, (2) self-reflective evaluation with difficulty-aware online training that adaptively prioritizes informative samples, and (3) dual-branch alignment reward modeling that enforces output consistency between clean and perturbed inputs.}
\vspace{-0.15in}
\label{fig:concept}
\end{figure*}

\section{Training Robust Video Reasoning Models with \methodname{}}
\label{sec : method_VRM}
As illustrated in \cref{fig:concept}, \methodname{}, a novel training approach for robust video reasoning under real-world perturbations, comprises three stages: we first generate corruption-augmented video-query pairs via dynamic, physically plausible perturbations (\cref{sec : corruption}). Next, a difficulty-aware curriculum performs self-reflective evaluation to selectively curate informative training samples conditioned on the model’s evolving capability (\cref{sec : Data Curation}) . 
Finally, dual-branch alignment enforces consistency between clean and perturbed videos via reasoning-aware rewards and group relative policy optimization (GRPO) (\cref{sec : dual-branch}).  

\subsection{Learning with Structured Spatio-Temporal Corruption} \label{sec : corruption}
We first design a structured spatio-temporal corruption pipeline that models four realistic disturbances, including \textit{weather, lighting, occlusion, and camera motion}, using style-specific, cross-frame coherent masks for spatial perturbations and temporal shuffling to disrupt temporal order.
Unlike generic augmentations that apply independent pixel or frame perturbations (e.g., random masking, color jittering)~\citep{xie2020self}, we explicitly model perturbation styles with spatial grounding and temporal coherence, yielding structured spatio-temporal disturbances.
Each video is then paired with its corrupted counterpart in a dual-branch alignment framework to optimize output consistency.
Through this design, the model learns perturbation-invariant representations for robust real-world generalization.

Let a video sequence be denoted as $V = \{f_1, f_2, \dots, f_T\}$, where $f_t \in \mathbb{R}^{H \times W \times C}$ denotes the $t$-th frame of height $H$, width $W$, and $C$ channels.

\textbf{Temporal Corruption.} To disrupt temporal coherence, we randomly permute the frame sequence. A permutation $\pi: \{1, \dots, T\} \to \{1, \dots, T\}$ is sampled uniformly at random, and the temporally shuffled video is defined as
\begin{equation}
V_{\mathrm{temp}} = \{f_{\pi(1)}, f_{\pi(2)}, \dots, f_{\pi(T)}\},
\label{eq:temp_shuffle}
\end{equation}
which completely scrambles temporal order while preserving all frame content.

\noindent\textbf{Spatial Corruption.} Rather than coarse block-wise masking that risks removing critical cues, we apply fine-grained masks across four perturbation styles $m \in \mathcal{P} = \{\mathrm{weather},\, \mathrm{lighting},\, \mathrm{camera},\, \mathrm{occlusion}\}$. 
For each frame $f_t$, the mask $P_t^{(m)} = B_t^{(m)} \odot C_t^{(m)}$ fuses a binary map $B_t^{(m)} \in \{0,1\}^{H \times W}$, where $1$/$0$ denotes corrupted/clean pixels, with layouts driven by depth awareness or stochastic sampling, and a continuous modulation map $C_t^{(m)} \in [0,1]^{H \times W}$ encoding per-pixel effect intensity (e.g., rain strength, shadow depth, blur kernel; see~\cref{appendix:perturbation_system}.) The corrupted frame is computed as
$f_t^{\mathrm{masked}} = f_t \odot P_t^{(m)}$,
where $\odot$ denotes element-wise multiplication.

\noindent\textbf{Spatio-Temporal Corruption.} For each video, a perturbation style $m \in \mathcal{P}$ is uniformly sampled to generate the corrupted frame sequence:
\begin{equation}
V' = \left\{ f_{\pi(t)} \odot P^{(m)}_t \right\}_{t=1}^{T},
\label{eq:combined_corruption}
\end{equation}
where $P^{(m)}_t$ denotes the smooth, style-specific mask associated with style $m$. By jointly introducing temporal order disruption and spatially realistic, continuous masking, our approach promotes perturbation-invariant representation learning while preserving essential visual semantics.

\subsection{Self-Reflective Difficulty-Aware Training}\label{sec : Data Curation}
{
Introducing structured visual corruptions exposes the model to a broader spectrum of reasoning difficulty than training on clean videos alone. While clean inputs typically lie within a narrow difficulty range, corrupted versions vary widely in severity, expanding the diversity of learning signals during training.
Crucially, training is most effective on samples that are neither too easy nor excessively difficult~\citep{wang2025video} under the model’s current capacity, as these instances provide the most informative learning signals and support stable optimization. Rather than uniformly sampling across the expanded difficulty range, we therefore prioritize appropriately challenging examples through a self-reflective, difficulty-aware strategy that implicitly forms an online curriculum. By continuously focusing on corrupted samples that provide meaningful learning signals, the model enables to promote robust and reliable reasoning under realistic visual disturbances.}

To this end, we propose a self-reflective, difficulty-aware training pipeline that implicitly builds an adaptive curriculum in an online manner. Formally, let $F_\theta$ denote a learnable VLM parameterized by $\theta$. 
We assume that training video–text pairs arrive sequentially, and let $\theta_i$ denote the model parameters at training iteration $i$.
At each iteration, \methodname{} performs two internal steps:
\textit{1) self-reflective evaluation}, where $F_{\theta_i}$ estimates the usefulness of incoming samples for training under its current state; and \textit{2) difficulty-aware selective training}, where model updates are performed using only a subset of samples selected according to the proposed policy.

\textbf{Self-Reflective Evaluation.} 
At iteration $i$, the model $F$ evaluates each masked video $V'_i$ and produces a difficulty label $d \in \{\emph{easy}, \emph{difficult}, \emph{informative}\}$ and a confidence score $c \in [0,1]$, defined as,
\begin{equation}
d, c = F_{\theta_i}(q_i, V'_i, S_e),
\label{eq:self_reflection}
\end{equation}
where $q_i$ denotes the input query and $S_e$ denotes the evaluation prompt (See \cref{tab:prompt_judge}). Specifically, $d$ is obtained by prompting $F_{\theta_i}$ with $S_e$ to compare its responses on clean and corrupted inputs: if the model answers correctly and consistently, the sample is labeled $d=\emph{easy}$; if responses diverge substantially or are incorrect, it is labeled $d=\emph{difficulty}$; otherwise, the sample is labeled $d=\emph{informative}$, indicating moderate uncertainty that is most beneficial for training. The confidence score $c$ is derived from the model's output token probabilities.
Unlike traditional curriculum learning with a fixed schedule, our prompt-based sample-level evaluation dynamically estimates the model’s current capability and prioritizes informative samples to stabilize the effective training distribution.
Based on $d$ and $c$, we design the following data selection policy: 

(i) high-confidence easy samples ($d=\emph{easy},\, c>\tau$, where $\tau$ is a confidence threshold) are considered as sufficiently learned and filtered out, {enabling the model to prioritize disturbance-sensitive samples that provide strong learning signals}. 
(ii) difficult samples ($d=\emph{difficult}$) are stored in a temporal memory buffer $\mathcal{M}$ {for deferred training and periodically re-evaluated. While potentially informative, they may yield weak or unstable learning signals under the current model state, and are revisited once the model has sufficiently improved.}
(iii) informative samples ($d=\emph{informative}$) as well as low-confidence easy samples ($d=\emph{easy},\, c<=\tau$) are treated as high-information instances and prioritized for immediate training.

{\textbf{Difficulty Re-evaluation and Deferred Training with Memory.}} As the model improves over time, samples that were previously too difficult to learn from may later provide meaningful training signals. To leverage this evolving capability, we introduce a memory-based deferred training mechanism that periodically re-evaluates difficult instances.
Formally, when newly arriving data are evaluated as \textit{difficult}, 
it is stored in a temporal memory buffer $\mathcal{M}$ as:
    \begin{equation}
    \mathcal{M} \leftarrow \mathcal{M} \cup \{(q, \tilde{V}, k = 0)\},
    \label{eq:memory_insert}
    \end{equation}
    where $\tilde{V}$ encodes the mask metadata, including perturbation style, parameters, and spatial-temporal regions.
    This design allows the corrupted video $V'$ to be regenerated on demand during re-evaluation, avoiding the need to store full video data.
    During training, instances in $\mathcal{M}$ are periodically re-evaluated under the updated model. The counter $k$ tracks the number of re-assessments performed for each sample.
    For each entry $(q_n, \tilde{V}_n, k_n) \in \mathcal{M}$, the current model $F$ periodically re-assesses its difficulty using the current parameter $\theta_i$:
    \begin{equation}
    d', c' = F(q_n, \tilde{V}_n, S_e; \theta_i), \quad
    k_n \leftarrow k_n + 1.
    \label{eq:difficulty}
    \end{equation}
    Here, $d'$ and $c'$ denote the updated difficulty level and confidence score, respectively. 
    Entries reclassified as \textit{informative} are immediately used for training, whereas those labeled \textit{easy} are removed from the memory buffer. Entries that remain difficult are retained in $\mathcal{M}$ with their re-evaluation counter incremented.

    The confidence score $c'$ serves as an auxiliary diagnostic signal for self-monitoring and stability analysis, but is not used directly for memory retention decisions to avoid sensitivity to noisy confidence estimates.
    As training progresses, samples that were previously difficult may transition to informative or easy categories, allowing the curriculum to adapt to the model’s evolving capability. However, repeated re-evaluation can lead to unbounded memory growth, {particularly when samples remain persistently difficult or heavily corrupted, yielding little effective learning signal.} To prevent this, we impose a maximum re-evaluation threshold and evict entries exceeding it:
    \begin{equation}
    \mathcal{M} \leftarrow \mathcal{M} \setminus \{(q, \tilde{V}, k) \mid k > K_{\max}\}.
    \label{eq : memory}
    \end{equation}
    Overall, the proposed self-reflective, difficulty-aware training framework establishes a closed-loop mechanism that dynamically adjusts the training data distribution to the model’s evolving capability. 
    By prioritizing samples based on estimated difficulty and confidence, the framework selects instances that yield effective learning signals under corrupted conditions while filtering low-utility ones.
    Although periodic re-evaluation incurs modest computational overhead, this cost is negligible relative to the high per-sample cost of reinforcement learning on videos. 
    In addition, selectively discarding uninformative instances leads to substantial gains in training efficiency (See~\cref{tab:training_efficiency}). 
    
\subsection{Dual-Branch Alignment Optimization} \label{sec : dual-branch}
\methodname{} trains the model through a dual-branch alignment mechanism that aligns representations from clean and partially perturbed video inputs.
The training objective enforces consistency between two branches using the proposed reward modeling combined with GRPO~\citep{shao2024deepseekmath}. Here, the clean video branch serves as a fixed anchor with gradients detached, while the perturbed branch is optimized to align its outputs with those of the clean branch.
Given a group of $G$ paired samples, the clean branch produces reference outputs $\{o_j\}_{j=1}^G$ and the perturbed branch generates aligned outputs $\{\tilde{o}_j\}_{j=1}^G$. Each pair $(o_j, \tilde{o}_j)$ corresponds to the same video query under clean and perturbed visual conditions. 
    We treat a  $F_\theta$ as a policy that generates reasoning outputs conditioned on video inputs:
    \begin{equation}
    \begin{split}
    J(\theta) = \mathbb{E}_{(q, V)\sim \mathcal{D},\; 
    \{o_j\}_{j=1}^G \sim F_{\theta_\text{old}}(O|q, V)}&\frac{1}{G} \sum_{j=1}^G \big[\min\big( r_j A_j, \\
    \text{clip}( r_j, 1-\epsilon, &1+\epsilon ) A_j \big)
    - \beta D_{\text{KL}}\big( F_\theta \| F_{\text{ref}} \big) \big],
    \end{split}
    \label{eq:grpo_objective}
    \end{equation}
    where $r_j = F_\theta(o_j|q) / F_{\theta_\text{old}}(o_j|q)$, $\epsilon$ 
    and $\beta$ are hyperparameters, and $D_{\text{KL}}\big( F_\theta \| F_{\text{ref}} \big)$ denotes the KL-divergence penalty term. The advantage $A_j$ corresponding to output $o_j$ is calculated from the associated reward set $\{r_1, r_2, \dots, r_G\}$:
    \begin{equation}
    A_j = \frac{r_j - \text{mean}\big(\{r_1, r_2, \dots, r_G\}\big)}{\text{std}\big(\{r_1, r_2, \dots, r_G\}\big)}.
    \label{eq:advantage}
    \end{equation}
    \textbf{Format Reward.} The model is required to generate an output $o_j$ consisting of an embodied reasoning process $p_j$ followed by a final answer $a_j$, enclosed within \texttt{<think></think>} and \texttt{<answer></answer>} tags, respectively. Compliance with this format is verified via a regular expression, producing the format reward $r^\text{F}_j$.
    \begin{equation}
    r^\text{F}_j =
    \begin{cases}
    1, & \text{if the format is correct;} \\
    0, & \text{if the format is incorrect.}
    \end{cases}
    \label{eq:format_reward}
    \end{equation}
    \textbf{Accuracy Reward.}
    The accuracy reward $r^\text{Acc}_j$ evaluates whether the extracted answer $o_j$ is semantically consistent with the ground truth $g$. Multiple-choice questions typically have a unique and precise answer that can be directly compared once the response follows the required format.
    \begin{equation}
    r^\text{Acc}_j =
    \begin{cases}
    1, & o_j = g; \\
    0, & o_j \neq g.
    \end{cases}
    \label{eq:accuracy_reward}
    \end{equation}
\textbf{Alignment Reward.} For each output pair $(o_j, \tilde{o}_j)$, the alignment reward is decomposed into reasoning and answer components: 
    $r^A_j = r^\text{align, r}_j + r^\text{align, a}_j,$
    where $r^\text{align, r}_j = \alpha_r \cdot {Sim}^\text{r}(o_j, \tilde{o}_j)$ and $r^\text{align,a}_j = \alpha_a \cdot {Sim}^\text{a}(o_j, \tilde{o}_j)$. 
    Here, $\alpha_r$ and $\alpha_a$ weight the respective contributions, with ${Sim}^\text{r}$ and ${Sim}^\text{a}$ to measure semantic consistency in the reasoning process and answer segment (see \cref{tab:prompt_answer,tab:prompt_reasoning}). The total reward combines alignment with three rewards:
    $R_j = r^\text{F}_j + r^\text{Acc}_j + r^A_j.$
    
With the proposed dual-branch alignment framework, the model is optimized via GRPO using a combined reward signal with robustness-aware consistency, encouraging stable reasoning and answer predictions across clean and perturbed video inputs, thereby improving robustness and generalization.

\section{Evaluating Video Reasoning under Various Realistic Disturbances}
\textbf{Motivation.} 
Existing video reasoning benchmarks, including MVBench~\citep{li2024mvbench}, Video-MME~\citep{fu2025video}, ALFRED~\citep{shridhar2020alfred}, Ego4D~\citep{grauman2022ego4d}, and UrbanVideo~\citep{zhao2025urbanvideo}, evaluate models primarily under clean visual conditions (\cref{tab:benchmark_comparison}). {In contrast, real-world deployment often exposes VLMs to adverse weather, dynamic occlusions, abrupt illumination changes, and camera instability.
As shown in \cref{tab : benchmark}, such perturbations can degrade both accuracy and reasoning quality by 12 to 35\%. Although ImageNet-C~\citep{xie2020self} introduced the evaluation of corruption robustness for image classification, no existing benchmark systematically measures how temporally coherent and spatially grounded visual perturbations affect reasoning over videos. This leaves a critical blind spot: we lack the tools to diagnose whether failures under visual corruption arise from perceptual errors, reasoning fragility, or both.}

\begin{wrapfigure}[18]{r}{0.45\linewidth} \centering \vspace{-0.15in} \includegraphics[width=0.96\linewidth]{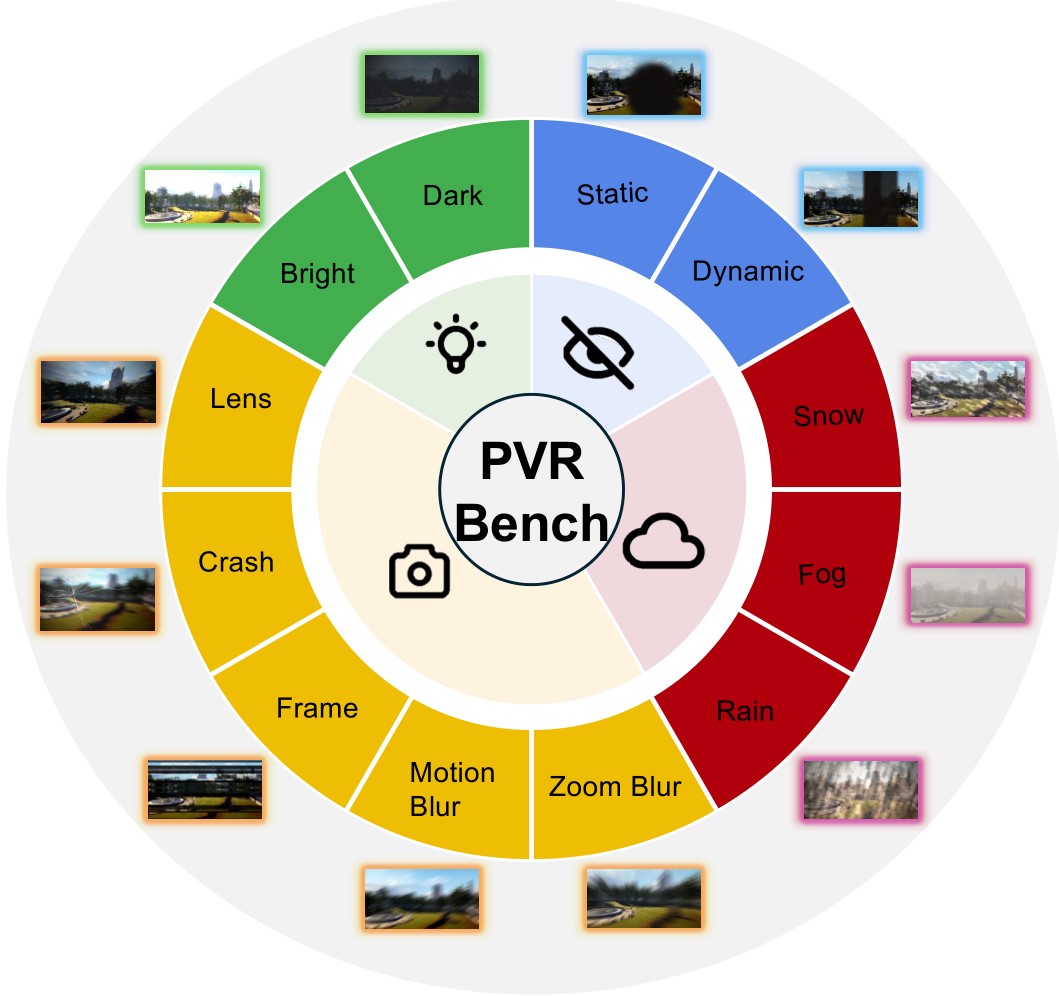} \vspace{-3pt} \caption{Overview of the perturbation types in \benchname{}.} \label{fig:discard_rate} \end{wrapfigure}
\noindent\textbf{Construction.} To close this gap, we introduce \textit{\benchnamefull{} (\benchname{})}, designed to evaluate the robustness of video reasoning models under structured, real-world visual variations beyond simple pixel-level corruption.
Our focus is on \emph{reasoning reliability}, defined as the ability to maintain coherent and logically consistent inference chains grounded in correct visual observations and valid causal steps despite degraded video input.
\benchname{} integrates four categories of realistic, video-specific disturbances: lighting (dusk, night, overexposure, shadow), camera motion (translation, zoom, rotation), occlusion (static, dynamic), and weather (fog, rain, snow). Each disturbance is applied with spatial awareness (e.g., depth-conditioned occlusion placement and scene-adapted weather rendering) and temporal coherence across frames. The benchmark comprises over 9K videos and 51K question-answer pairs spanning diverse indoor, outdoor, and embodied scenarios, with 27 task coverage from~\cite{zhao2025urbanvideo,yang2025thinking}, which exercise 
a broad spectrum of video reasoning capabilities.

{\noindent\textbf{Perturbation Injection.} At its core, we generate \emph{video-specific masks} (\cref{eq:combined_corruption}) that contain semantically coherent perturbations conditioned on each video's content, including depth layout, object locations, and motion patterns. These perturbations are contextually adapted to scene semantics; for instance, weather appears as windshield rain refraction in driving scenes, while occlusions are placed at plausible foreground locations.
For benchmark evaluation, we adopt a static protocol in which masks are pre-generated and fixed per video to ensure reproducible cross-model comparison, while \methodname{} training (\cref{sec : method_VRM}) uses a dynamic protocol that generates perturbations on the fly with stochastically sampled parameters at each iteration to prevent overfitting and promote perturbation invariant representation learning.}

\noindent\textbf{Evaluation Metrics.} To quantify reasoning reliability, \benchname{} introduces five complementary metrics (Fragility, Consistency, Belief, Recovery, and Attention; see \cref{tab:benchmark}) that assess the quality and stability of intermediate reasoning, as well as final-answer accuracy. To assess reasoning process quality, we leverage a powerful vision-language foundational model (e.g., GPT-4o) to score reasoning traces in coherence, perturbation awareness, and evidence grounding via a structured template (see~\cref{tab:prompt_reasoning}), following the LLM-as-judge paradigm~\citep{zheng2023judging, he2024videoscore}.

\section{Experiment}

\subsection{Implementation Details.} 
\textbf{Models.} We train our model on 4 NVIDIA A100 (80GB) GPUs. For optimization, we set the ordered group size to $G=8$ and the shuffled group size to $\tilde{G}=G/2$.  
Details are provided in~\cref{sec :hyperparameter}. 

\noindent\textbf{Datasets.} We use both clean and perturbed video data for training and evaluation. For training, we curate an outdoor-scene-relevant subset of Video-R1-260k ($\sim$10\% of its video split, filtered by scene category labels) and apply dynamic, randomly sampled perturbation masks to construct corruption-augmented video-query pairs. For evaluation, we assess generalization on the proposed \benchname{}, which contains over 51K question answer pairs across more than 9K videos spanning diverse scene categories beyond the training distribution. Static perturbation masks are systematically injected to measure model accuracy, reasoning quality, and robustness under both clean and corrupted conditions.
We further evaluate the generalization of VLMs on standard VisBench and UrbanVideo.

\begin{table*}[t!]
\centering
\setlength{\tabcolsep}{3.6pt}
\renewcommand{\arraystretch}{1.05}
\caption{\textbf{Evaluation on PVRBench.}
We report accuracy under four visual perturbations (\textbf{Lig}hting, \textbf{Occ}lusion, camera \textbf{Sha}ke, \textbf{Wea}ther) on the left, and reasoning quality metrics on the right, including \textbf{Fra}gility, \textbf{Con}sistency, \textbf{Bel}ief, \textbf{Rec}overy, and \textbf{Att}ention (0 - 5 scale; Higher is better, except for \textit{Fra} ($\downarrow$)).
\textit{\#Fr}: the number of frames, \textit{Avg.}: the average performance, and \textit{Orig.}: the average performance on clean (unperturbed) data. We exclude \textit{Fra.} when computing \textit{Avg.$^\dagger$} and \textit{Orig.$^\dagger$.}
}
\resizebox{\linewidth}{!}{%
\LARGE
\begin{tabular}{@{}lcc cccc cc | ccccc cc@{}}
\toprule
& & & \multicolumn{6}{c|}{\textbf{Answer Accuracy}} & \multicolumn{7}{c}{\textbf{Reasoning Quality}} \\
\cmidrule(lr){4-9} \cmidrule(lr){10-16}
\textbf{Model} & \textbf{Size} & \textbf{\#Fr}
& \textbf{Lig.} & \textbf{Occ.} & \textbf{Sha.} & \textbf{Wea.} & \textbf{Avg.} & \textbf{Orig.}
& \textbf{Fra.}$\downarrow$ & \textbf{Con.} & \textbf{Bel.} & \textbf{Rec.} & \textbf{Att.} & \textbf{Avg.$^\dagger$} & \textbf{Orig.$^\dagger$} \\
\midrule

\multicolumn{16}{@{}l}{\emph{Proprietary Models}} \\
GPT-4o & -- & 32
& .54 & .47 & .50 & .52 & .51 {\small\textcolor{medorange}{$\downarrow$14\%}} & .59
& 1.85 & 3.42 & 3.55 & 3.38 & 3.21 & 3.39 {\small\textcolor{medorange}{$\downarrow$11\%}} & 3.82 \\
Gemini-3-Pro & -- & 32
& .57 & .52 & .54 & .55 & .55 {\small\textcolor{medorange}{$\downarrow$11\%}} & .62
& 1.72 & 3.61 & 3.48 & 3.58 & 3.41 & 3.52 {\small\textcolor{medorange}{$\downarrow$10\%}} & 3.91 \\
Claude-3.5-Son. & -- & 32
& .45 & .41 & .44 & .45 & .44 {\small\textcolor{medorange}{$\downarrow$17\%}} & .53
& 2.08 & 3.18 & 3.22 & 2.95 & 3.15 & 3.13 {\small\textcolor{medorange}{$\downarrow$14\%}} & 3.65 \\

\midrule
\multicolumn{16}{@{}l}{\emph{Video Reasoning Models}} \\[1pt]
Video-R1 & 7B & 32
& .43 & .37 & .42 & .41 & .41 {\small\textcolor{badred}{$\downarrow$20\%}} & .51
& 2.48 & 2.75 & 2.85 & 2.68 & 2.65 & 2.73 {\small\textcolor{badred}{$\downarrow$20\%}} & 3.42 \\
Video-R1 & 72B & 32
& .51 & .45 & .49 & .49 & .49 {\small\textcolor{medorange}{$\downarrow$16\%}} & .58
& 2.11 & 3.25 & 3.18 & 3.21 & 2.98 & 3.16 {\small\textcolor{medorange}{$\downarrow$14\%}} & 3.68 \\
VideoChat-R & 7B & 16
& .36 & .31 & .36 & .35 & .35 {\small\textcolor{badred}{$\downarrow$22\%}} & .45
& 2.65 & 2.62 & 2.55 & 2.71 & 2.28 & 2.54 {\small\textcolor{badred}{$\downarrow$22\%}} & 3.25 \\
LLaVA-Video-R & 7B & 32
& .40 & .34 & .38 & .38 & .38 {\small\textcolor{badred}{$\downarrow$21\%}} & .48
& 2.58 & 2.68 & 2.61 & 2.78 & 2.42 & 2.62 {\small\textcolor{badred}{$\downarrow$21\%}} & 3.32 \\

\addlinespace[1.5pt]
\hdashline
\addlinespace[1.5pt]

\rowcolor{basegray}
Embodied-R & 7B & 32
& .45 & .38 & .42 & .43 & .42 {\small\textcolor{badred}{$\downarrow$22\%}} & .54
& 2.45 & 2.82 & 2.91 & 2.72 & 2.68 & 2.78 {\small\textcolor{medorange}{$\downarrow$19\%}} & 3.45 \\
\rowcolor{oursgray}
\textbf{$+$ \methodname{} (Ours)} & 7B & 32
& \textbf{.52} & \textbf{.46} & \textbf{.49} & \textbf{.51} & \textbf{.50} {\small\textcolor{goodgreen}{$\downarrow$9\%}} & \textbf{.55}
& \textbf{2.25} & \textbf{3.15} & \textbf{3.18} & \textbf{3.22} & \textbf{2.91} & \textbf{3.12} {\small\textcolor{medorange}{$\downarrow$13\%}} & \textbf{3.58} \\

\midrule
\multicolumn{16}{@{}l}{\emph{Open-Source Video LLMs}} \\
LLaVA-Video & 7B & 32
& .32 & .29 & .30 & .32 & .31 {\small\textcolor{badred}{$\downarrow$30\%}} & .44
& 2.78 & 2.45 & 2.35 & 2.52 & 2.25 & 2.39 {\small\textcolor{badred}{$\downarrow$23\%}} & 3.12 \\
VideoLLaMA2 & 7B & 16
& .28 & .25 & .27 & .29 & .27 {\small\textcolor{badred}{$\downarrow$25\%}} & .36
& 2.92 & 2.18 & 2.25 & 2.12 & 2.15 & 2.18 {\small\textcolor{badred}{$\downarrow$28\%}} & 3.01 \\
VideoChat2 & 7B & 16
& .26 & .23 & .25 & .27 & .25 {\small\textcolor{badred}{$\downarrow$26\%}} & .34
& 3.01 & 2.08 & 2.15 & 2.05 & 2.02 & 2.08 {\small\textcolor{badred}{$\downarrow$28\%}} & 2.88 \\
MiniCPM-V 2.6 & 8B & 64
& .34 & .28 & .31 & .32 & .31 {\small\textcolor{badred}{$\downarrow$28\%}} & .43
& 2.75 & 2.48 & 2.42 & 2.55 & 2.21 & 2.42 {\small\textcolor{badred}{$\downarrow$24\%}} & 3.18 \\
\addlinespace[1.5pt]
\hdashline
\addlinespace[1.5pt]

\rowcolor{basegray}
InternVL2.5 & 8B & 32
& .31 & .26 & .32 & .33 & .31 {\small\textcolor{badred}{$\downarrow$33\%}} & .46
& 2.85 & 2.38 & 2.28 & 2.42 & 2.18 & 2.32 {\small\textcolor{badred}{$\downarrow$26\%}} & 3.15 \\
\rowcolor{oursgray}
\textbf{$+$ \methodname{} (Ours)} & 8B & 32
& \textbf{.43} & \textbf{.36} & \textbf{.41} & \textbf{.40} & \textbf{.40} {\small\textcolor{goodgreen}{$\downarrow$15\%}} & \textbf{.47}
& \textbf{2.45} & \textbf{2.82} & \textbf{2.75} & \textbf{2.78} & \textbf{2.58} & \textbf{2.73} {\small\textcolor{medorange}{$\downarrow$17\%}} & \textbf{3.28} \\
\addlinespace[1.5pt]
\hdashline
\addlinespace[1.5pt]

\rowcolor{basegray}
Qwen2.5-VL & 7B & 32
& .35 & .28 & .34 & .34 & .33 {\small\textcolor{badred}{$\downarrow$35\%}} & .51
& 2.71 & 2.58 & 2.62 & 2.68 & 2.31 & 2.55 {\small\textcolor{badred}{$\downarrow$25\%}} & 3.41 \\
\rowcolor{oursgray}
\textbf{$+$ \methodname{} (Ours)} & 7B  & 32
& \textbf{.48} & \textbf{.43} & \textbf{.47} & \textbf{.49} & \textbf{.47} {\small\textcolor{goodgreen}{$\downarrow$11\%}} & \textbf{.53}
& \textbf{2.31} & \textbf{3.05} & \textbf{3.08} & \textbf{2.98} & \textbf{2.85} & \textbf{2.99} {\small\textcolor{medorange}{$\downarrow$15\%}} & \textbf{3.52} \\

\addlinespace[1.5pt]
\hdashline
\addlinespace[1.5pt]

\rowcolor{basegray}
Qwen2.5-VL & 72B & 32
& .48 & .41 & .44 & .47 & .45 {\small\textcolor{badred}{$\downarrow$21\%}} & .57
& 2.18 & 3.15 & 3.08 & 2.92 & 3.12 & 3.07 {\small\textcolor{medorange}{$\downarrow$16\%}} & 3.64 \\
\rowcolor{oursgray}
\textbf{$+$ \methodname{} (Ours)} & 72B  & 32
& \textbf{.57} & \textbf{.53} & \textbf{.56} & \textbf{.56} & \textbf{.56} {\small\textcolor{goodgreen}{$\downarrow$5\%}} & \textbf{.59}
& \textbf{1.95} & \textbf{3.45} & \textbf{3.35} & \textbf{3.42} & \textbf{3.18} & \textbf{3.35} {\small\textcolor{medorange}{$\downarrow$10\%}} & \textbf{3.72} \\
\addlinespace[1.5pt]
\hdashline
\addlinespace[1.5pt]
\rowcolor{basegray}
Qwen3-VL & 13B & 32
& .43 & .35 & .39 & .42 & .40 {\small\textcolor{badred}{$\downarrow$25\%}} & .53
& 2.41 & 2.85 & 2.92 & 2.78 & 2.72 & 2.82 {\small\textcolor{medorange}{$\downarrow$19\%}} & 3.48 \\
\rowcolor{oursgray}
\textbf{$+$ \methodname{} (Ours)} & 13B & 32
& \textbf{.53} & \textbf{.49} & \textbf{.52} & \textbf{.54} & \textbf{.52} {\small\textcolor{goodgreen}{$\downarrow$7\%}} & \textbf{.56}
& \textbf{2.12} & \textbf{3.28} & \textbf{3.32} & \textbf{3.18} & \textbf{3.05} & \textbf{3.21} {\small\textcolor{medorange}{$\downarrow$11\%}} & \textbf{3.62 }\\
\bottomrule
\end{tabular}}
\label{tab:benchmark}
\end{table*}
\subsection{Main Results}\label{main_results}
\noindent\textbf{ROVA Performance on PVRBench.} We extensively evaluate our approach on \benchname{} and the clean benchmark (Orig.: UrbanVideo and VSI-Bench) across diverse backbones, including video reasoning models and open-source video LLMs ranging from 7B to 72B. As shown in \cref{tab:benchmark}, among dedicated video reasoning models, \methodname{} consistently outperforms prior methods. In the 7B setting, it improves the best-performing model, Embodied-R, from 0.42 to 0.50 average accuracy under perturbations (more than $17\%$ relative gain), and even matches or surpasses the much larger Video-R1 72B. Importantly, it also achieves consistent improvements in reasoning quality, indicating stable and reliable reasoning under visual corruption. Most open-source video LLMs suffer substantial degradation under perturbations, with 21–35\% drops in accuracy and 16–28\% declines in reasoning quality relative to clean inputs. 

Notably, \methodname{} not only withstands the proposed perturbations but also enhances the model’s generalization performance, observing consistent gains on PVRBench and across unseen benchmarks (VisBench and UrbanVideo, \cref{fig:cross_benchmark}) in both answer accuracy and reasoning quality under clean and perturbed videos. 
These findings suggest that \methodname{} is able to learn perturbation-robust representations with strong transferability, enabling improved robustness and semantic understanding beyond the training distribution without domain-specific fine-tuning, while maintaining superior performance on clean data.

Beyond the accuracy and reasoning quality improvements, \cref{tab:training_efficiency} shows that \methodname{} is highly resource-efficient. Although the dual-branch design doubles the forward pass, the proposed curriculum (SRE + DRE + ME) more than offsets this overhead, reducing GPU-hours by 5.9\% compared to naive Dual-Branch (134.4 vs.\ 142.8) while improving accuracy from 0.37 to 0.47.
Moreover, \methodname{} surpasses Video-R1 by 23.7\% (0.47 vs.\ 0.38) while using 60.4\% fewer GPU-hours (134.4 vs.\ 339.2), half the GPUs, and less than 8\% of the training data (32.5K vs.\ 425K). These results suggest that the dual-branch alignment objective learns transferable, perturbation-robust representations that generalize beyond the training distribution without domain-specific fine-tuning, while maintaining strong performance on clean data.

\begin{table*}[t!]
\centering
\definecolor{cmark}{HTML}{2E8B57}
\definecolor{xmark}{HTML}{CD5C5C}
\definecolor{oursrow}{HTML}{EAF2FB}
\definecolor{hdrblue}{HTML}{2C3E6B}
\caption{\textbf{Training efficiency comparison} (Qwen2.5-VL-7B, Orig.\ Acc.\ = 0.43; GPU-h = \#GPUs\,$\times$\,wall-clock hours). SRE = Self-Reflective Evaluation, DRE = Difficulty Re-Evaluation, ME = Memory Eviction. Robust.\ = dual-branch alignment with structured corruption (\cref{sec : corruption,sec : dual-branch}). Curric.\ = SRE\,+\,DRE\,+\,ME (\cref{sec : Data Curation}). 
}
\label{tab:training_efficiency}
\small
\setlength{\tabcolsep}{3.5pt}
\renewcommand{\arraystretch}{1.1}
\resizebox{.9\columnwidth}{!}{%
\begin{tabular}{@{}l ccc ccc c cc@{}}
\toprule
& \multicolumn{3}{c}{\textcolor{hdrblue}{\textbf{Training Data}}}
& \multicolumn{3}{c}{\textcolor{hdrblue}{\textbf{Architecture}}}
& \textcolor{hdrblue}{\textbf{Config.}}
& \multicolumn{2}{c}{\textcolor{hdrblue}{\textbf{Performance}}} \\
\cmidrule(lr){2-4} \cmidrule(lr){5-7} \cmidrule(lr){8-8} \cmidrule(lr){9-10}
& \textbf{SFT} & \textbf{RL} & \textbf{Total}
& \textbf{Branch} & \textbf{Robust.} & \textbf{Curric.}
& \textbf{GPUs}
& \textbf{GPU-h} & \textbf{Avg.\ Acc.} \\
\midrule
Std.\ GRPO
& {---} & {---} & {---}
& Single & \textcolor{xmark}{\ding{55}} & \textcolor{xmark}{\ding{55}}
& 4$\times$A100
& 71.6  & .45 \\
Na\"{i}ve Dual
& {---} & {---} & {---}
& Dual   & \textcolor{cmark}{\ding{51}} & \textcolor{xmark}{\ding{55}}
& 4$\times$A100
& 142.8 & .48 \\
Video-R1
& 165K & 260K & 425K
& Single & \textcolor{xmark}{\ding{55}} & \textcolor{xmark}{\ding{55}}
& 8$\times$A100
& 339.2 & .49 \\
\rowcolor{oursrow}
\textbf{\methodname{}}
& \textbf{6.5K} & \textbf{26K} & \textbf{32.5K}
& \textbf{Dual} & \textcolor{cmark}{\ding{51}} & \textcolor{cmark}{\ding{51}}
& \textbf{4$\times$A100}
& \textbf{134.4} & \textbf{.53} \\
\bottomrule
\end{tabular}%
}
\end{table*}
\begin{figure*}[t!]
    \centering
    \begin{subfigure}{0.31\textwidth}
        \centering
        \includegraphics[width=0.98\linewidth]{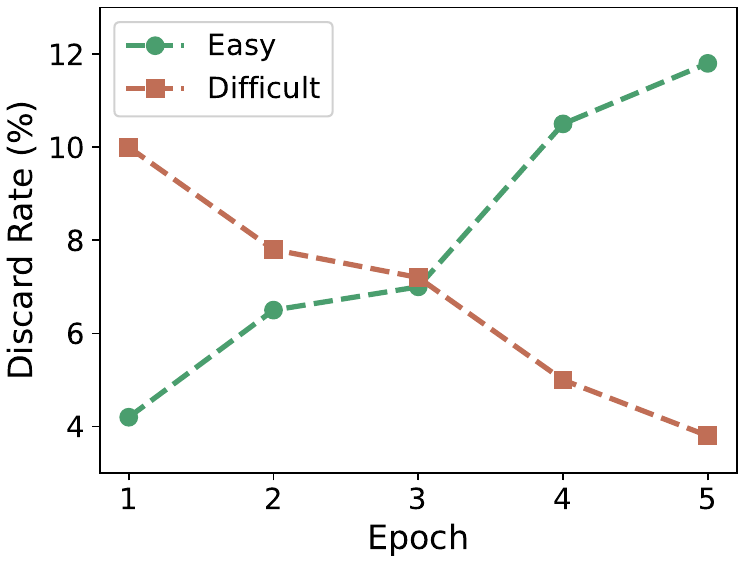}
        \caption{Sample discard rate evolution during self-reflective curriculum training.}
        \label{fig:discard}
    \end{subfigure}
    \hfill
    \begin{subfigure}{0.32\textwidth}
        \centering
        \includegraphics[width=0.99\linewidth]{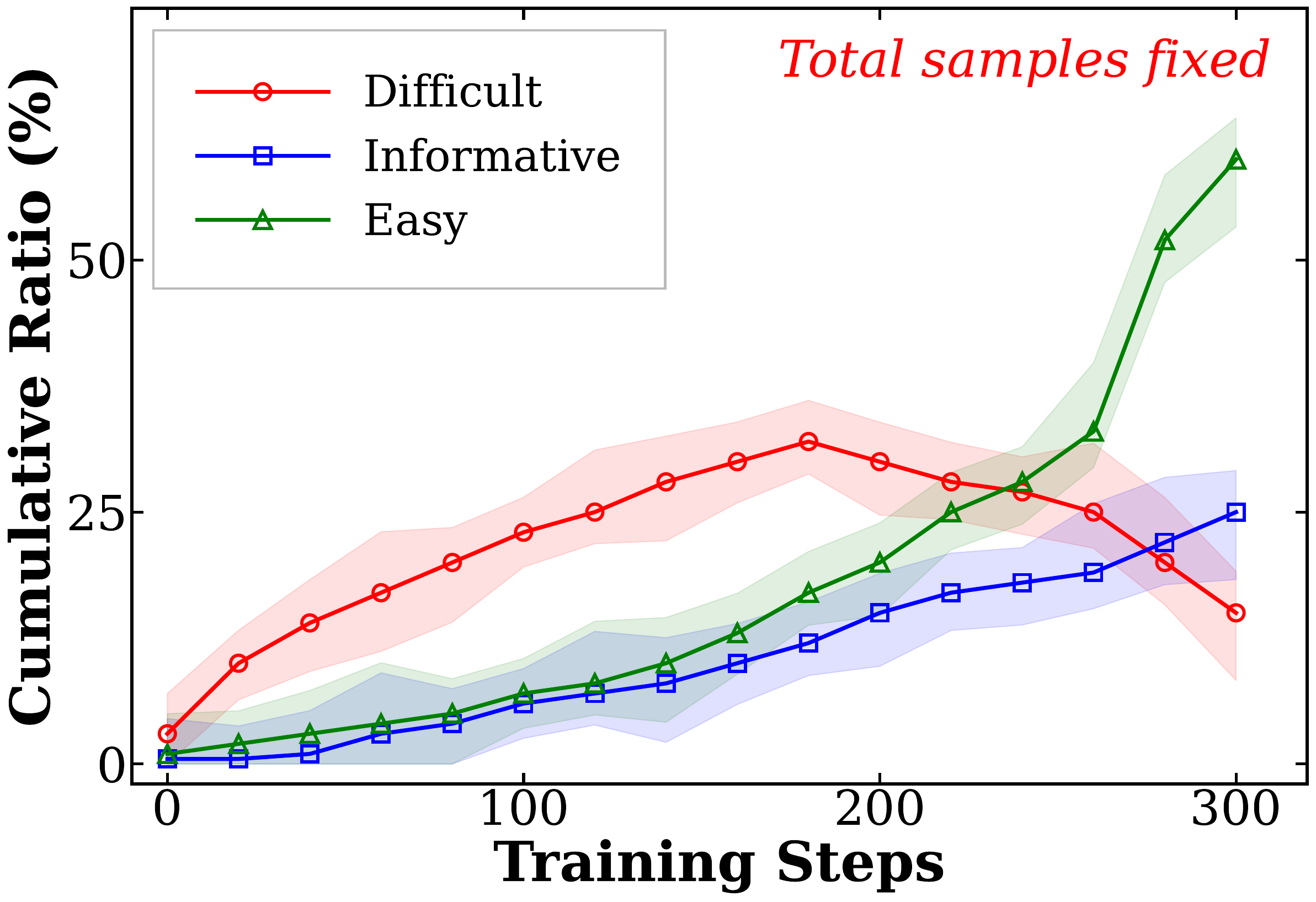}
        \vspace{-0.12in}
        \caption{Evolution of estimated easy, informative, and difficult sample proportions over training.}
        \label{fig:dynamics}
    \end{subfigure}
    \hfill
    \begin{subfigure}{0.32\textwidth}
        \centering
        \includegraphics[width=0.92\linewidth]{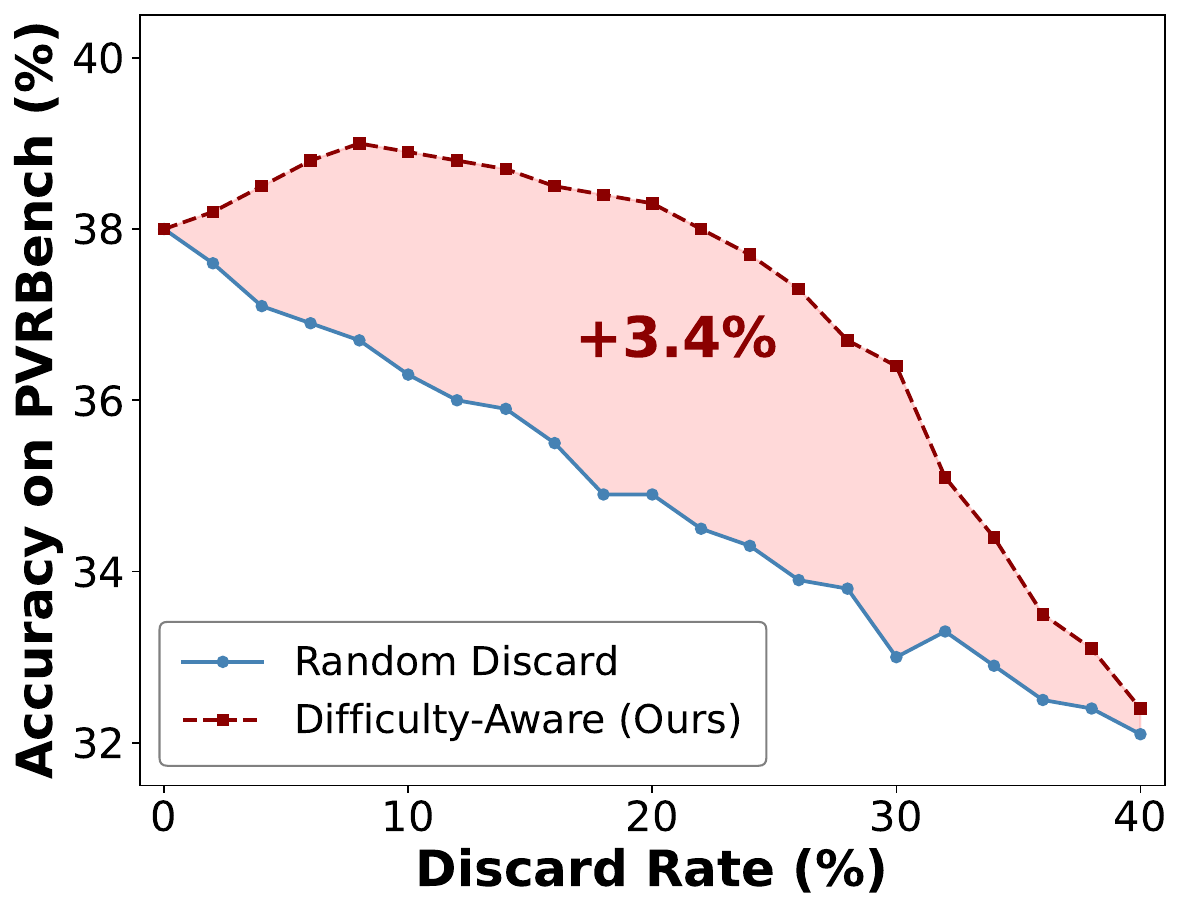}
        \caption{Difficulty-aware confidence-threshold discard vs. random across retention.}
        \label{fig:pev_eval}
    \end{subfigure}
    \caption{\textbf{Analysis of Self-Reflective Evaluation and Difficulty-Aware Training} for \methodname{} during the first Epoch of Qwen-VL-2.5-7B Training.}
    \label{fig:memory}
\end{figure*}
\noindent\textbf{Analysis of self-reflective evaluation and sample-selective training.}
We also analyze the behavior of our self-reflection evaluation mechanism during training. As shown in~\cref{fig:discard}, the discard rate for easy samples increases steadily over epochs while that for difficult samples declines, indicating that the model keeps evolving and smarter and prefers to decline more samples as they are already good at those, \cref{fig:discard}, shows a moderate fraction of samples is discarded overall, and the model selectively filters low-utility or overly noisy instances rather than aggressively pruning data. \cref{fig:dynamics} further illustrates the evolution of the estimated sample difficulty in training steps. While the total number of discarded samples is fixed, the composition gradually shifts toward easy samples, reflecting the improving competence of the model: samples initially deemed difficult are increasingly reclassified as easy as training progresses. This dynamic redistribution suggests that the self-reflective evaluator captures meaningful learning signals and adapts the curriculum in a data-driven manner. \cref{fig:pev_eval} demonstrates the effectiveness of difficulty-aware data selection for training. Compared to random discarding, our strategy consistently achieves higher accuracy across discard rates, with an improvement of up to 3.4\% on \benchname{}. This indicates that selective removal of samples based on estimated difficulty preserves informative training signals while avoiding detrimental noise.

\subsection{Ablation Study and Analysis}\label{sec : ablation_study}
\textbf{Ablation of Core Components.} We ablate each component of \methodname{} to assess its contribution (\cref{fig : component}). The reasoning reward yields the largest gain, followed by easy sample discarding, underscoring the central role of semantic reasoning and targeted curation. The memory module and temporal shuffle provide smaller but consistent gains, serving as complementary regularizers that stabilize training and enhance robustness.

\textbf{Ablation of Mask Styles.} We explore the generalizability of the proposed structured masking strategy compared to random masking baselines. As shown in \cref{fig : ablation_mask_strategies}, models trained on only two corruption mask styles achieve strong in-domain performance on the perturbation types seen during training, and more importantly, transfer effectively to held-out perturbation types (highlighted in red): out-of-domain performance remains close to in-domain results while both consistently surpass fixed-shape and pixel-level random masking by a significant margin (6 - 9\% absolute). This indicates that structured, perturbation-aware masks capture transferable corruption patterns rather than overfitting to specific disturbance types, confirming that a small subset of mask styles suffices to achieve broad robustness under diverse real-world disturbances.

\begin{figure*}[t!]
    \centering
    \begin{subfigure}[t]{0.48\textwidth}
        \centering        \includegraphics[width=\linewidth]{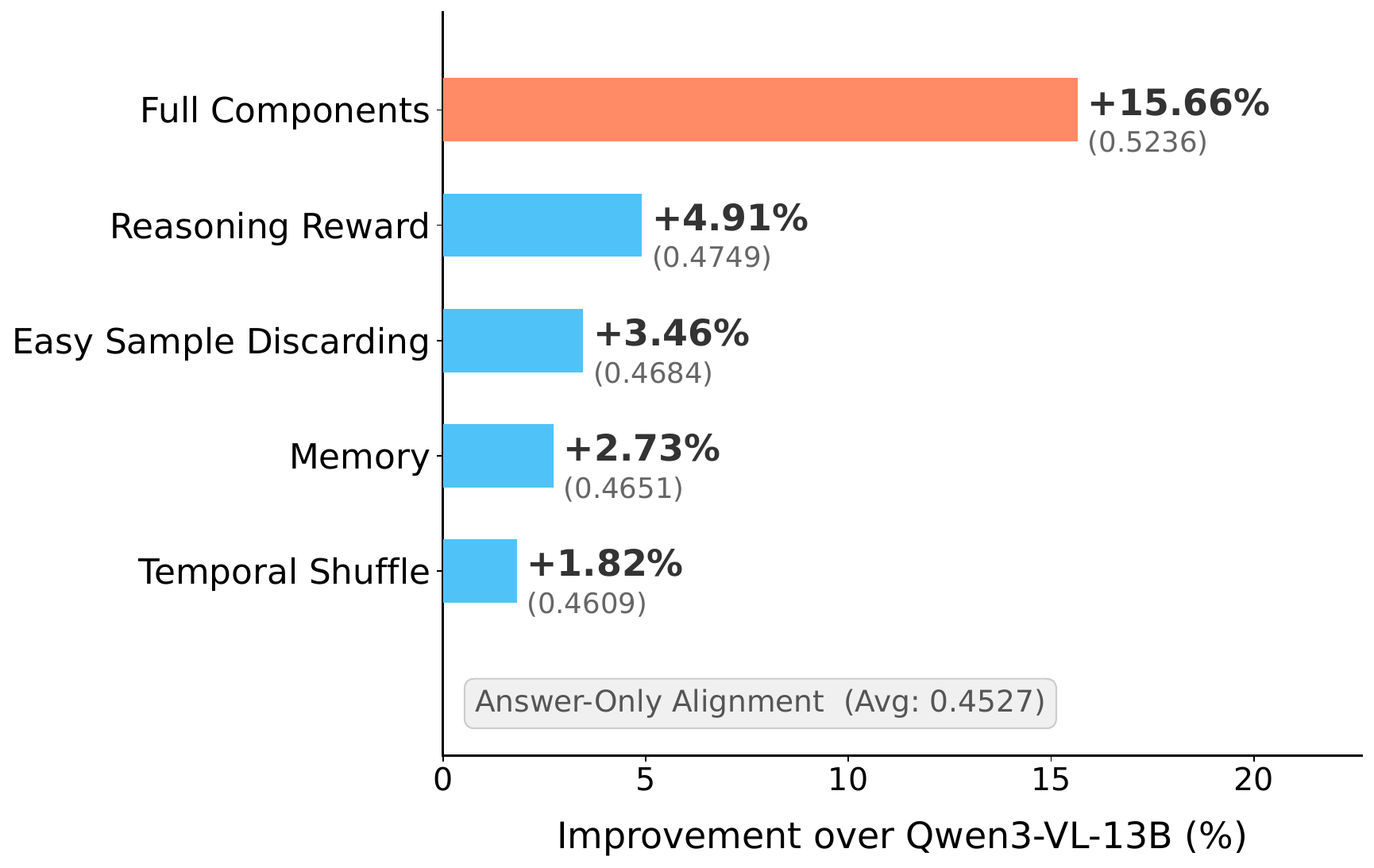}
        \vspace{-0.1in}
        \caption{
        Accuracy improvements from each component in \methodname{} over the base model (final-answer alignment only).
        }
        \label{fig : component}
    \end{subfigure}
    \hfill
    \begin{subfigure}[t]{0.50\textwidth}
        \centering
        \includegraphics[width=\linewidth]{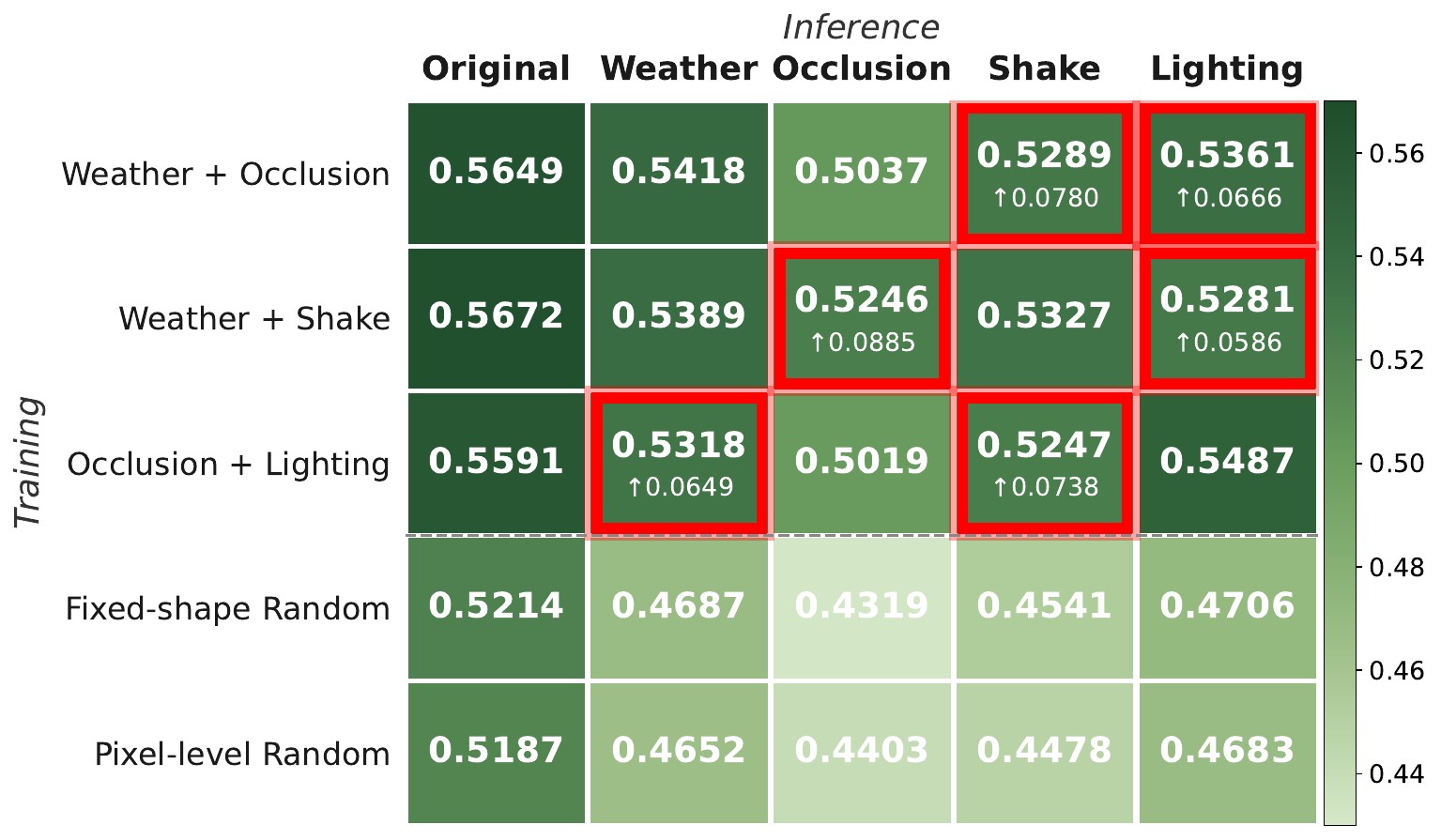}
        \vspace{-0.1in}
        \caption{
        Models trained on two mask styles are evaluated on in-domain and held-out OOD perturbations (highlighted in red).}
        \label{fig : ablation_mask_strategies}
    \end{subfigure}
    \caption{\textbf{Ablation studies of \methodname{}}. (a) Impact of individual components on answer accuracy. (b) Comparison of corruption mask strategies across perturbation types. Experiments are conducted using the Qwen3-VL-13B model trained for 3 epochs.}
    \label{fig:ablation_main}
\end{figure*}

\definecolor{oursrow}{HTML}{EAF2FB}

\begin{wraptable}[10]{r}{0.36\columnwidth}
\vspace{-0.17in}
\centering
\caption{Ablation study of the reward model on \benchname{} using commercial and open source VLMs.}
\label{tab:reward_ablation}

\scriptsize
\setlength{\tabcolsep}{2pt}
\renewcommand{\arraystretch}{1.12}

\resizebox{\linewidth}{!}{%
\begin{tabular}{@{}l cc c@{}}
\toprule
\textbf{Reward Judge} & \textbf{Acc.} & \textbf{Avg.} & \textbf{Free} \\
\midrule
\rowcolor{oursrow}
GPT-4o       & \textbf{0.470} & \textbf{2.99} & \ding{55} \\
Qwen3-13B    & 0.467          & 2.97          & \ding{51} \\
Qwen2.5-7B   & 0.463          & 2.95          & \ding{51} \\
\bottomrule
\end{tabular}%
}
\end{wraptable}
\textbf{Ablation of reward models.} 
Notably, our LLM judge (GPT-4o by default) outperforms rule- or embedding-based matching in evaluating semantic consistency across reasoning traces and final answers. Replacing it with open-source models (e.g., Qwen3-13B) yields comparable results, suggesting that the approach generalizes beyond proprietary APIs (\cref{tab:reward_ablation}). In contrast, more granular reward designs, such as conditional alignment or step-level consistency, introduce additional variance that destabilizes GRPO and degrades performance (\cref{tab:reward_alternatives}), further supporting LLM-based evaluation as the most effective approach.


\begin{figure*}[t!]
    \centering
    \includegraphics[width=\textwidth]{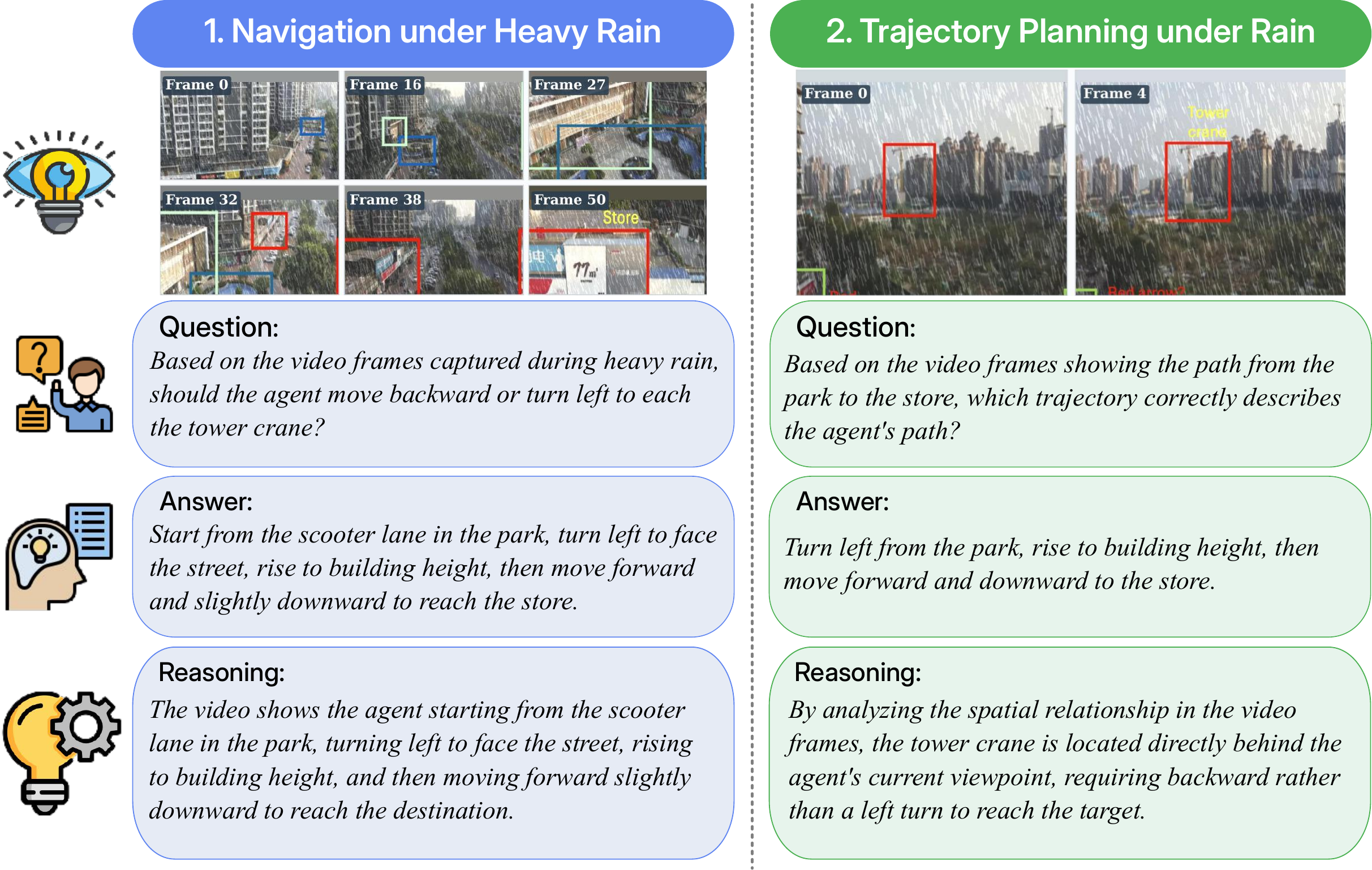}
    \vspace{-0.1in}
    \caption{Qualitative examples of ROVA-trained Qwen2.5-VL-7B performing obstacle avoidance and target identification under night-time low-light conditions. See more examples in~\cref{fig:case_study_2,fig:case_study_3,fig:case_study_4,fig:case_study_5}.}
    \label{fig:case_study_1}   
    \vspace{-0.15in}
\end{figure*}

\subsection{Qualitative Analysis}
We further validate the robustness of \methodname{} through qualitative examples on representative tasks in~\cref{fig:case_study_1}. Even in challenging scenarios where adverse weather or visual disturbances significantly degrade visibility, \methodname{} remains effective, correctly reasoning about the scene and task requirements. 
For instance, when heavy rain and glare obscure key visual cues, \methodname{} can still infer spatial relationships and scene structure, and when large objects block the field of view, it correctly reasons about the underlying layout rather than relying on partial appearances.
This shows that \methodname{} reliably interprets and reasons in visually impaired conditions, demonstrating robustness beyond controlled settings and confirming its effectiveness in difficult, realistic environments. 

\section{Conclusion}
In this work, we present \methodname{}, a robust training framework for embodied video reasoning that leverages structured spatio-temporal corruptions, dual-branch alignment, and self-reflective data curation to learn perturbation-robust representations. To evaluate robustness under realistic disturbances, we introduce \benchname{}. We show that \methodname{} consistently improves robustness under diverse real-world perturbations in video inputs while also improving performance on clean video–question pairs. These contributions provide both a principled benchmark and a practical training recipe, enabling future studies on broader perturbation families and more complex long-horizon embodied tasks.


\bibliography{main}
\bibliographystyle{plainnat}

\newpage
\appendix
\section*{Appendix} 
{
\hypersetup{linkcolor=navy}
\addcontentsline{toc}{section}{Appendix Table of Contents}
\startcontents[appendix]
\printcontents[appendix]{l}{1}{\setcounter{tocdepth}{2}}
}
\newpage

\section{Limitation}
While the proposed composite reward design proves effective in practice, several design choices warrant further investigation. First, both the format reward and accuracy reward are binary (0 or 1), offering no partial credit for nearly correct answers or partially well-structured outputs; a softer, continuous reward signal could provide richer gradients for GRPO optimization. Second, the proposed reward components are combined with equal weights, but the optimal balance among format compliance, answer correctness, and cross-branch alignment may vary across perturbation types and reasoning complexity. For simplicity, our framework does not adaptively adjust these weights during training. Third, the alignment reward relies on an external LLM judge to assess semantic consistency between clean and perturbed outputs, which introduces a dependency on the judge's own capability and potential biases; although we show that open-source alternatives (Qwen3-13B) yield comparable results, the reward signal remains bounded by the judge model's understanding of domain-specific reasoning. Fourth, our reward operates only at the holistic output level, evaluating the final answer and the overall reasoning trace, without providing step-level feedback on intermediate reasoning quality. As our ablation study confirms, more fine-grained reward designs, such as step-level consistency checks, tend to introduce variance that destabilizes GRPO training. Addressing this challenge between reward granularity and optimization stability, for instance, through hierarchical or curriculum-based reward shaping, remains an important direction for future work.
\section{Full Details of Dataset Construction}
\label{appendix:dataset}

\begin{sloppypar}

This section provides comprehensive documentation of the \benchname{} benchmark construction, including data sources, curation methodology, perturbation generation algorithms, and quality assurance protocols. Our benchmark integrates and augments two established embodied video reasoning datasets, UrbanVideo-Bench~\citep{zhao2025urbanvideo} and VSI-Bench~\citep{yang2025thinking}, to create the first large-scale robustness evaluation benchmark for video reasoning under realistic visual perturbations.

\end{sloppypar}

\subsection{Source Dataset Integration}
\label{appendix:source_integration}

\benchname{} is constructed by systematically combining the complete video corpora and question-answer annotations from two complementary benchmarks, resulting in a unified evaluation framework spanning both outdoor urban navigation and indoor spatial reasoning scenarios~(\cref{fig:qa_distribution}).

\subsubsection{UrbanVideo-Bench}
UrbanVideo-Bench~\citep{zhao2025urbanvideo} is an embodied video reasoning benchmark specifically designed for evaluating Video-LLMs on aerial agent motion in urban open-ended three-dimensional spaces. The benchmark addresses a critical gap in existing evaluations by focusing on the unique challenges of drone-based navigation in complex urban environments.

\paragraph{Data Collection Sources.} The video corpus comprises 1,547 video clips collected from three distinct sources:

\begin{enumerate}[leftmargin=*, itemsep=2pt]
    \item \textbf{Real-World Drone Footage} (Guangdong Province, China): Videos captured using two DJI Mini 4K drones operated by experienced pilots with over 1,000 hours of flight time. Data collection was conducted in Shenzhen and Zhaoqing, covering diverse urban landscapes including commercial districts, residential areas, parks, and waterfront regions. Resolution: $1280 \times 720$ pixels.
    
    \item \textbf{EmbodiedCity Simulator}: A high-fidelity simulation environment built on Unreal Engine using real Beijing city data. The simulator provides realistic 3D urban modeling with over 100 categories of micro urban elements (buildings, vehicles, pedestrians, signage, etc.). Resolution: $960 \times 720$ pixels.
    
    \item \textbf{AerialVLN Simulator}: A virtual urban environment specifically designed for aerial vision-language navigation research, built on Unreal Engine with AirSim integration for realistic drone physics. Resolution: $520 \times 520$ pixels.
\end{enumerate}

\paragraph{Video Characteristics.} The collected videos span a wide range of characteristics. Their durations vary from 10 seconds to 10 minutes, with a mean length of 87.3s and a median of 52.1s, and frame rates range from 24 to 30 fps depending on the source. All videos are captured using a single forward-facing camera mounted on a gimbal that supports a downward tilt between $0^\circ$ and $90^\circ$. In terms of motion, the videos feature purposeful navigation trajectories, including ascent and descent, horizontal translation, rotation, as well as compound movements that combine multiple motion types.

\begin{figure}[t!]
\centering
\begin{subfigure}[t]{0.48\textwidth}
    \centering
    \includegraphics[width=\textwidth]{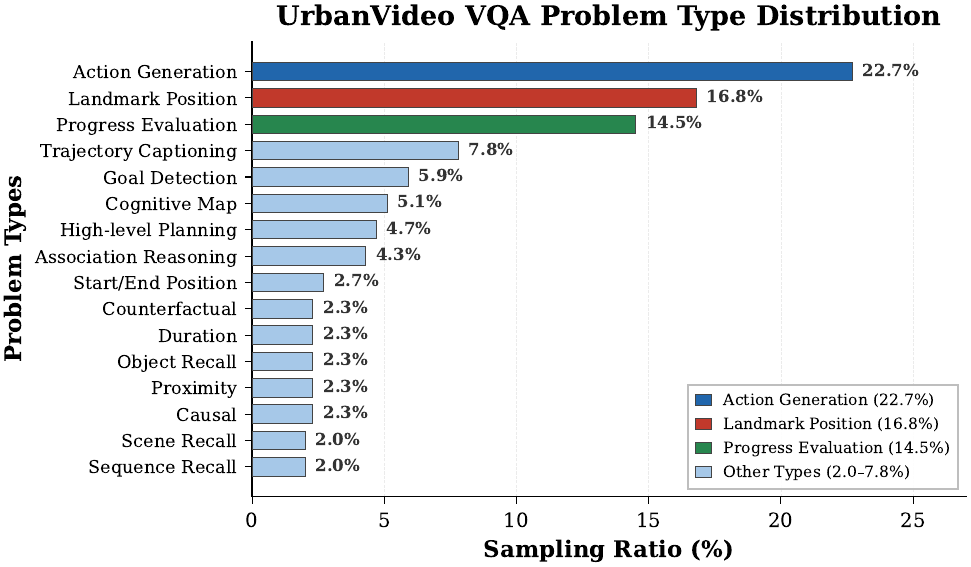}
    \caption{UrbanVideo-Bench QA type distribution. Action Generation (22.7\%), Landmark Position (16.8\%), and Progress Evaluation (14.5\%) dominate, reflecting the navigation-centric design.}
    \label{fig:urbanvideo_dist}
\end{subfigure}%
\hfill
\begin{subfigure}[t]{0.48\textwidth}
    \centering
    \includegraphics[width=\textwidth]{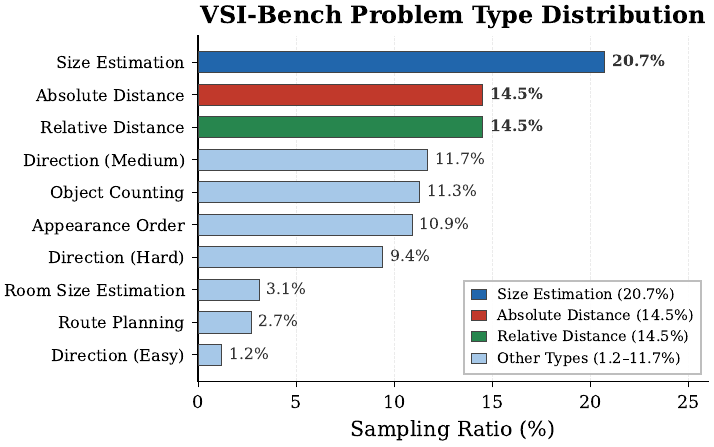}
    \caption{VSI-Bench QA type distribution. Size Estimation (20.7\%) and distance tasks (29.0\% combined) are most prevalent, reflecting the spatial measurement focus.}
    \label{fig:vsibench_dist}
\end{subfigure}
\caption{Question-answer type distributions for \benchname{} source datasets. The complementary distributions - UrbanVideo emphasizing navigation/action and VSI-Bench emphasizing spatial perception - together provide comprehensive coverage of embodied video reasoning capabilities.}
\label{fig:qa_distribution}
\end{figure}

\paragraph{Task Taxonomy.} UrbanVideo-Bench defines 16 task types organized into four cognitive ability categories, as shown in \cref{tab:urbanvideo_tasks}.

\begin{table}[t!]
\centering
\caption{Complete task taxonomy for UrbanVideo-Bench with 16 tasks across 4 cognitive ability categories.}
\label{tab:urbanvideo_tasks}
\vspace{.1in}
\small
\renewcommand{\arraystretch}{1.15}
\begin{tabularx}{\textwidth}{@{}l l X@{}}
\toprule
\textbf{Category} & \textbf{Task} & \textbf{Description} \\
\midrule
\multirow{5}{*}{Recall} 
    & Trajectory Captioning & Summarize agent movement using visual landmarks \\
    & Sequence Recall & Identify next action after specific movement \\
    & Object Recall & Locate objects relative to landmarks \\
    & Scene Recall & Describe observations during specific actions \\
    & Start/End Position & Identify journey origin and destination \\
\midrule
\multirow{5}{*}{Perception}
    & Proximity & Track distance changes to landmarks \\
    & Duration & Compare temporal duration of movements \\
    & Landmark Position & Determine egocentric position relative to goals \\
    & Goal Detection & Identify if/where destination is visible \\
    & Cognitive Map & Summarize spatial environment layout \\
\midrule
\multirow{3}{*}{Reasoning}
    & Causal & Explain reasons for specific movements \\
    & Counterfactual & Evaluate alternative action consequences \\
    & Association & Identify relevant objects when the goal is not visible \\
\midrule
\multirow{3}{*}{Navigation}
    & Progress Evaluation & Assess current step in navigation route \\
    & High-level Planning & Determine next waypoint toward goal \\
    & Action Generation & Output specific control actions \\
\bottomrule
\end{tabularx}
\end{table}

\subsubsection{VSI-Bench}
VSI-Bench~(Visual Spatial Intelligence Benchmark)~\citep{yang2025thinking} evaluates spatial reasoning capabilities from egocentric video perspectives in indoor environments. The benchmark focuses on fundamental spatial cognition tasks that require understanding of 3D space from sequential visual observations.

\paragraph{Data Sources.} VSI-Bench aggregates videos from three public indoor scene datasets: ARKitScenes, which provides real-world indoor scans captured using Apple ARKit; ScanNet, a widely used dataset of RGB-D indoor scene reconstructions; and 3RScan, a large-scale real-world indoor dataset enriched with instance-level annotations.
\paragraph{Scene Categories.} The 288 videos span six indoor environment types, as detailed in \cref{tab:vsi_scenes}.
\begin{table}[t!]
\centering
\caption{VSI-Bench scene category distribution across 288 videos.}
\label{tab:vsi_scenes}
\vspace{.1in}
\small
\begin{tabular}{@{}lrp{7cm}@{}}
\toprule
\textbf{Scene Type} & \textbf{Proportion} & \textbf{Characteristics} \\
\midrule
Living Rooms & 22.1\% & Social spaces with seating, entertainment systems \\
Bedrooms & 19.3\% & Sleeping areas with beds, wardrobes, personal items \\
Kitchens & 18.4\% & Cooking areas with appliances, countertops, cabinets \\
Offices & 15.8\% & Workspaces with desks, chairs, equipment \\
Bathrooms & 12.7\% & Sanitary facilities with fixtures \\
Hallways/Other & 11.7\% & Transitional spaces and miscellaneous areas \\
\bottomrule
\end{tabular}
\end{table}

\paragraph{Task Categories.} VSI-Bench defines 11 spatial reasoning tasks, as shown in \cref{tab:vsibench_tasks}.

\begin{table}[t!]
\centering
\caption{VSI-Bench task distribution with spatial reasoning focus.}
\label{tab:vsibench_tasks}
\vspace{.1in}
\small
\begin{tabular}{@{}l r p{7.5cm}@{}}
\toprule
\textbf{Task} & \textbf{Prop.} & \textbf{Description} \\
\midrule
Size Estimation & 20.7\% & Estimate absolute dimensions of objects \\
Absolute Distance & 14.5\% & Measure distance between camera and objects \\
Relative Distance & 14.5\% & Compare distances to multiple objects \\
Direction (Medium) & 11.7\% & Determine object directions with moderate complexity \\
Object Counting & 11.3\% & Count instances of object categories \\
Appearance Order & 10.9\% & Sequence objects by order of appearance \\
Direction (Hard) & 9.4\% & Complex directional reasoning with occlusions \\
Room Size Estimation & 3.1\% & Estimate room dimensions \\
Route Planning & 2.7\% & Plan navigation paths through spaces \\
Direction (Easy) & 1.2\% & Simple directional questions \\
\bottomrule
\end{tabular}
\end{table}

\subsection{Video Perturbation Generation System}
\label{appendix:perturbation_system}

We develop a comprehensive video perturbation system that generates semantically coherent, temporally consistent, and physically plausible visual corruptions. Unlike generic image augmentation techniques (e.g., random cropping, color jittering, and Gaussian noise), our system models realistic disturbances that preserve the answerable nature of questions while challenging model robustness.

\subsubsection{System Architecture Overview}

The perturbation system comprises four specialized modules organized in a modular pipeline architecture. Each module can be applied independently or in combination, with perturbation type sampled uniformly from $\mathcal{M} = \{\text{lighting}, \text{camera}, \text{occlusion}, \text{weather}\}$.

\begin{table}[t!]
\centering
\caption{Video perturbation system architecture overview. Input video $V = \{f_1, \ldots, f_T\}$ is transformed to perturbed video $V' = \{f'_1, \ldots, f'_T\}$ via one of four modules.}
\label{tab:perturbation_arch}
\vspace{.1in}
\small
\begin{tabular}{@{}l l p{4.5cm}@{}}
\toprule
\textbf{Module} & \textbf{Effects} & \textbf{Real-World Scenario} \\
\midrule
Lighting & Dusk, Night, Overexposure, Shadow & Time-of-day changes, exposure errors \\
Camera Motion & Translation, Zoom, Rotation & Handheld shake, platform instability \\
Occlusion & Static, Dynamic & Lens obstruction, passing objects \\
Weather & Fog, Rain, Snow & Atmospheric conditions \\
\bottomrule
\end{tabular}
\end{table}

\section{Prompt Templates}
\label{appendix:prompts}

This section documents the complete prompt templates used in \methodname{} for alignment reward computation and self-reflective difficulty assessment.

\subsection{Alignment Reward Prompts}
\label{appendix:alignment_prompts}
As shown in~\cref{alg:rova}, the alignment reward $r^A_j$ evaluates the consistency between outputs from the original and perturbed video branches by decomposing it into two complementary components: answer-level consistency and reasoning-level consistency, both assessed using GPT-4o.

For answer consistency, the evaluator employs a strict binary matching rule: if the candidate answer exactly matches or is semantically equivalent to the reference answer (e.g., “0” vs. “zero”), a score of 1.0 is assigned; otherwise, the score is 0.0, with no partial credit allowed (see answer consistency prompt template~(\cref{tab:prompt_answer})).

For reasoning consistency, a three-tier scoring scheme is used: a score of 1.0 indicates that the candidate reasoning is fully consistent with the reference, allowing for paraphrasing and minor omissions; 0.5 indicates general consistency but includes unsupported additions or missing key steps; and 0.0 indicates contradiction or hallucination of core facts. Critically, scoring is based solely on the reasoning process, independent of the final answer (see reasoning consistency prompt template~(\cref{tab:prompt_reasoning})).

Together, these two metrics - answer matching and reasoning alignment - enable a fine-grained evaluation of output consistency under perturbation, promoting both semantic robustness and reasoning fidelity in the model.

\begin{figure*}[t!]
\begin{tcolorbox}[
    colback=blue!3,
    colframe=blue!70,
    title={\large\textbf{$\triangleright$ Answer Consistency Evaluation Prompt}},
    fonttitle=\bfseries\large,
    boxrule=1pt,
    arc=4pt
]

\textbf{[Task]}\\
You are a strict evaluator responsible for assessing whether the candidate answer matches the reference answer. Score consistency \textbf{only} based on whether the CANDIDATE $\langle$answer$\rangle$ is semantically identical to the REFERENCE $\langle$answer$\rangle$. Do not consider reasoning quality, explanation depth, or stylistic differences.

\vspace{2mm}
\textbf{[Evaluation Criteria]}\\
Rate the answer on a binary scale:
\begin{itemize}[leftmargin=1.5em, itemsep=1pt, topsep=2pt]
    \item \textbf{Score 1.0}: The candidate answer is exactly the same as, or clearly equivalent to, the reference answer (e.g., ``0'' vs.\ ``zero'', ``NYC'' vs.\ ``New York City'').
    \item \textbf{Score 0.0}: The candidate answer differs from the reference answer in any substantive way.
\end{itemize}
Do not reward partial credit. Minor formatting or punctuation differences should be tolerated, but semantic mismatches must receive a score of 0.0.

\vspace{2mm}
\textbf{[Input]}
\begin{itemize}[leftmargin=1.5em, itemsep=1pt, topsep=2pt]
    \item Reference Answer: \texttt{\{reference\_answer\}}
    \item Candidate Answer: \texttt{\{candidate\_answer\}}
\end{itemize}

\vspace{2mm}
\textbf{[Output Format]}\\
Return a JSON object with the following fields. Only output the JSON object - no explanations, no justifications, and no extra text of any kind.
\vspace{1mm}
\begin{lstlisting}[basicstyle=\small\ttfamily, breaklines=true, columns=fullflexible, backgroundcolor=\color{blue!8}]
{"score": 0.0 or 1.0,
 "match_type": "exact" or "equivalent" or "mismatch"}
\end{lstlisting}
\end{tcolorbox}
\caption{Answer consistency evaluation prompt for binary answer matching.}
\label{tab:prompt_answer}
\end{figure*}

\begin{figure*}[t!]
\begin{tcolorbox}[
    colback=teal!3,
    colframe=teal!70,
    title={\large\textbf{$\triangleright$ Reasoning Consistency Evaluation Prompt}},
    fonttitle=\bfseries\large,
    boxrule=1pt,
    arc=4pt
]

\textbf{[Task]}\\
You are a strict evaluator responsible for assessing whether the candidate reasoning is consistent with the reference reasoning. Score consistency \textbf{only} based on whether the CANDIDATE $\langle$think$\rangle$ matches the REFERENCE $\langle$think$\rangle$ in key evidence and logical steps. Do \textbf{not} evaluate the correctness of the final answer.

\vspace{2mm}
\textbf{[Evaluation Criteria]}\\
Rate the reasoning on a three-tier scale:
\begin{itemize}[leftmargin=1.5em, itemsep=1pt, topsep=2pt]
    \item \textbf{Score 1.0}: The candidate reasoning is consistent with the reference up to paraphrasing and minor omissions. All key observations and logical steps are preserved.
    \item \textbf{Score 0.5}: The candidate reasoning is mostly consistent but contains unsupported additions, missing key intermediate steps, or minor logical deviations.
    \item \textbf{Score 0.0}: The candidate reasoning contradicts core observations from the reference or hallucinates key facts not present in the reference.
\end{itemize}

\vspace{2mm}
\textbf{[Evaluation Guidelines]}
\begin{itemize}[leftmargin=1.5em, itemsep=1pt, topsep=2pt]
    \item Focus exclusively on the reasoning process --- ignore the final answer.
    \item Tolerate stylistic and structural differences if the underlying logic is equivalent.
    \item Penalize fabricated evidence or contradictions to reference observations.
\end{itemize}

\vspace{2mm}
\textbf{[Input]}
\begin{itemize}[leftmargin=1.5em, itemsep=1pt, topsep=2pt]
    \item Reference Reasoning: \texttt{\{reference\_think\}}
    \item Candidate Reasoning: \texttt{\{candidate\_think\}}
\end{itemize}

\vspace{2mm}
\textbf{[Output Format]}\\
Return a JSON object with the following fields. Only output the JSON object --- no explanations, no justifications, and no extra text of any kind.

\vspace{1mm}
\begin{lstlisting}[basicstyle=\small\ttfamily, breaklines=true, columns=fullflexible, backgroundcolor=\color{teal!8}]
{"score": 0.0 or 0.5 or 1.0,
 "justification": "<explanation>"}
\end{lstlisting}
\end{tcolorbox}
\caption{Reasoning consistency evaluation prompt with three-tier scoring.}
\label{tab:prompt_reasoning}
\end{figure*}
\subsection{Difficulty Assessment Judge Prompt}
\label{appendix:judge_prompt}
\cref{tab:prompt_judge} illustrates the self-reflective difficulty assessment that employs an LLM judge to determine sample answerability under visual perturbations. The LLM receives a binary assessment prompt that strictly constrains it to evaluate only using the masked video. If the masked video provides sufficient information to reliably answer the given question, the LLM must output YES; otherwise, it must output NO. Following this judgment, samples classified as YES are treated as easy with low confidence or informative difficulty and are retained for training, while those classified as NO are deemed hard and are placed into a buffer for later re-evaluation—thereby enabling an adaptive, difficulty-aware curriculum that dynamically prioritizes informative training instances and defers overly challenging ones until the model is better equipped to handle them.
\begin{figure*}[t!]
\begin{tcolorbox}[
    colback=orange!3,
    colframe=orange!70,
    title={\textbf{$\blacktriangleright$~LLM Judge Prompt for Difficulty Assessment}},
    fonttitle=\bfseries\large,
    boxrule=1pt,
    arc=4pt
]
\textbf{[Task]}\\
You may ONLY use the MASKED video to judge.
\vspace{2mm}

\textbf{[Evaluation Criteria]}
\begin{itemize}[leftmargin=1.5em, itemsep=1pt, topsep=2pt]
    \item If the masked video \textbf{DOES} give enough information to reliably answer, respond: \textbf{YES}.
    \item If the masked video does \textbf{NOT} give enough information, respond: \textbf{NO}.
    \item Additionally, provide a \textbf{confidence score} in $[0.0,\,1.0]$ (one decimal place) reflecting how certain you are in your judgment.
\end{itemize}
Reply with \textbf{ONE WORD} and \textbf{ONE NUMBER} only.
\vspace{2mm}

\textbf{[Input]}
\begin{itemize}[leftmargin=1.5em, itemsep=1pt, topsep=2pt]
    \item Question: \texttt{\{question\_text\}}
\end{itemize}
\vspace{2mm}

\textbf{[Output Format]}
\vspace{1mm}
\begin{lstlisting}[basicstyle=\small\ttfamily, breaklines=true, columns=fullflexible, backgroundcolor=\color{orange!8}]
{
  "answer": "YES or NO",
  "confidence": 0.0
}
\end{lstlisting}
\end{tcolorbox}
\caption{LLM judge prompt for binary answerability assessment under perturbation. The confidence score controls the sample discard rate via threshold~$\tau$.}
\label{tab:prompt_judge}
\end{figure*}
\subsection{Complete Reward Computation Pipeline}
\label{appendix:reward_pipeline}
\cref{alg:reward_computation} details the complete reward computation pipeline used in \methodname{}. Given a paired output $(o_j, \tilde{o}_j)$ generated from the original and perturbed video branches, the pipeline proceeds in five sequential steps. First, format validation checks whether the output adheres to the required First, format validation checks whether the output adheres to the required format: \begin{center}
\texttt{<think>$\cdots$</think><answer>$\cdots$</answer>}
\end{center}. Second, the reasoning trace and final answer are extracted from both branches. Third, a binary accuracy reward $r^{\text{Acc}}_j$ is computed by comparing the extracted answer against the ground truth. Fourth, two alignment rewards are obtained via GPT-4o: a three-tier reasoning consistency score $r^{\text{align,r}}_j \in \{0, 0.5, 1\}$ that evaluates whether the key logical steps are preserved across branches, and a binary answer consistency score $r^{\text{align,a}}_j \in \{0, 1\}$ that checks semantic equivalence of the final answers. Finally, these components are aggregated into the total reward $R_j = r^F_j + r^{\text{Acc}}_j + \alpha_r \cdot r^{\text{align,r}}_j + \alpha_a \cdot r^{\text{align,a}}_j$, where the asymmetric weights $\alpha_r = 0.3$ and $\alpha_a = 0.7$ prioritize answer-level robustness while still encouraging reasoning fidelity (see \cref{sec :hyperparameter} for detailed hyperparameter specifications).

\begin{algorithm}[t!]
\caption{Alignment Reward Computation}
\label{alg:reward_computation}
\begin{algorithmic}[1]
\REQUIRE Output pair $(o_j, \tilde{o}_j)$ from original and perturbed branches, ground truth $g$
\ENSURE Total reward $R_j$
\STATE \textbf{Step 1: Format Validation}
\STATE $r^F_j \gets \texttt{regex\_match}(o_j, \texttt{"<think>.*</think>.*<answer>.*</answer>"})$
\STATE \textbf{Step 2: Extract Components}
\STATE $p_j \gets \texttt{extract}(o_j, \texttt{"<think>"})$; $a_j \gets \texttt{extract}(o_j, \texttt{"<answer>"})$
\STATE $\tilde{p}_j \gets \texttt{extract}(\tilde{o}_j, \texttt{"<think>"})$; $\tilde{a}_j \gets \texttt{extract}(\tilde{o}_j, \texttt{"<answer>"})$
\STATE \textbf{Step 3: Accuracy Reward}
\STATE $r^{\text{Acc}}_j \gets \mathbbm{1}[a_j = g]$
\STATE \textbf{Step 4: Alignment Rewards via GPT-4o}
\STATE $r^{\text{align,r}}_j \gets \texttt{GPT4o}(\texttt{reasoning\_prompt}, p_j, \tilde{p}_j)$ \COMMENT{$\in \{0, 0.5, 1\}$}
\STATE $r^{\text{align,a}}_j \gets \texttt{GPT4o}(\texttt{answer\_prompt}, a_j, \tilde{a}_j)$ \COMMENT{$\in \{0, 1\}$}
\STATE \textbf{Step 5: Aggregation}
\STATE $r^A_j \gets \alpha_r \cdot r^{\text{align,r}}_j + \alpha_a \cdot r^{\text{align,a}}_j$
\STATE $R_j \gets r^F_j + r^{\text{Acc}}_j + r^A_j$
\STATE \textbf{Return} $R_j$
\end{algorithmic}
\end{algorithm}
\begin{algorithm}[t!]
\caption{\textsc{\methodnamefull{}}}
\label{alg:rova}
\begin{algorithmic}[1]
\REQUIRE Policy $F_\theta$, buffer $\mathcal{M}\!=\!\varnothing$, data $\mathcal{D}$, params $(\alpha, \tau, K_{\max}, G)$
\\\textcolor{gray}{\# Self-Reflective Difficulty-Aware Training}
\FOR{$(q, V) \sim \mathcal{D}$}
    \STATE $\tilde{V} \gets \textsc{Perturb}(V)$ 
        \hfill {\footnotesize\textcolor{red}{$\triangleright$ Spatio-temporal corruption}}
    \STATE $\{o_j\}_{j=1}^G \!\sim\! F_\theta(\cdot|q,V)$;\; 
           $\{\tilde{o}_j\}_{j=1}^G \!\sim\! F_\theta(\cdot|q,\tilde{V})$
        \hfill {\footnotesize\textcolor{red}{$\triangleright$ Dual-branch}}
    \STATE $R_j \gets r_j + \alpha \!\cdot\! \textsc{Sim}(o_j, \tilde{o}_j)$;\;
           $A_j \gets (R_j \!-\! \bar{R})/\sigma_R$
        \hfill {\footnotesize\textcolor{red}{$\triangleright$ Alignment reward}}
    \STATE $F_\theta \gets \textsc{GRPOStep}(F_\theta, \{A_i\})$
        \hfill {\footnotesize\textcolor{red}{$\triangleright$ Policy update}}
    \STATE $(d, c) \gets F(q, \tilde{V}, S_e; \theta)$
        \hfill {\footnotesize\textcolor{red}{$\triangleright$ Self-assessment}}
    \IF{$d \!=\! \textsc{Hard}$}
        \STATE $\mathcal{M} \gets \mathcal{M} \cup \{(q, \tilde{V}, 0)\}$
            \hfill {\footnotesize\textcolor{red}{$\triangleright$ Buffer hard sample}}
    \ELSIF{$d \!=\! \textsc{Easy} \,\land\, c \!>\! \tau$}
        \STATE \textbf{skip}
            \hfill {\footnotesize\textcolor{red}{$\triangleright$ Prune mastered}}
    \ENDIF
\STATE{{\textcolor{gray}{\# Difficulty Re-Evaluation}}}
\STATE only when the memory is full or after sufficient iterations:
\FOR{$(q, \tilde{V}, n) \in \mathcal{M}$}
    \STATE $d' \gets \mathcal{A}(q, \tilde{V}, \theta_{\text{curr}})$;\; $n \gets n\!+\!1$
    \IF{$d' \!=\! \textsc{Informative}$}
        \STATE Train on $(q, \tilde{V})$; remove from $\mathcal{M}$
            \hfill {\footnotesize\textcolor{red}{$\triangleright$ Promote}}
    \ELSIF{$d' \!=\! \textsc{Easy}$ \textbf{or} $n \!>\! N_{\max}$}
        \STATE Remove from $\mathcal{M}$
            \hfill {\footnotesize\textcolor{red}{$\triangleright$ Evict}}
    \ENDIF
\ENDFOR
\ENDFOR
\end{algorithmic}
\end{algorithm}
\begin{figure*}[t!]
\centering
    \includegraphics[width=0.98\textwidth]{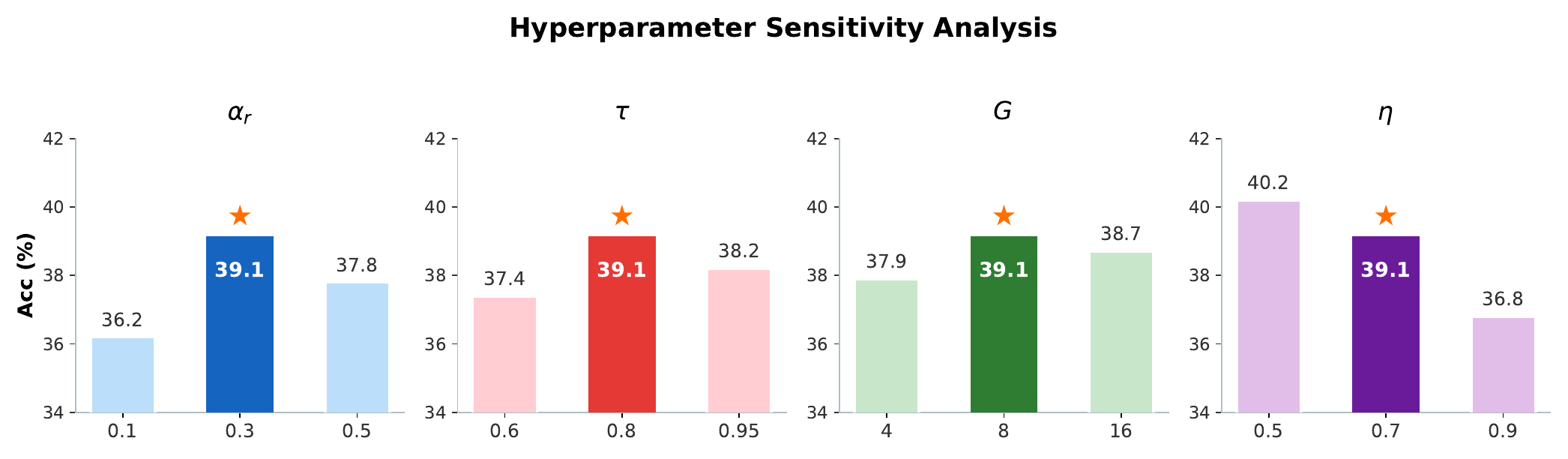}
        \caption{Hyperparameter sensitivity analysis of \methodname{} on the validation set, illustrating the effect of key training hyperparameters on model performance.}
        \label{fig:hyper}
\end{figure*}

\section{Hyperparameter} \label{sec :hyperparameter}
All hyperparameters used in \methodname{} are summarized in \cref{fig:hyper}.
For the reward function, the alignment component assigns $\alpha_r = 0.3$ to reasoning consistency and $\alpha_a = 0.7$ to answer consistency, reflecting the greater difficulty of strict reasoning alignment while prioritizing answer robustness; the base reward uses binary format and accuracy terms ($w_F = w_{\text{Acc}} = 1.0$) with KL regularization $\beta = 0.01$ and $K_{\max} = 537$.
For GRPO training, ordered and shuffled group sizes $G = 8$ and $\tilde{G} = 4$ ensure reliable advantage estimation, PPO clipping $\epsilon = 0.2$ with gradient norm 1.0 stabilizes policy updates, and GAE $\lambda_{\text{GAE}} = 0.95$ with $\gamma = 0.99$ yields a favorable bias--variance trade-off.
For the difficulty-aware curriculum, confidence threshold $\tau = 0.8$ with bounds $a_{\min} = 0.3$ and $a_{\max} = 0.85$ governs sample selection, while the buffer permits $N_{\max} = 3$ replay attempts over at most $|\mathcal{M}|_{\max} = 1000$ samples with re-evaluation every 50 steps.
Training uses 16 frames at $128{\times}28{\times}28$ (32 frames at $256{\times}28{\times}28$ at inference), AdamW with $\text{lr} = 1{\times}10^{-5}$ and cosine schedule on $4{\times}$A100 (80GB) GPUs, with 1 SFT epoch and 300 RL steps.
\subsubsection{Hyperparameter Sensitivity Analysis}
We conduct ablation studies on key hyperparameters to validate our design choices, as shown in Fig~\ref{fig :hyper}. The results indicate that setting the alignment weights to $\alpha_r = 0.3$ and $\alpha_a = 0.7$, which prioritizes answer alignment, leads to improved downstream accuracy while preserving reasoning quality. A confidence threshold of $\tau = 0.8$ provides an effective balance: lower thresholds retain an excessive number of easy samples, whereas higher thresholds discard valuable training signals. We find that a group size of $G = 8$ is sufficient to ensure stable advantage estimation, with larger group sizes yielding diminishing returns. Finally, a perturbation intensity of $\eta = 0.7$ achieves an appropriate balance between challenge and solvability - lower intensities fail to sufficiently enhance robustness, while higher intensities render samples unanswerable.
\begin{table}[t!]
\centering
\caption{Hyperparameter sensitivity analysis on the \benchname{} validation set for Qwen2.5-VL-7B after the first training epoch. Best values are highlighted in \textbf{bold}.}
\label{tab:sensitivity}
\vspace{.1in}
\small
\begin{tabular}{@{}l cc@{}}
\toprule
\textbf{Hyperparameter} & \textbf{Value} & \textbf{Avg. Acc. (\%)} \\
\midrule
\multirow{3}{*}{$\alpha_r$ (reasoning weight)}
    & 0.1 & 36.2 \\
    & \textbf{0.3} & \textbf{39.1} \\
    & 0.5 & 37.8 \\
\midrule
\multirow{3}{*}{$\tau$ (confidence threshold)}
    & 0.6 & 37.4 \\
    & \textbf{0.8} & \textbf{39.1} \\
    & 0.95 & 38.2 \\
\midrule
\multirow{3}{*}{$G$ (group size)}
    & 4 & 37.9 \\
    & \textbf{8} & \textbf{39.1} \\
    & 16 & 38.7 \\
\midrule
\multirow{3}{*}{$\eta$ (perturbation intensity)}
    & 0.5 & 40.2 \\
    & \textbf{0.7} & \textbf{39.1} \\
    & 0.9 & 36.8 \\
\bottomrule
\end{tabular}
\label{fig :hyper}
\end{table}



\section{Additional Experimental Results}
\label{appendix:additional_results}

\textbf{Fine-Grained Performance Analysis.} We further analyze \methodname{}'s performance through complementary perspectives (\cref{fig:radar_analysis_1,fig:radar_analysis_2,fig:radar_analysis_3,fig:radar_analysis_4,fig:radar_analysis_5}), which present radar charts comparing per-task accuracy of \methodname{} against the baselines across multiple task categories, revealing consistent improvements in high-level planning and associative reasoning. \cref{fig:frame_ablation} shows the impact of input frame count on robustness: increasing frames from 16 to 64 improves both baseline and \methodname{} performance across all perturbation types, confirming the benefit of longer temporal context. Notably, \methodname{} consistently outperforms the baseline at every frame count, indicating that our framework learns more robust representations rather than merely exploiting additional frames.

\textbf{Evolution of Reasoning and Answer Rewards.} We examine the reward dynamics of core components during \methodname{} training (\cref{fig:reward}). The total reward converges stably, while decomposed rewards show distinct patterns: accuracy reward rises rapidly and plateaus, reflecting task-specific learning; reasoning reward grows gradually, indicating deeper semantic understanding; and temporal reward shows gradual growth with the lowest variation rate among all components, acting as a temporal regularizer. This confirms that each component effectively guides different learning aspects.
\begin{figure}[t]
    \centering
    \includegraphics[width=0.9\textwidth]{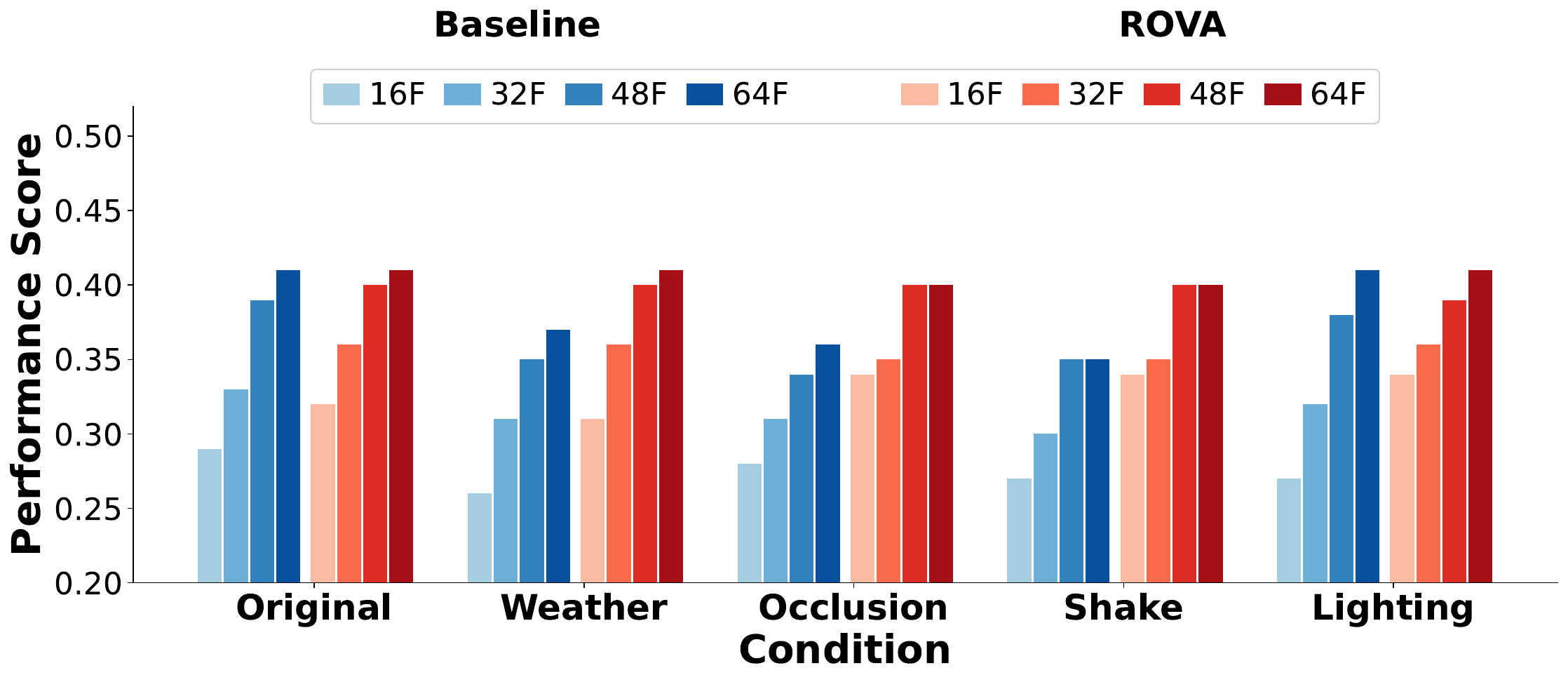}
    \caption{Performance of \methodname{} vs.\ baseline on Qwen2.5-VL-7B across varying frame counts (F = Number of Frames). \methodname{} outperforms the baseline at every frame count.}
    \label{fig:frame_ablation}
\end{figure}

\begin{figure*}[t]
\centering

\begin{minipage}{0.5\textwidth}
\centering
\includegraphics[width=\linewidth]{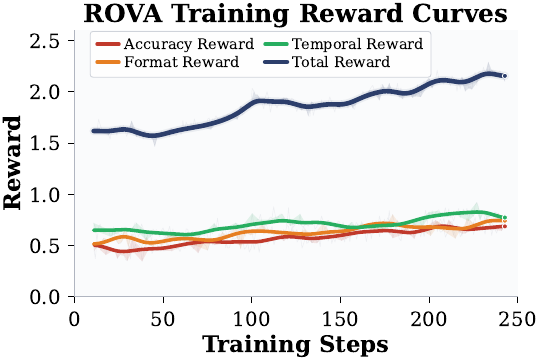}
\captionof{figure}{First epoch of Qwen-VL-2.5-7B training, the reward curves of \methodname{}}
\label{fig:reward}
\end{minipage}
\hfill
\begin{minipage}{0.48\textwidth}
\centering
\footnotesize
\renewcommand{\arraystretch}{1.05}
\setlength{\tabcolsep}{4pt}

\captionof{table}{The stability of easy-classified samples for Qwen2.5-VL-7B}
\label{tab:easy_stability}

\begin{tabular}{@{}c|ccc|ccc@{}}
\toprule
\multirow{2}{*}{\textbf{Step}} & \multicolumn{3}{c|}{\textbf{Retain Rate (\%)} $\uparrow$} & \multicolumn{3}{c}{\textbf{Confidence} $\uparrow$} \\
\cmidrule(lr){2-4} \cmidrule(l){5-7}
& Ep.1 & Ep.2 & Ep.3 & Ep.1 & Ep.2 & Ep.3 \\
\midrule
0   & --   & --   & --   & --   & --   & --   \\
50  & 82.3 & 86.1 & 89.4 & 0.71 & 0.74 & 0.77 \\
100 & 87.5 & 90.2 & 92.8 & 0.73 & 0.78 & 0.81 \\
150 & 91.2 & 93.6 & \cellcolor{green!8}95.1 & 0.76 & 0.81 & 0.84 \\
200 & 93.8 & \cellcolor{green!8}95.2 & \cellcolor{green!8}96.3 & 0.79 & 0.83 & 0.86 \\
250 & \cellcolor{green!8}95.1 & \cellcolor{green!8}96.0 & \cellcolor{green!8}96.8 & 0.81 & 0.85 & 0.88 \\
300 & \cellcolor{green!8}95.4 & \cellcolor{green!8}96.2 & \cellcolor{green!15}\textbf{97.1} & 0.82 & 0.86 & \textbf{0.89} \\
\bottomrule
\end{tabular}

\end{minipage}

\end{figure*}

\begin{figure*}[ht!]
    \centering
    \includegraphics[width=0.85\textwidth]{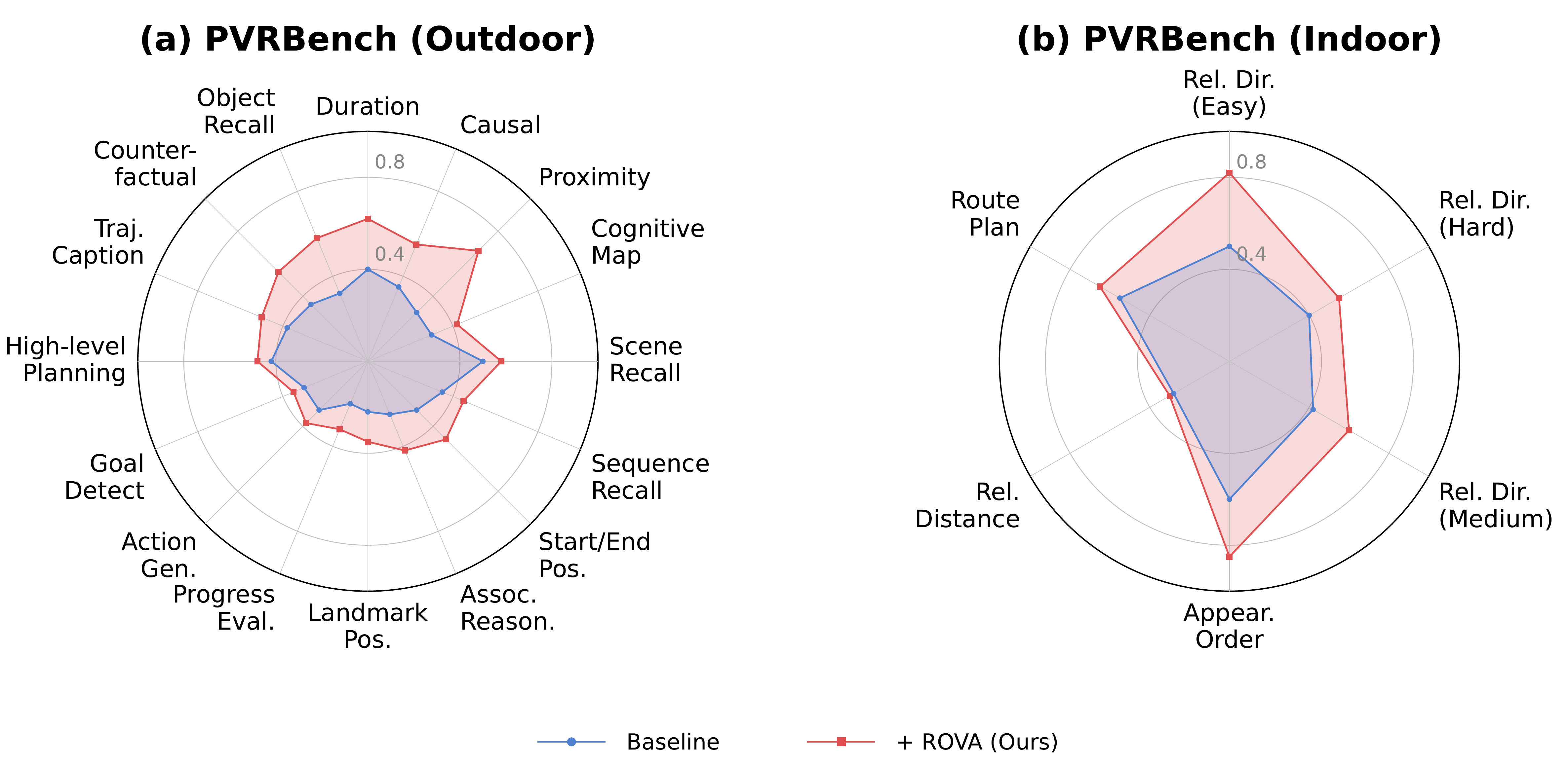}
    \caption{Per-task accuracy comparison of QwenVL-2.5-7B baseline vs. +ROVA on indoor spatial reasoning (left) and outdoor urban navigation (right) tasks, where the inner curve denotes the baseline, and the outer curve denotes +ROVA.}
    \label{fig:radar_analysis_1}
\vspace{0.2in}
    \includegraphics[width=0.85\textwidth]{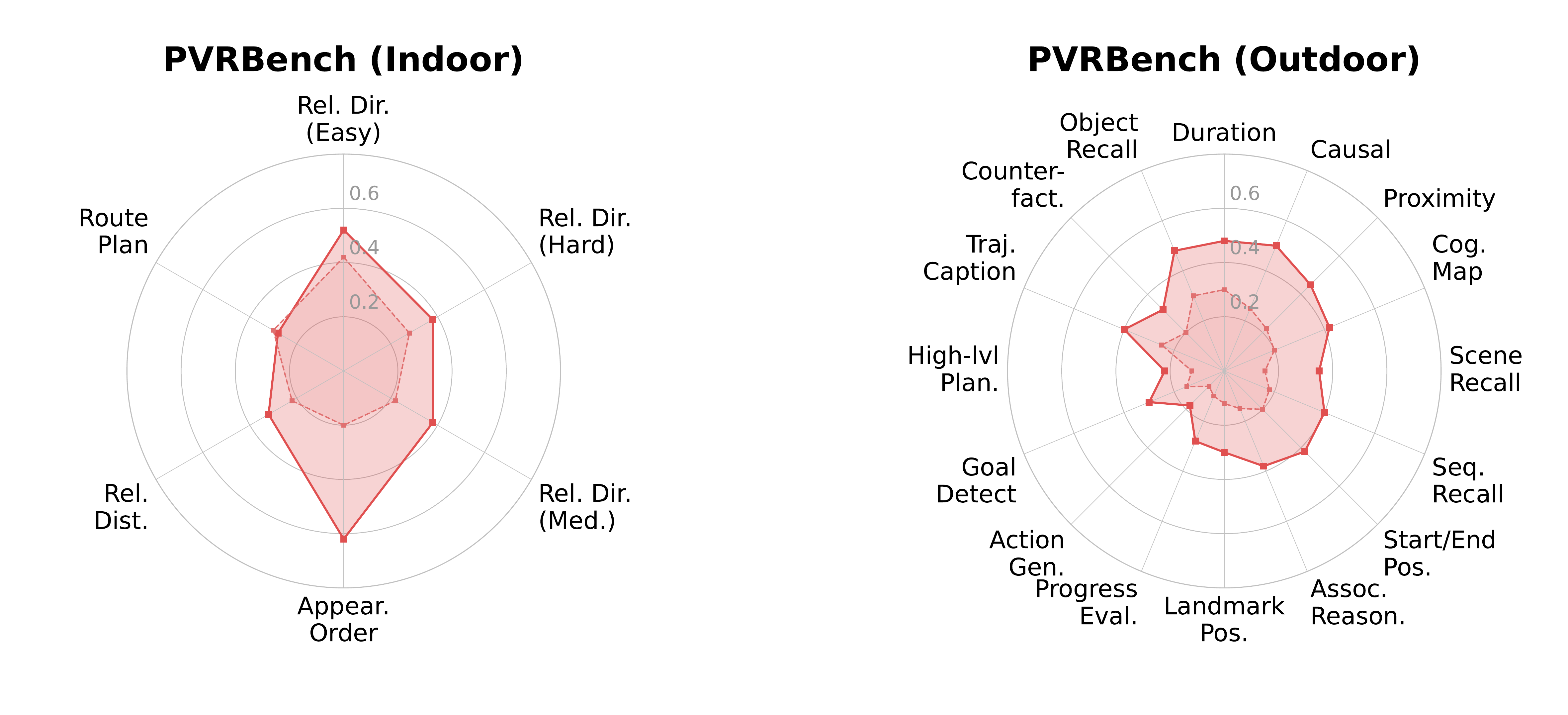}
    \caption{Per-task accuracy comparison of Embodied-R-7B baseline vs. +ROVA on indoor spatial reasoning (left) and outdoor urban navigation (right) tasks, where the inner curve denotes the baseline, and the outer curve denotes +ROVA.}
    \label{fig:radar_analysis_2}
\vspace{0.2in}
    \includegraphics[width=0.85\textwidth]{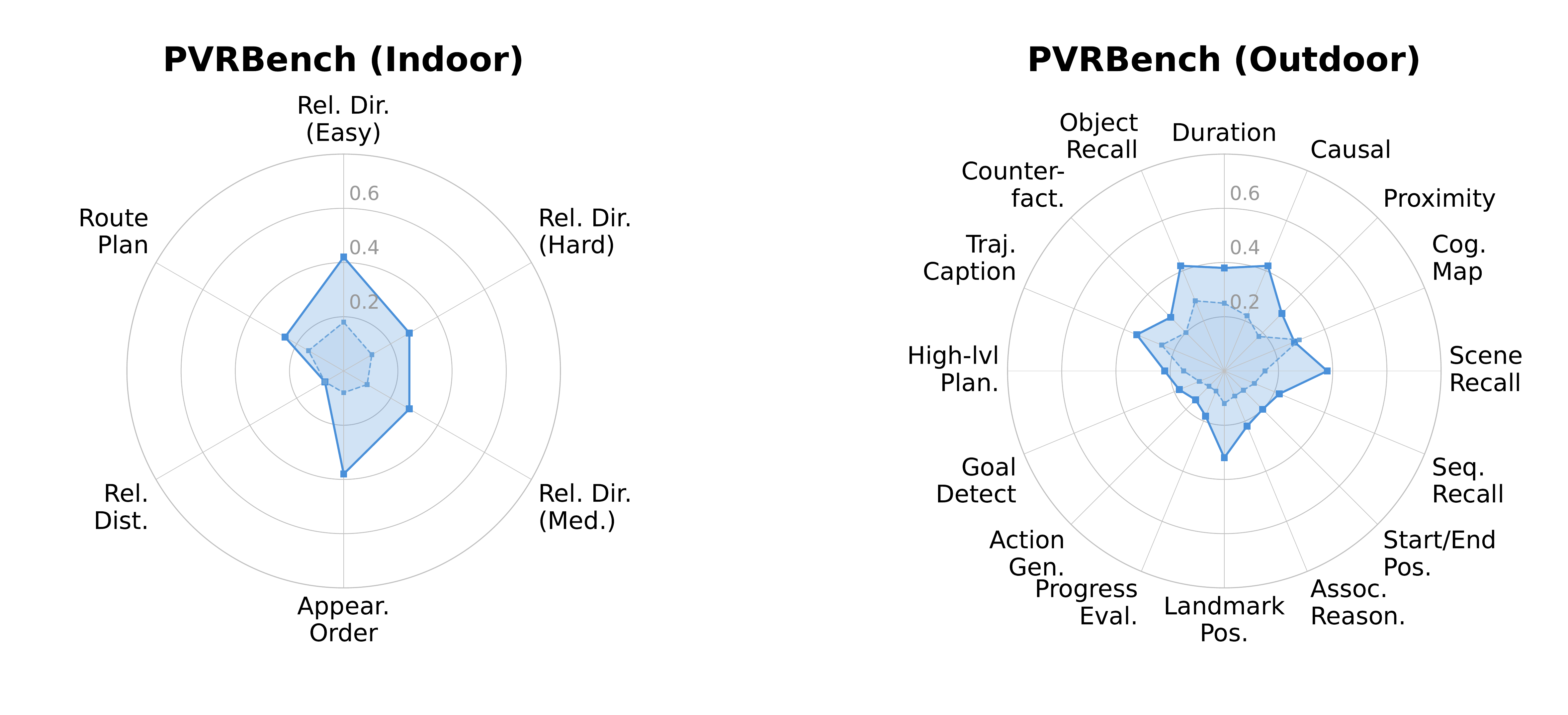}
    \caption{Per-task accuracy comparison of InternVL2.5-8B baseline vs. +ROVA on indoor spatial reasoning (left) and outdoor urban navigation (right) tasks, where the inner curve denotes the baseline, and the outer curve denotes +ROVA.}
    \label{fig:radar_analysis_3}
\end{figure*}

\begin{figure*}[t!]
    \centering
    \includegraphics[width=0.9\textwidth]{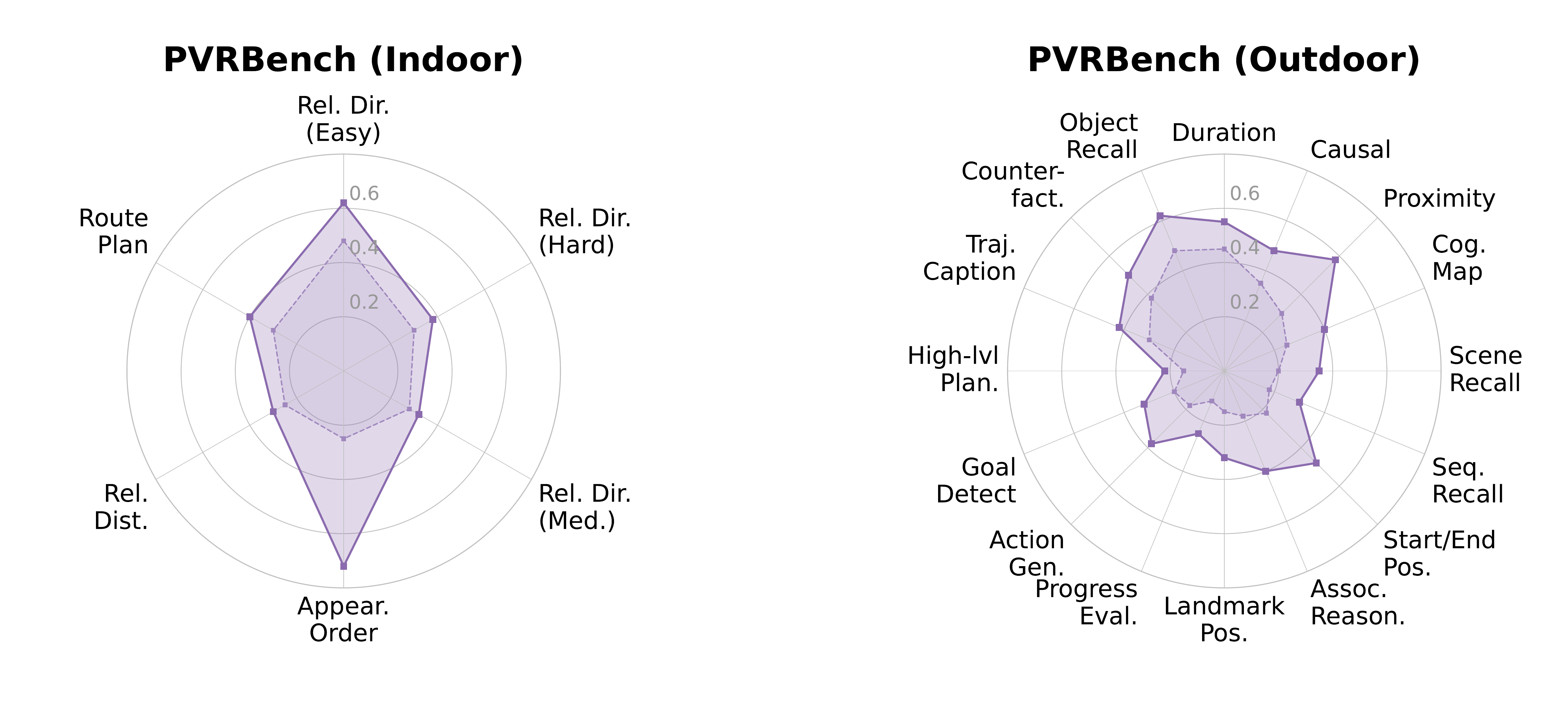}
    \caption{Per-task accuracy comparison of Qwen2.5-VL-72B baseline vs. +ROVA on indoor spatial reasoning (left) and outdoor urban navigation (right) tasks, where the inner curve denotes the baseline, and the outer curve denotes +ROVA.}
    \label{fig:radar_analysis_4}
\end{figure*}
\begin{figure*}[t!]
    \centering
    \includegraphics[width=0.9\textwidth]{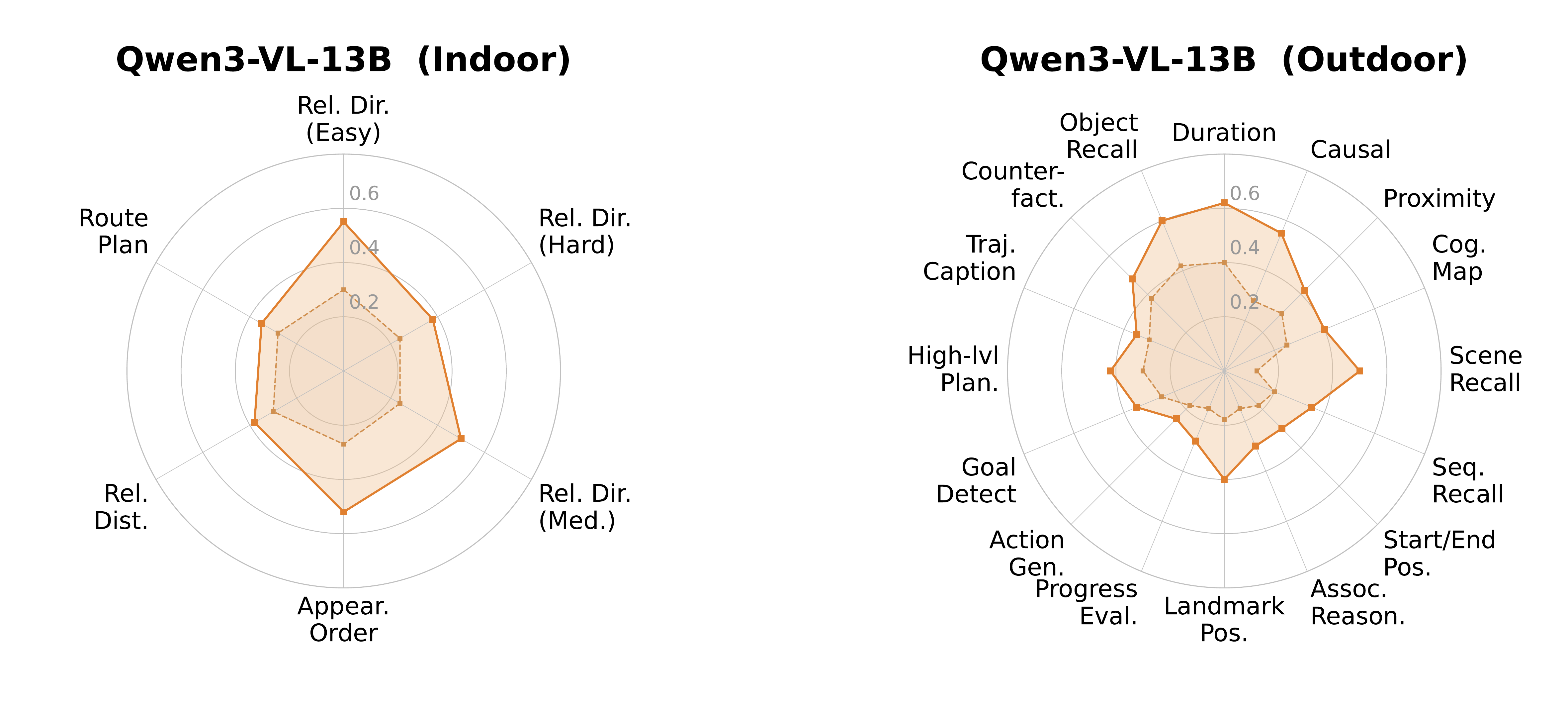}
    \caption{Per-task accuracy comparison of Qwen3-VL-13B baseline vs. +ROVA on indoor spatial reasoning (left) and outdoor urban navigation (right) tasks, where the inner curve denotes the baseline, and the outer curve denotes +ROVA.}
    \label{fig:radar_analysis_5}
\end{figure*}

\textbf{Cross-Benchmark Evaluation.} \cref{fig:cross_benchmark} compares \methodname{} against baselines on the VisBench and UrbanVideo benchmarks under various perturbation types. Our method achieves consistent improvements across both benchmarks, with average accuracy gains of +14.6\% on VisBench and +12.9\% on UrbanVideo, demonstrating strong cross-benchmark generalization.

\begin{figure}[t!]
    \centering
    \includegraphics[width=0.91\textwidth]{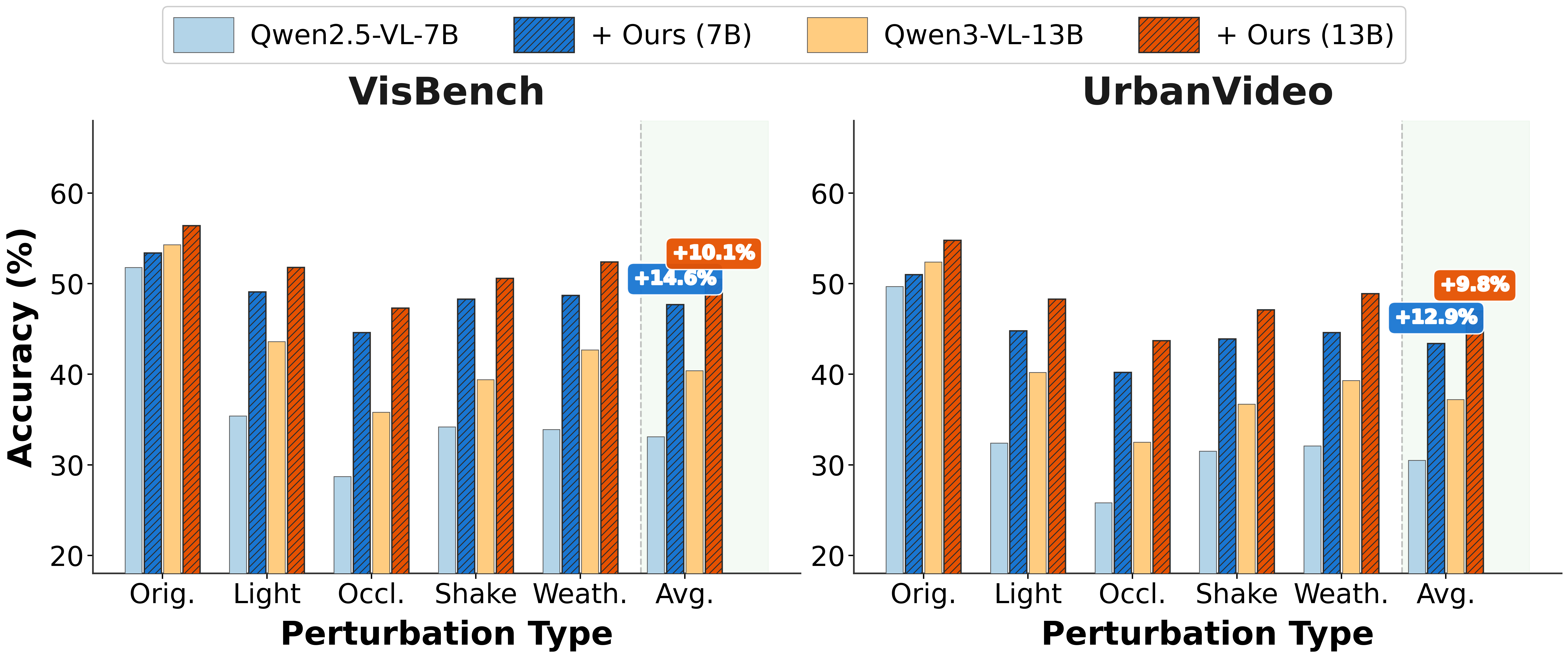}
    \caption{Cross-benchmark evaluation on VisBench and UrbanVideo under various perturbation types. \methodname{} achieves +14.6\% and +12.9\% average accuracy gains, respectively, demonstrating consistent cross-benchmark improvements.}
    \label{fig:cross_benchmark}
\end{figure}


\begin{table}[h]
\centering
\caption{Consistency of easy-sample identification across training epochs. \textbf{Pairwise}: percentage of samples identified as easy in both epochs. \textbf{All-Epoch}: percentage identified as easy in all three epochs. \textbf{Consistency}: ratio of samples easy in all epochs to those easy in at least one.}
\label{tab:easy_overlap}
\vspace{0.1in}
\footnotesize
\setlength{\tabcolsep}{3pt}
\begin{tabular}{@{}c ccc c c@{}}
\toprule
& \multicolumn{3}{c}{\textbf{Pairwise Overlap (\%)}} & \textbf{All-Epoch} & \textbf{Consist.} \\
\cmidrule(lr){2-4}
\textbf{Step} & Ep.1 $\cap$ Ep.2 & Ep.2 $\cap$ Ep.3 & Ep.1 $\cap$ Ep.3 & \textbf{Ovlp. (\%)} & \textbf{Ratio} \\
\midrule
50  & 78.4 & 81.2 & 76.8 & 72.1 & 0.68 \\
100 & 83.7 & 86.5 & 82.4 & 78.9 & 0.74 \\
150 & 87.2 & 89.8 & 86.1 & 83.4 & 0.79 \\
200 & 90.5 & 92.1 & 89.7 & 87.2 & 0.83 \\
250 & 92.8 & 94.3 & 91.9 & 89.6 & 0.86 \\
300 & 94.1 & 95.2 & 93.5 & 91.3 & 0.88 \\
\bottomrule
\end{tabular}
\end{table}

\textbf{Stability of Easy-Classified Samples.} \cref{tab:easy_stability} further quantifies the stability of easy-sample classification. Easy samples are re-evaluated at each training step; the retention rate measures the proportion that remain classified as easy upon re-evaluation, while the confidence score reflects the model's certainty in its classification. Both metrics increase steadily over training, with the retention rate reaching 97.1\% and confidence reaching 0.89 by step 300 (epoch 3), confirming that the self-reflective evaluation mechanism becomes increasingly reliable as training progresses.

\textbf{Analyses of Self-Reflective Evaluation.} We analyze the discarding statistics across training runs and track the evolving proportions of medium, difficult, and easy samples throughout training. Difficult samples consistently exhibit the highest retention rate, confirming their role as persistent learning bottlenecks that require sustained attention. In contrast, easy samples show lower and more variable retention, highlighting their context-dependent utility -once learned, they act as reusable primitives that facilitate generalization. This evolving behavior is further quantified in \cref{tab:easy_overlap}. As training progresses, both pairwise overlap rates and all-epoch overlap increase substantially, while the consistency ratio improves from 0.68 to 0.88, demonstrating that easy-sample identification becomes increasingly stable over time. This growing stability reinforces that easy samples transition from being context-sensitive to consolidated, transferable knowledge units. Collectively, these patterns validate the difficulty estimation mechanism and reveal the curriculum's adaptive nature, where challenging samples persistently push the learning frontier while easier ones consolidate and transfer acquired knowledge, enabling efficient and robust representation learning.

\section{Additional Case Study} \label{appendix:additional_case_study}
Qualitative analyses show that \methodname{}-trained models develop perturbation-aware reasoning: under dense fog (\cref{fig:case_study_2}), Qwen2.5-VL-7B recognizes fog-induced depth distortion to correctly estimate a crane at over 200m and conservatively limits visibility to ~30m refusing path continuity assumptions; under heavy snowstorm (\cref{fig:case_study_3}), InternVL2.5-8B chains multi-frame evidence tracking vertical edges (Frames 0–16) for building identification, estimating NW-to-SE wind from snow trajectories (Frames 27–38), locating entrances via illuminated ground-floor areas (Frame 50), and selecting 2/3 tallest-building altitude by reasoning about upper-frame snow density and obscured building tops (Frame 0, 4); under sandstorm (\cref{fig:case_study_4}), Qwen3-VL-13B shifts from unreliable color cues to structural matching via vertical edge tracking (Frames 0–27) and silhouette cross-referencing to locate the target at 10 o'clock while avoiding a 2 o'clock trap, and infers easterly headwind from left-to-right sand movement to plan steeper descent avoiding building turbulence; under sun glare (\cref{fig:case_study_5}), Qwen2.5-VL-7B identifies overexposed regions as sensor artifacts, confirms target via cross-frame consistency (glare shifts while store remains fixed), and plans southeast descent toward shadowed lower-right regions avoiding the glare direction—all consistently exhibiting, without explicit supervision, three emergent behaviors: (1) \textbf{explicit perturbation identification} naming perturbations in reasoning traces, (2) \textbf{strategy adaptation} modifying approaches per perturbation type (e.g., color-to-structural cue switching), and (3) \textbf{cross-frame evidence integration} distributing attention across frames to compensate per-frame information loss, suggesting the dual-branch alignment objective implicitly encourages perturbation-aware meta-reasoning as a byproduct of output-consistency optimization.

\begin{figure*}[t!]
    \centering
    \includegraphics[width=\textwidth]{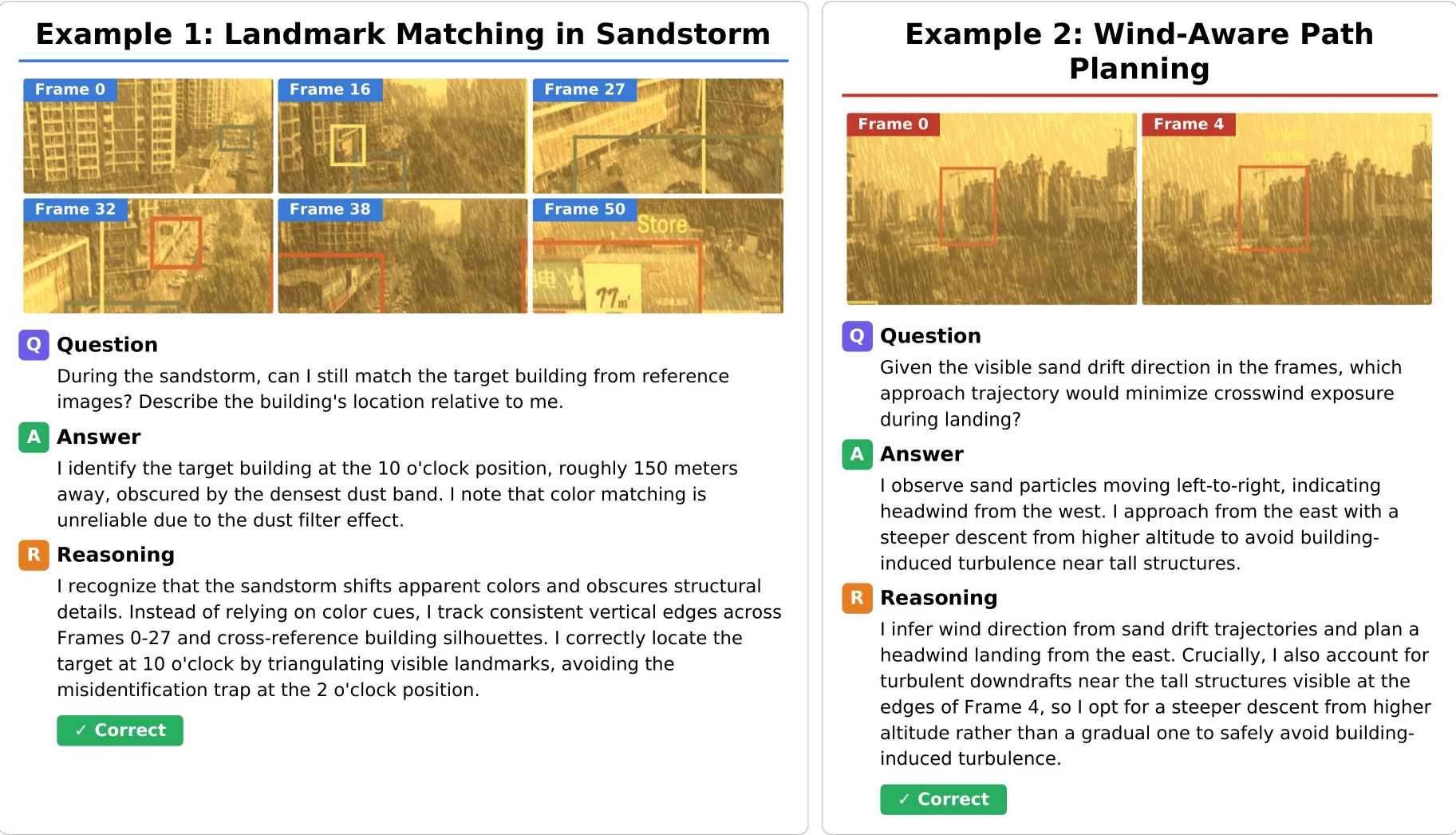}
    \caption{Qualitative examples of ROVA-trained Qwen2.5-VL-7B performing depth estimation and path continuity reasoning under dense fog conditions.}
    \label{fig:case_study_2}
    \vspace{0.2in}
    \includegraphics[width=\textwidth]{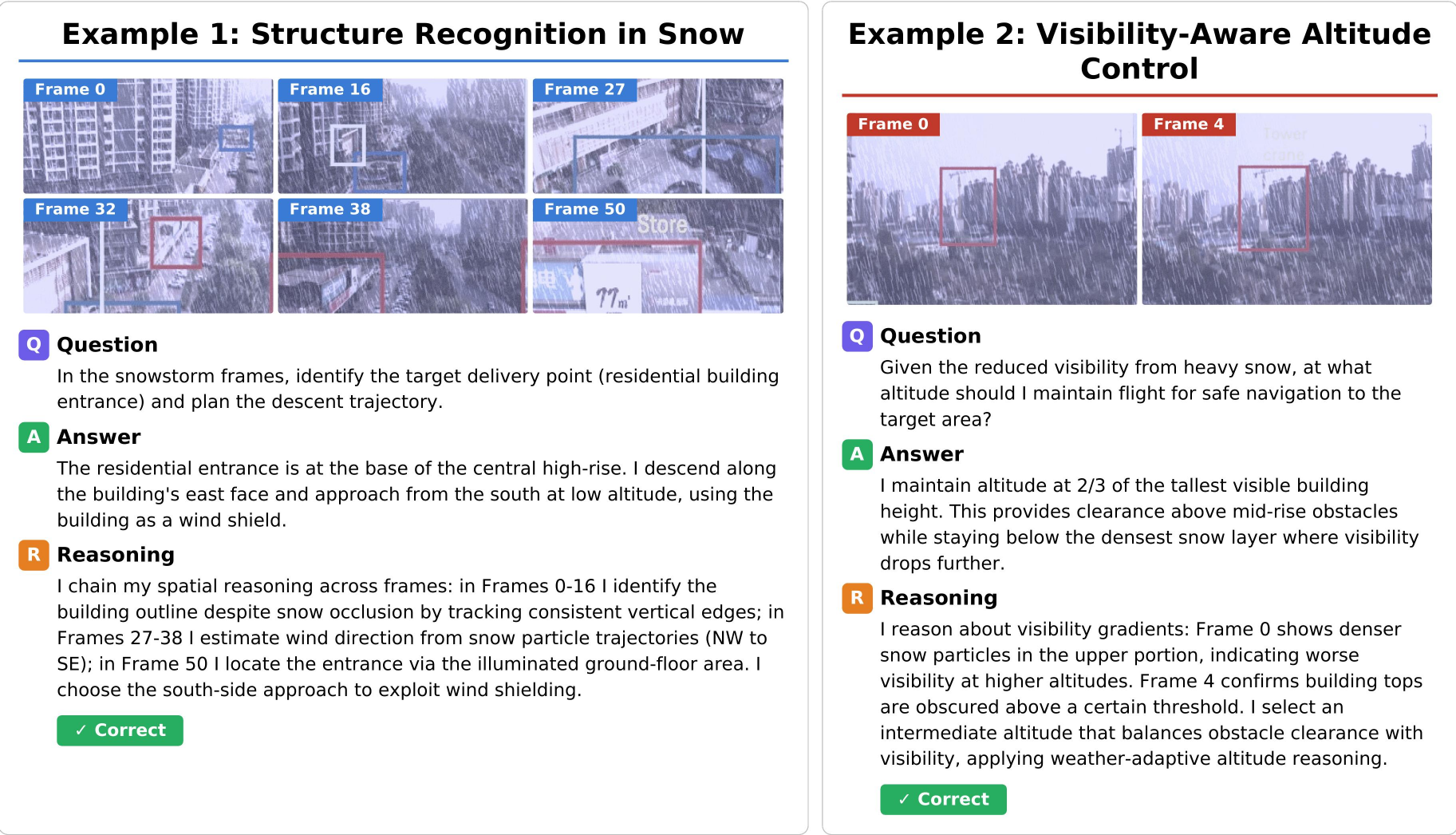}
    \caption{ Qualitative examples of ROVA-trained InternVL2.5-8B performing structure recognition and visibility-aware altitude control under heavy snowstorm conditions.}
    \label{fig:case_study_3}
\end{figure*}

\begin{figure*}[t!]
    \centering
    \includegraphics[width=\textwidth]{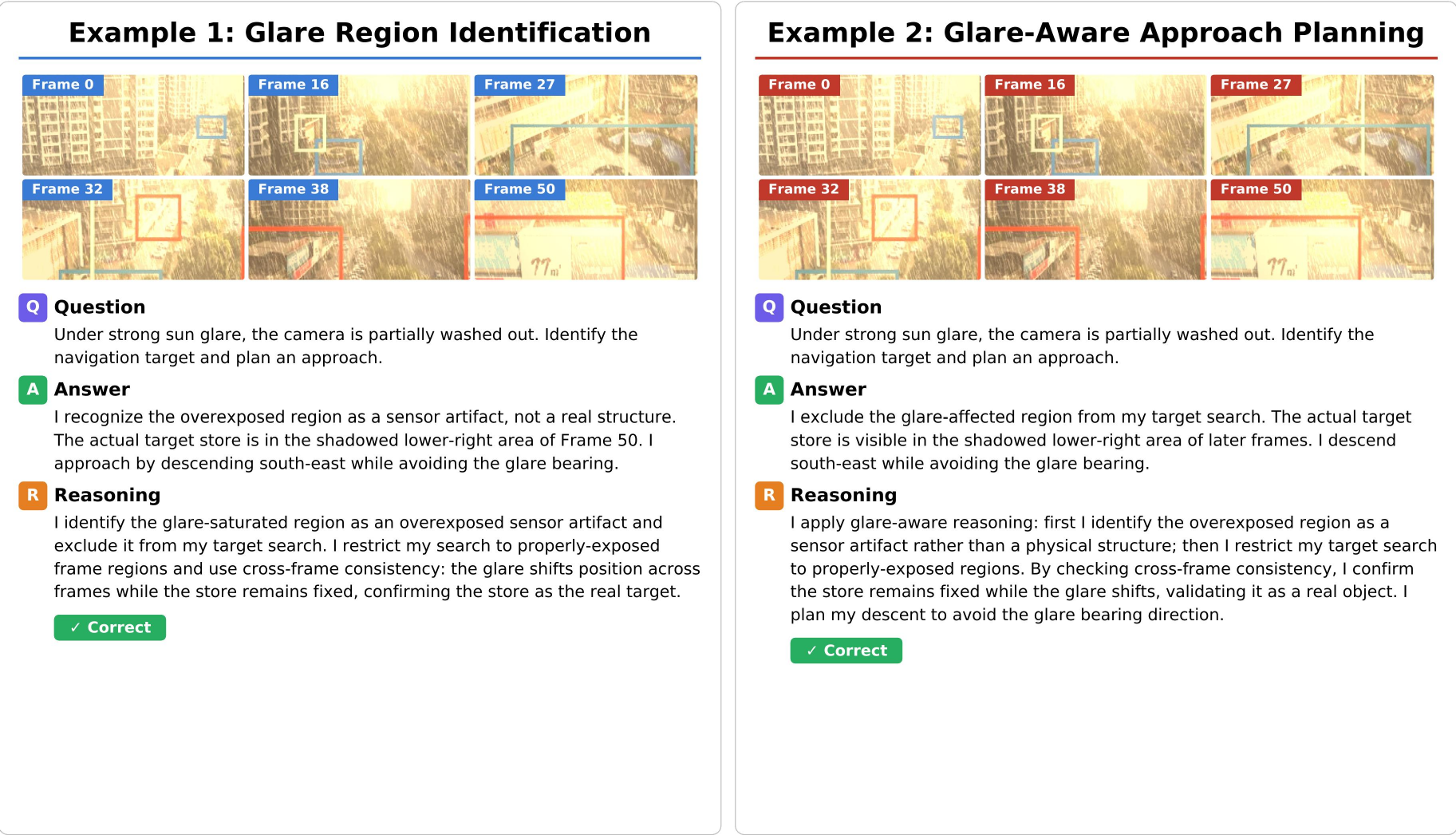}
    \caption{Qualitative examples of ROVA-trained Qwen3-VL-13B performing landmark matching and wind-aware path planning under sandstorm conditions.}
    \label{fig:case_study_4}
    \vspace{0.2in}
    \includegraphics[width=\textwidth]{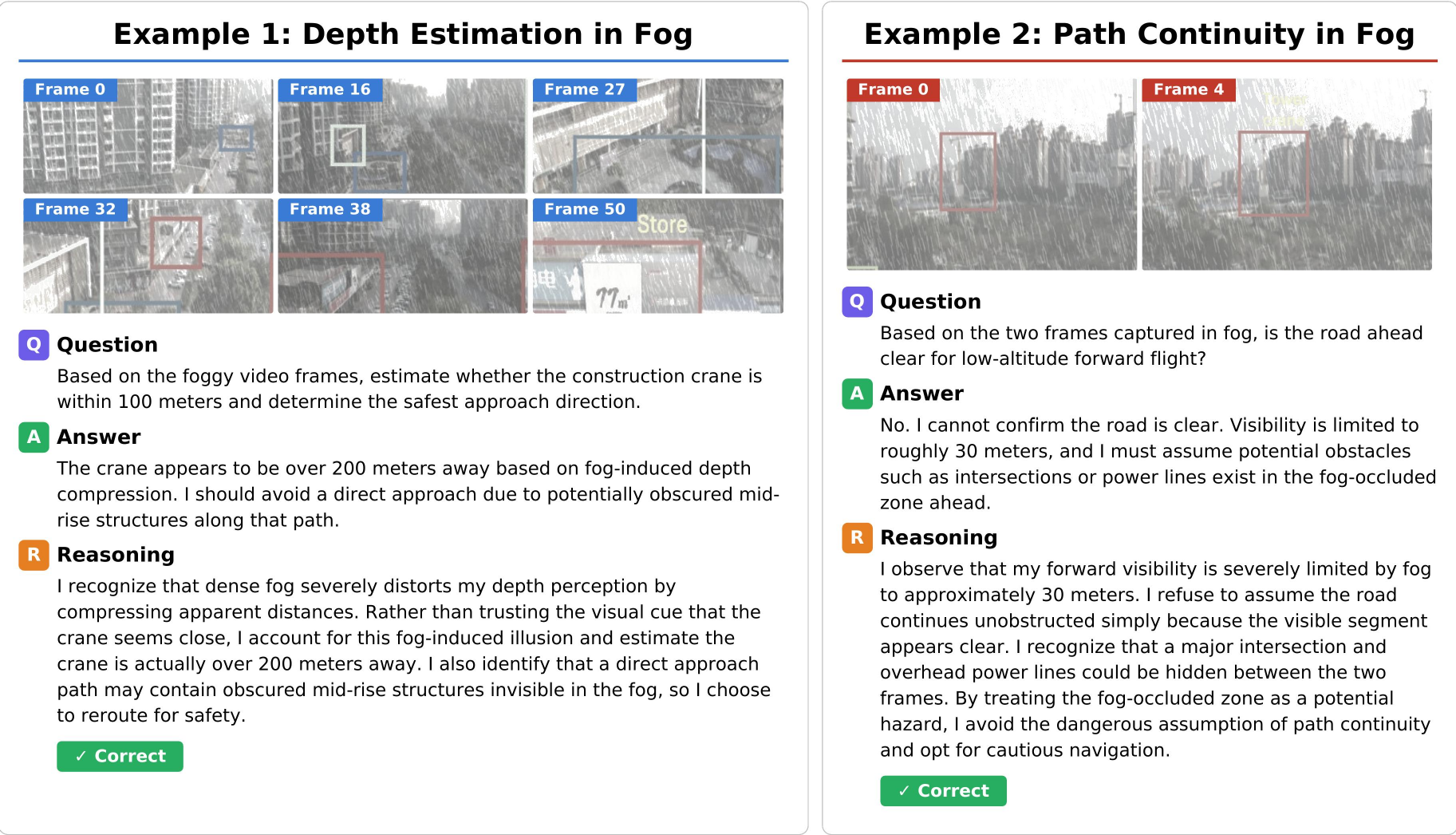}
    \caption{Qualitative examples of ROVA-trained Embodied-R (Qwen2.5-VL-7B as Vision Language Models) performing glare region identification and glare-aware approach planning under strong sun glare conditions.}
    \label{fig:case_study_5}
\end{figure*}

\section{Time Complexity Analysis}
\label{appendix:time_complexity}

We provide a detailed analysis of the computational cost of \methodname{} and demonstrate that, despite introducing additional components, the difficulty-aware curriculum significantly reduces the effective training cost compared to a na\"ive dual-branch baseline that trains on \emph{all} samples uniformly. 

\subsection{Per-Step Cost Decomposition}

Let $N$ denote the batch size, $G_{\text{total}} = G + \tilde{G} = 12$ the total group size, $T$ the number of frames, $L$ the maximum sequence length, and $C_{\text{fwd}}$ the cost of a single model forward pass on one video-query pair. We decompose the per-step cost of each training paradigm.

\paragraph{Standard GRPO (Baseline).}
Standard GRPO generates $G_{\text{total}}$ rollouts per sample from clean video only and performs one backward pass:
\begin{equation}
C_{\text{GRPO}} = N \cdot G_{\text{total}} \cdot C_{\text{fwd}} + C_{\text{bwd}},
\label{eq:cost_grpo}
\end{equation}
where $C_{\text{bwd}} \approx 0.5 \cdot N \cdot G_{\text{total}} \cdot C_{\text{fwd}}$.
The coefficient $0.5$ arises from the asymmetry between rollout generation and gradient computation:
during generation, each token is decoded \emph{autoregressively}, requiring a full forward pass per step;
in contrast, the backward pass operates on the \emph{already-generated} sequences in a single
teacher-forced forward - backward sweep, which can be fully parallelised across all token positions.
Although the gradient computation itself costs roughly $2\times$ the corresponding forward pass~\citep{griewank2008evaluating},
the teacher-forced forward is substantially cheaper than autoregressive decoding
(approximately $\nicefrac{1}{4}$ to $\nicefrac{1}{3}$ of the total generation cost in our setting
due to KV-cache reuse and parallel position processing),
yielding an effective backward cost of roughly half the total rollout budget.\footnote{%
We empirically verified this ratio on our 4$\times$A100 setup;
the measured backward-to-forward cost ratio was $0.48 \pm 0.03$ across 300 steps.}

\paragraph{Na\"ive Dual-Branch.}
A straightforward dual-branch approach generates $G_{\text{total}}$ rollouts from \emph{both} clean and perturbed videos for \emph{every} sample, computes alignment rewards, and updates the policy:
\begin{equation}
C_{\text{naive}} = \underbrace{N \cdot C_{\text{pert}}}_{\text{perturbation}} + \underbrace{2N \cdot G_{\text{total}} \cdot C_{\text{fwd}}}_{\text{dual rollout}} + \underbrace{2N \cdot C_{\text{API}}}_{\text{alignment reward}} + \underbrace{C_{\text{bwd}}'}_{\text{backward}},
\label{eq:cost_naive}
\end{equation}
where $C_{\text{pert}}$ is the per-sample perturbation generation cost, $C_{\text{API}}$ is the GPT-4o API call latency per evaluation, and $C_{\text{bwd}}' \approx 0.5 \cdot 2N \cdot G_{\text{total}} \cdot C_{\text{fwd}}$ reflects the doubled rollout pool entering the backward pass.

\paragraph{\methodname{} (with difficulty-aware curriculum).}
\methodname{} introduces two additional stages—self-reflective assessment and memory re-evaluation—but critically, it also \emph{discards} a fraction of samples from training via its difficulty-aware curriculum (\cref{sec : Data Curation}).
Let $\rho_t \in [0,1]$ denote the effective training ratio at step $t$, i.e., the fraction of samples that survive curriculum filtering (neither pruned as high-confidence easy nor deferred as excessively hard).
The per-step cost becomes:
\begin{equation}
\begin{split}
C_{\text{ROVA}} &= \underbrace{N \cdot C_{\text{pert}}}_{\text{perturbation}} + \underbrace{2N \cdot G_{\text{total}} \cdot C_{\text{fwd}}}_{\text{dual rollout (all $N$)}} + \underbrace{N \cdot C_{\text{judge}}}_{\text{self-assessment}} \\
&\quad + \underbrace{2\rho_t N \cdot C_{\text{API}}}_{\text{alignment (selected)}} + \underbrace{|\mathcal{M}_t| \cdot C_{\text{judge}} \cdot \mathbbm{1}[t \bmod T_{\text{re}} = 0]}_{\text{memory re-eval (periodic)}} + \underbrace{C_{\text{bwd}}''}_{\text{backward (selected)}},
\end{split}
\label{eq:cost_rova}
\end{equation}
where $C_{\text{judge}} \approx 0.4 \cdot C_{\text{fwd}}$ denotes the cost of the self-reflective difficulty assessment (a single forward pass with a shortened prompt over the perturbed video), $|\mathcal{M}_t|$ is the current memory buffer size, and $T_{\text{re}}$ is the re-evaluation period.

Three design choices jointly explain why this formulation leads to a favorable cost--accuracy trade-off despite the added components:

\noindent\textbf{(i) Curriculum filtering reduces downstream cost.}
Although dual rollouts are performed over the full batch of $N$ samples (necessary for the self-assessment stage to observe model behavior before filtering), the \emph{expensive} alignment reward calls and the backward pass operate only on the $\rho_t N$ selected samples.
In practice, $\rho_t$ stabilizes around $0.55$--$0.65$ during training (see ~\cref{tab:easy_stability}, effectively halving the API and gradient costs relative to the na\"ive dual-branch baseline.

\noindent\textbf{(ii) Self-assessment is lightweight.}
The self-reflective difficulty judgment $C_{\text{judge}}$ reuses the already-loaded model weights and operates on a single truncated prompt per sample, costing only ${\sim}0.4\times$ a standard rollout forward pass.
This modest overhead is more than compensated by the downstream savings from filtering: the net cost reduction from discarding $(1-\rho_t)N$ samples far exceeds the $N \cdot C_{\text{judge}}$ assessment cost.

\noindent\textbf{(iii) Memory re-evaluation is amortized.}
Re-evaluating the memory buffer $\mathcal{M}_t$ is the most expensive auxiliary operation, as it requires a difficulty re-assessment of all $|\mathcal{M}_t|$ stored samples under the current policy.
We set the re-evaluation period to $T_{\text{re}} = 50$ steps, which we found to balance freshness and overhead: the model's difficulty landscape shifts meaningfully over ${\sim}50$ update steps (see~\cref{fig:memory}), while more frequent re-evaluation yields diminishing returns at linearly increasing cost.
Amortized over $T_{\text{re}}$ steps, the per-step memory overhead is only $|\mathcal{M}_t| \cdot C_{\text{judge}} / T_{\text{re}}$, which constitutes less than $2\%$ of the total per-step budget in our experiments.

\noindent Combining these factors, we obtain $C_{\text{bwd}}'' \approx 0.5 \cdot 2\rho_t N \cdot G_{\text{total}} \cdot C_{\text{fwd}}$, since only the selected samples contribute to the policy gradient.
The overall per-step cost of \methodname{} is thus approximately:
\begin{equation}
C_{\text{ROVA}} \approx \bigl(2 + 0.4 + 2\rho_t\bigr) \cdot N \cdot G_{\text{total}} \cdot C_{\text{fwd}} + \text{(minor terms)},
\label{eq:cost_rova_approx}
\end{equation}
compared with $(2 + 2) \cdot N \cdot G_{\text{total}} \cdot C_{\text{fwd}}$ for the na\"ive baseline (Eq.~\ref{eq:cost_naive}), yielding a theoretical speedup of $\nicefrac{4}{(2.4 + 2\rho_t)}$.
At $\rho_t \approx 0.6$, this gives ${\sim}1.11\times$ speedup, consistent with the $1.06\times$ effective speedup measured in \cref{tab:wallclock} (the small gap is attributable to scheduling and synchronization overhead on our multi-GPU setup).

\subsection{Amortized Cost Savings from Curriculum}
\label{sec:amortized_savings}

The key insight is that the self-assessment overhead is \emph{more than compensated} by the reduction in downstream computation. Specifically, for each discarded sample, \methodname{} saves the cost of alignment reward API calls and a portion of the backward pass gradient computation. We formalize this tradeoff below.

\begin{proposition}[Amortized cost advantage of \methodname{}]
\label{prop:cost_advantage}
Let $\rho_t$ denote the effective training ratio at step $t$, and let $\bar{\rho} = \frac{1}{T_{\text{RL}}} \sum_{t=1}^{T_{\text{RL}}} \rho_t$ be the average training ratio over $T_{\text{RL}}$ RL steps. Ignoring the amortized memory re-evaluation cost (which occurs every 50 steps), the per-step cost ratio of \methodname{} relative to na\"ive dual-branch training satisfies:
\begin{equation}
\frac{C_{\text{ROVA}}}{C_{\text{naive}}} \approx \frac{2G_{\text{total}} \cdot C_{\text{fwd}} + C_{\text{judge}} + 2\bar{\rho} \cdot C_{\text{API}} + 1.5\bar{\rho} \cdot G_{\text{total}} \cdot C_{\text{fwd}}}{2G_{\text{total}} \cdot C_{\text{fwd}} + 2C_{\text{API}} + 1.5G_{\text{total}} \cdot C_{\text{fwd}}}.
\label{eq:cost_ratio}
\end{equation}
When $\bar{\rho} < 1$ (i.e., the curriculum discards some fraction of samples), and $C_{\text{judge}} < (1-\bar{\rho})(2C_{\text{API}} + 1.5G_{\text{total}} \cdot C_{\text{fwd}})$, then $C_{\text{ROVA}} < C_{\text{naive}}$.
\end{proposition}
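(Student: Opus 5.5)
The plan is to start from the two per-step cost decompositions already written down, Eq.~\eqref{eq:cost_naive} for the na\"ive dual-branch baseline and Eq.~\eqref{eq:cost_rova} for \methodname{}. Following the statement, the periodic memory term $|\mathcal{M}_t|\, C_{\text{judge}}\, \mathbbm{1}[t \bmod T_{\text{re}}=0]$ is dropped. The perturbation cost $N C_{\text{pert}}$ appears identically in both expressions. I will either cancel it as negligible, which gives the ``$\approx$'', or keep it on both sides. Keeping it is harmless for the final inequality, since adding the same positive constant to the numerator and denominator of a ratio that is below $1$ leaves it below $1$.

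Next I substitute the backward-pass approximations. For the baseline, $C_{\text{bwd}}' \approx 0.5\cdot 2N G_{\text{total}} C_{\text{fwd}}$. For \methodname{}, $C_{\text{bwd}}'' \approx 0.5\cdot 2\rho_t N G_{\text{total}} C_{\text{fwd}}$, because only the selected samples contribute gradients.

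To match the $1.5\,G_{\text{total}}C_{\text{fwd}}$ coefficient in Eq.~\eqref{eq:cost_ratio}, I will make one accounting choice: the gradient-bearing (teacher-forced) work on the selected pool is counted together with its share of the backward sweep. I will state this explicitly as the cost model being adopted. With it, the baseline per-sample cost is $2G_{\text{total}}C_{\text{fwd}} + 2C_{\text{API}} + 1.5G_{\text{total}}C_{\text{fwd}}$, and the \methodname{} per-sample cost is $2G_{\text{total}}C_{\text{fwd}} + C_{\text{judge}} + 2\rho_t C_{\text{API}} + 1.5\rho_t G_{\text{total}}C_{\text{fwd}}$. Dividing both by $N$ gives the ratio at step $t$.

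To pass from $\rho_t$ to $\bar\rho$, I average the per-step \methodname{} cost over the $T_{\text{RL}}$ steps. Every $\rho_t$-dependent term is linear in $\rho_t$, and the baseline cost does not depend on $t$. So the average cost is obtained exactly by replacing $\rho_t$ with $\bar\rho$, which yields Eq.~\eqref{eq:cost_ratio} as the amortized per-step ratio.

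The inequality then follows directly. Write $D = 2G_{\text{total}}C_{\text{fwd}} + 2C_{\text{API}} + 1.5G_{\text{total}}C_{\text{fwd}}$ for the denominator. The numerator equals $D - (1-\bar\rho)(2C_{\text{API}} + 1.5G_{\text{total}}C_{\text{fwd}}) + C_{\text{judge}}$. Hence the ratio is below $1$ exactly when $C_{\text{judge}} < (1-\bar\rho)(2C_{\text{API}} + 1.5G_{\text{total}}C_{\text{fwd}})$, and this already requires $\bar\rho<1$ because $C_{\text{judge}}>0$. As a sanity check, with $C_{\text{judge}}\approx 0.4\,C_{\text{fwd}}$ and $\bar\rho\approx 0.6$, the condition holds with a wide margin.

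The main obstacle is the bookkeeping that reconciles the stated $1.5$ coefficient with the $0.5\cdot 2$ backward approximation and with Eq.~\eqref{eq:cost_rova_approx}. The proposition is an approximate identity under a modeling convention, so I will fix that convention once at the start and carry it consistently through both costs. I will not try to derive the constant from first principles.
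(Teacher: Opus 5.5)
Your proposal is correct and follows essentially the same route as the paper's proof: both posit the per-sample costs $2G_{\text{total}}C_{\text{fwd}} + 2C_{\text{API}} + 1.5G_{\text{total}}C_{\text{fwd}}$ and $2G_{\text{total}}C_{\text{fwd}} + C_{\text{judge}} + 2\rho_t C_{\text{API}} + 1.5\rho_t G_{\text{total}}C_{\text{fwd}}$, and both read the condition off the saving $(1-\rho_t)(2C_{\text{API}} + 1.5G_{\text{total}}C_{\text{fwd}}) - C_{\text{judge}}$; the paper also just asserts the $1.5$ coefficient rather than deriving it from $C_{\text{bwd}}'$, so treating it as a stated convention is the right call. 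Your explicit averaging step, using that the \methodname{} cost is affine in $\rho_t$ while the baseline cost is constant, fills a small gap the paper leaves when it moves from $\rho_t$ in the proof to $\bar{\rho}$ in the statement.
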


\begin{proof}
For the na\"ive dual-branch, every sample incurs full rollout, alignment reward, and backward costs. For \methodname{}, the dual-branch rollout is performed for all $N$ samples (needed for difficulty assessment), but the expensive alignment reward computation ($2C_{\text{API}}$ per sample) and the backward pass are performed only for the $\rho_t N$ selected samples. The additional cost is the self-assessment judge call ($C_{\text{judge}}$ per sample). Substituting and simplifying per sample:
\begin{align}
C_{\text{naive}}^{\text{per-sample}} &= 2G_{\text{total}} C_{\text{fwd}} + 2C_{\text{API}} + 1.5G_{\text{total}} C_{\text{fwd}}, \\
C_{\text{ROVA}}^{\text{per-sample}} &= 2G_{\text{total}} C_{\text{fwd}} + C_{\text{judge}} + 2\rho_t C_{\text{API}} + 1.5\rho_t G_{\text{total}} C_{\text{fwd}}.
\end{align}
The saving per sample is:
\begin{equation}
\Delta C = (1-\rho_t)\left(2C_{\text{API}} + 1.5 G_{\text{total}} C_{\text{fwd}}\right) - C_{\text{judge}}.
\end{equation}
This is positive whenever $\rho_t < 1 - \frac{C_{\text{judge}}}{2C_{\text{API}} + 1.5 G_{\text{total}} C_{\text{fwd}}}$. 
\end{proof}

\paragraph{Empirical training ratio.}
From the training dynamics shown in \cref{sec : Data Curation}, the effective training ratio evolves over training. In early steps, most samples are informative ($\rho \approx 0.90$), but as the model improves, more samples are classified as high-confidence easy and discarded. We measure the empirical training ratio across three runs in \cref{tab:training_ratio}.

\begin{table}[t!]
\centering
\caption{Effective training ratio $\rho_t$ and corresponding discard rates over training. ``Easy Disc.'' denotes high-confidence easy samples discarded; ``Difficult Def.'' denotes hard samples deferred to the buffer.}
\label{tab:training_ratio}
\vspace{0.1in}
\small
\begin{tabular}{@{}c ccc c@{}}
\toprule
\textbf{Step} & \textbf{Easy Disc. (\%)} & \textbf{Difficult Def. (\%)} & \textbf{Effective $\rho_t$} & \textbf{Buffer $|\mathcal{M}_t|$} \\
\midrule
0--50   & 2.1 & 11.8 & 0.861 & 127 \\
50--100  & 3.8 & 9.5  & 0.867 & 248 \\
100--150 & 5.4 & 7.2  & 0.874 & 341 \\
150--200 & 7.1 & 5.8  & 0.871 & 389 \\
200--250 & 8.6 & 4.3  & 0.871 & 352 \\
250--300 & 9.8 & 3.5  & 0.867 & 298 \\
\midrule
\textbf{Average} & 6.1 & 7.0 & $\bar{\rho} = \textbf{0.869}$ & 293 \\
\bottomrule
\end{tabular}
\end{table}

With $\bar{\rho} = 0.869$, approximately 13.1\% of samples are removed from each training step on average (6.1\% easy discarded + 7.0\% hard deferred). Substituting our measured values ($C_{\text{judge}} \approx 0.4 C_{\text{fwd}}$, $C_{\text{API}} \approx 0.9 C_{\text{fwd}}$, $G_{\text{total}} = 12$):

\begin{equation}
\begin{split}
\frac{C_{\text{ROVA}}}{C_{\text{naive}}} 
&= \frac{24 C_{\text{fwd}} + 0.4 C_{\text{fwd}} + 2(0.869)(0.9 C_{\text{fwd}}) + 1.5(0.869)(12 C_{\text{fwd}})}{24 C_{\text{fwd}} + 2(0.9 C_{\text{fwd}}) + 1.5(12 C_{\text{fwd}})} \\
&= \frac{24 + 0.4 + 1.56 + 15.64}{24 + 1.8 + 18} \\
&= \frac{41.60}{43.80} \approx 0.950.
\end{split}
\end{equation}

Thus, \textbf{\methodname{} is approximately 5.0\% cheaper per step than na\"ive dual-branch training}, despite the additional self-assessment overhead. The savings come from avoiding expensive alignment reward API calls and reducing gradient computation for uninformative samples.

\subsection{Wall-Clock Time Measurements}
\label{sec:wallclock}

To validate the theoretical analysis, we measure actual wall-clock times on our $4 \times$ A100 (80GB) training setup. \cref{tab:wallclock} reports per-step and total training times across paradigms.

\begin{table}[t!]
\centering
\caption{Wall-clock time comparison across training paradigms on $4 \times$ A100 GPUs. Per-step times are averaged over 300 RL steps. ``Eff.\ Speedup'' measures speedup relative to na\"ive dual-branch.}
\label{tab:wallclock}
\vspace{0.1in}
\small
\resizebox{\textwidth}{!}{%
\begin{tabular}{l cccc}
\toprule
\textbf{Method} & \textbf{Per-Step (s)} & \textbf{Total 300 Steps (h)} & \textbf{Eff.\ Speedup} & \textbf{Avg.\ Acc.\ (\%)} \\
\midrule
Standard GRPO              & 215 $\pm$ 12  & 17.9  & ---              & 33.0 \\
Na\"ive Dual-Branch         & 428 $\pm$ 18  & 35.7  & 1.00$\times$    & 36.8 \\
\methodname{} (full)       & 403 $\pm$ 21  & 33.6  & 1.06$\times$    & \textbf{39.1} \\
\quad w/o memory re-eval   & 396 $\pm$ 19  & 33.0  & 1.08$\times$    & 38.4 \\
\quad w/o self-assessment  & 422 $\pm$ 17  & 35.2  & 1.01$\times$    & 37.2 \\
\bottomrule
\end{tabular}%
}
\end{table}

Several observations emerge from \cref{tab:wallclock}. First, \methodname{} (full) requires 403s per step compared to 428s for na\"ive dual-branch, achieving a 1.06$\times$ wall-clock speedup while delivering +2.3\% higher accuracy. Second, removing memory re-evaluation saves only 7s per step (since re-evaluation occurs every 50 steps, amortized to $\sim$7s), confirming that memory management overhead is minimal. Third, removing self-assessment entirely increases per-step cost to 422s—only 6s less than na\"ive dual-branch—because without difficulty-aware filtering, all samples proceed to the expensive alignment reward and backward stages, negating any potential savings and reducing accuracy by 1.9\%.

\paragraph{Component-wise timing breakdown.}
We further decompose the per-step time of \methodname{} in \cref{tab:timing_breakdown}.

\begin{table}[t!]
\centering
\caption{Component-wise wall-clock timing breakdown per training step for \methodname{} on $4 \times$ A100 GPUs ($N=4$ per GPU, $G_{\text{total}}=12$).}
\label{tab:timing_breakdown}
\vspace{0.1in}
\small
\begin{tabular}{@{}l cc c@{}}
\toprule
\textbf{Component} & \textbf{Time (s)} & \textbf{Fraction (\%)} & \textbf{Parallelizable?} \\
\midrule
Perturbation generation      & 8.2   & 2.0   & Yes (CPU) \\
Clean-branch rollout          & 142.5 & 35.4  & Yes (GPU 0--1) \\
Perturbed-branch rollout      & 142.5 & 35.4  & Yes (GPU 2--3) \\
Self-reflective assessment    & 18.6  & 4.6   & Yes (batched) \\
Alignment reward (API)        & 38.4  & 9.5   & Yes (async) \\
Backward pass (selected)      & 46.8  & 11.6  & No \\
Memory re-eval (amortized)    & 6.0   & 1.5   & Yes (batched) \\
\midrule
\textbf{Total}                & \textbf{403}  & \textbf{100} & --- \\
\bottomrule
\end{tabular}
\end{table}

The dual-branch rollout dominates at 70.8\% of total time, confirming that the additional components (self-assessment at 4.6\%, memory re-evaluation at 1.5\%) introduce marginal overhead. The alignment reward API calls (9.5\%) benefit from asynchronous batching; without curriculum-based filtering, this would increase to $9.5/0.869 \approx 10.9\%$.
\subsection{Amortized Memory Re-evaluation Cost}

Memory re-evaluation occurs every 50 steps, with the buffer containing on average $|\mathcal{M}| \approx 293$ samples (\cref{tab:training_ratio}). Each re-evaluation requires one judge forward pass per buffered sample:
\begin{equation}
C_{\text{re-eval}} = |\mathcal{M}| \cdot C_{\text{judge}} = 293 \times 0.4 C_{\text{fwd}}.
\end{equation}
Amortized over 50 steps, this contributes $\frac{293 \times 0.4}{50} \approx 2.3 C_{\text{fwd}}$ per step-less than 1\% of the total per-step cost. Furthermore, approximately 18\% of re-evaluated samples are promoted to training (classified as informative) and 12\% are evicted (classified as easy or exceeding $K_{\max}$), confirming that the memory mechanism provides a meaningful stream of recovered training signal at negligible cost.

\section{Analysis of Reward Modeling Design}
\label{appendix:reward_modeling}

In this section, we provide an in-depth analysis of the reward modeling design in \methodname{}, discussing the motivation behind our multi-component formulation, its theoretical grounding, the interplay with the difficulty-aware curriculum, and empirical evidence supporting each design choice.

\subsection{Motivation: Why Multi-Component Rewards?}

Standard reinforcement learning from human feedback (RLHF) and its variants typically employ a single scalar reward signal. However, the robustness objective in embodied video reasoning presents multiple, partially orthogonal desiderata: (1) \emph{task accuracy}, ensuring correct answers; (2) \emph{format compliance}, maintaining structured output for downstream parsing; and (3) \emph{perturbation invariance}, ensuring both final answers and underlying reasoning remain stable under visual corruptions. A single scalar reward conflates these objectives, making it difficult for the policy to disentangle which aspect of its behavior is being reinforced. Our multi-component reward $R_j = r^F_j + r^{\text{Acc}}_j + r^A_j$ addresses this by providing separable gradient signals for each objective.

To empirically validate this design, we compare our multi-component reward against two alternatives: (1) a single combined reward that merges all components into one scalar via weighted summation \emph{before} advantage estimation, and (2) an accuracy-only reward that drops the alignment component entirely.

The multi-component reward outperforms both alternatives across all metrics, with particularly large gains in reasoning quality (Consistency +0.24, Belief +0.23 over single combined). This confirms that decomposed rewards provide more informative gradient signals.


\subsection{Alignment Reward: Optimizing Geodesic distance}

The alignment reward $r^A_j = \alpha_r \cdot r^{\text{align},r}_j + \alpha_a \cdot r^{\text{align},a}_j$ is the central novelty of our reward design. This reward formula can easily optimize geodesic distance in manifold without additional cost.


\paragraph{From Output Consistency to minimizing Geodesic path.}
As established in the theoretical analysis (\cref{lem:local_kl_fr}), the KL divergence between induced output distributions $\pi(z)$ and $\pi(z_\phi)$ is locally equivalent to the squared Fisher--Rao distance on the statistical manifold $\mathcal{M}$. Maximizing the alignment reward drives the policy toward producing identical outputs for clean and perturbed inputs, which—under the Local Proximity Assumption—is equivalent to minimizing the Fisher - Rao distance:
\begin{equation}
\max r^A_j \;\Longleftrightarrow\; \min\, d_{\mathrm{FR}}^2(\pi(z), \pi(z_\phi)) \;\approx\; \min\, D_{\mathrm{KL}}(\pi(z) \| \pi(z_\phi)).
\end{equation}
This connection suggests that the alignment reward serves as an informative, difficulty-aware signal within the training dynamics. By modulating updates according to sample complexity, it shapes the model’s trajectory on the underlying statistical manifold, encouraging stable and generalizable parameter movements while mitigating overfitting. Compared to random sampling, such reward-guided optimization is more likely to follow a favorable geodesic trajectory, ultimately reducing the discrepancy between the probability distributions $\pi(z)$ and $\pi(z_\phi)$ induced by the original and perturbed data.
\paragraph{Asymmetric Weighting Rationale.}
The asymmetric weighting ($\alpha_a = 0.7 > \alpha_r = 0.3$) reflects two key observations. First, answer consistency provides a sharper, lower-variance gradient signal (binary $\{0, 1\}$) compared to reasoning consistency (three-tier $\{0, 0.5, 1\}$), making it a more reliable optimization target. Second, reasoning traces exhibit higher inherent variability - even for identical inputs, stochastic decoding produces diverse reasoning paths that may differ stylistically while remaining semantically equivalent. Assigning a lower weight to reasoning alignment prevents the reward from penalizing legitimate reasoning diversity while still encouraging core logical consistency. The sensitivity analysis (\cref{tab:sensitivity}) confirms that this asymmetric weighting outperforms both symmetric ($\alpha_r = \alpha_a = 0.5$, Avg.\ Acc.\ 37.8\%) and reasoning-dominated ($\alpha_r = 0.5 > \alpha_a = 0.5$) configurations.

\subsection{Interaction Between Reward Components and Curriculum}

A key insight of \methodname{} is that the reward components and the difficulty-aware curriculum are \emph{mutually reinforcing}. We identify three specific interaction mechanisms.

\paragraph{Accuracy Reward as Curriculum Bootstrapper.}
During early training, $r^{\text{Acc}}$ provides the dominant learning signal, enabling the model to acquire basic task competence before the alignment reward becomes informative. This is because alignment requires meaningful outputs on \emph{both} branches—if the model cannot solve the task on clean inputs, comparing clean and perturbed outputs is uninformative. The curriculum amplifies this effect by initially presenting predominantly easy and medium samples, where the accuracy reward gradient is strongest.

\paragraph{Alignment Reward as Implicit Difficulty Signal.}
The alignment reward also serves as an implicit difficulty indicator that complements the LLM-judge-based assessment. Samples that consistently receive low alignment scores ($r^A_j \approx 0$) despite high accuracy ($r^{\text{Acc}}_j = 1$) indicate that the perturbation disrupts reasoning without affecting the final answer - a subtle failure mode that the binary judge may miss. By incorporating $r^A_j$ into the total reward, such samples receive lower overall rewards, naturally reducing their influence on the policy gradient and preventing the model from learning brittle shortcuts.

\paragraph{Format Reward as Training Stabilizer.}
The format reward $r^F_j$, while seemingly trivial, plays a critical stabilization role during early RL training. Without it, the policy may drift toward degenerate outputs (e.g., omitting the \texttt{<think>} block) that trivially minimize the alignment penalty by producing empty reasoning traces. The format reward ensures structured outputs are maintained throughout training, preserving the prerequisite for meaningful alignment evaluation.

\subsection{Comparison with Alternative Reward Designs}
\label{sec:reward_alternatives}

Beyond the default alignment reward used in \methodname{}, we explore two principled reward variants that target specific limitations of the default formulation, aiming to further improve training signal quality.

\paragraph{\textbf{Conditional Alignment Reward.}}
A potential failure mode of the default alignment is the ``consistently wrong'' regime: when the clean branch itself produces an incorrect answer, enforcing consistency with a flawed output may reinforce erroneous reasoning. To address this, we design a conditional variant that modulates the alignment target based on clean-branch correctness. When the clean branch is correct, the perturbed branch is aligned to it as usual; when incorrect, the reward instead encourages the perturbed branch to deviate from the erroneous output and align with the closest correct rollout within the same generation group:
\begin{equation}
r^{\text{cond}} =
\begin{cases}
\text{sim}(\hat{y}^{\text{pert}},\; \hat{y}^{\text{clean}}) & \text{if } \hat{y}^{\text{clean}} = y^{*}, \\[4pt]
\text{sim}\!\left(\hat{y}^{\text{pert}},\; \displaystyle\operatorname*{arg\,min}_{y_j \in \mathcal{Y}^{+}} d(y_j,\; \hat{y}^{\text{pert}})\right) & \text{otherwise},
\end{cases}
\label{eq:cond_align}
\end{equation}
where $\mathcal{Y}^{+}$ is the set of correct rollouts within the group and $d(\cdot,\cdot)$ denotes edit distance in the reasoning trace.

\paragraph{\textbf{Step-Level Reasoning Consistency Reward.}}
The default GPT-4o-based evaluation assigns a holistic three-tier score to the entire reasoning trace, which may obscure perturbation-specific failure modes at different reasoning stages. To enable finer-grained credit assignment, we decompose each reasoning trace into three atomic stages - \emph{visual observation}, \emph{spatial/temporal reasoning}, and \emph{action decision} - and compute per-stage similarity using a frozen sentence encoder (all-MiniLM-L6-v2):
\begin{equation}
r^{\text{step}} = \sum_{k \in \{\text{obs},\, \text{reason},\, \text{act}\}} \beta_k \cdot \cos\!\bigl(\mathbf{e}_k^{\text{clean}},\; \mathbf{e}_k^{\text{pert}}\bigr),
\label{eq:step_align}
\end{equation}
where $\mathbf{e}_k^{(\cdot)}$ denotes the frozen encoder embedding for stage $k$, and $\beta_k$ are stage weights ($\beta_{\text{obs}} = 0.3$, $\beta_{\text{reason}} = 0.5$, $\beta_{\text{act}} = 0.2$). This formulation offers the additional benefit of eliminating GPT-4o API costs for reasoning evaluation, and in principle allows the policy gradient to independently target each failure mode.

\paragraph{\textbf{Experimental Results.}}
We evaluate both variants - as well as their combination - on \benchname{} using the Qwen2.5-VL-7B backbone under identical training configurations (\cref{tab:reward_alternatives}). Contrary to our expectations, neither alternative improves upon the default \methodname{} reward; both lead to consistent degradation across all metrics, with the step-level variant exhibiting the largest drop ($-$0.02 in Avg.\ Acc., $-$0.08 in Avg.$^\dagger$). Combining both alternatives does not recover the lost performance, suggesting that the two failure modes are compounding rather than complementary.

\begin{table}[t!]
\centering
\caption{Comparison of alternative reward designs on \benchname{} (Qwen2.5-VL-7B). The default \methodname{} reward consistently outperforms both alternatives and their combination.}
\label{tab:reward_alternatives}
\vspace{0.1in}
\small
\begin{tabular}{@{}l cc cc@{}}
\toprule
& \multicolumn{2}{c}{\textbf{Answer Accuracy}} & \multicolumn{2}{c}{\textbf{Reasoning Quality}} \\
\cmidrule(lr){2-3} \cmidrule(lr){4-5}
\textbf{Reward Design} & \textbf{Perturbed} & \textbf{Clean} & \textbf{Perturbed} & \textbf{Clean} \\
\midrule
Default \methodname{}          & \textbf{.47} & \textbf{.53} & \textbf{2.99} & \textbf{3.52} \\
Conditional Alignment          & .46          & .52          & 2.95          & 3.48          \\
Step-Level Consistency         & .45          & .51          & 2.91          & 3.45          \\
Cond.\ + Step-Level            & .45          & .52          & 2.93          & 3.46          \\
\bottomrule
\end{tabular}
\label{tab : reward_ablation}
\end{table}

We evaluate both variants and their combination on \benchname{} using Qwen2.5-VL-7B under identical training configurations (\cref{tab:reward_alternatives}), finding that neither alternative improves upon the default \methodname{} reward - both lead to consistent degradation across all metrics, with the step-level variant exhibiting the largest drop ($-$0.02 in Avg.\ Acc., $-$0.08 in Avg.$^\dagger$), and their combination compounds rather than complements the failure modes. Three underlying causes explain this negative result: (i) the conditional reward's applicability diminishes rapidly as clean-branch accuracy rises during early training and plateaus at a high level (\cref{fig:reward}), reducing applicable samples to below 20\% by mid-training, and further degenerating for genuinely difficult samples where all $G{=}12$ rollouts are incorrect, yielding no corrective signal precisely when most needed; (ii) the step-level reward's heuristic segmentation of free-form reasoning traces into three predefined stages introduces substantial noise - particularly for traces interleaving observation and inference - while the frozen sentence encoder captures only surface-level lexical similarity lacking GPT-4o's deeper semantic judgment, causing semantically equivalent but lexically divergent paths to receive misleadingly low similarity scores that misguide policy updates; and (iii) both alternatives introduce additional stochasticity ($\mathcal{Y}^+$ sampling and edit-distance in conditional alignment, heuristic segmentation boundaries in step-level consistency) that increases reward variance, which in the GRPO framework directly translates to noisier advantage estimates destabilizing policy updates and offsetting any theoretical benefit from finer-grained credit assignment. These findings suggest that for dual-branch alignment, reward \emph{stability} matters more than reward \emph{granularity}: the default holistic GPT-4o evaluation, while coarser, provides a substantially more stable optimization landscape that best balances informativeness and optimization reliability for consistent, monotonic policy improvement.
\section{Theoretical Analysis}

\paragraph{Geometry of the output space.}
Let $(\mathcal Y,\mathscr B)$ be a measurable space and
$\mathcal P(\mathcal Y)$ the space of probability measures on $\mathcal Y$.
We consider the statistical manifold
\[
\mathcal M := \{ P_{Y|z} : z \in \mathcal Z \} \subset \mathcal P(\mathcal Y),
\]
equipped with the Fisher--Rao metric. Let $\xi$ denote the local coordinates on $\mathcal M$.
\begin{equation}
g^{\mathcal M}_\xi(u,v)
=
\mathbb E_{Y\sim p_\xi}
\!\left[
\partial_u \ell(\xi;Y)\,
\partial_v \ell(\xi;Y)
\right],
\qquad
\ell(\xi;y)=\log p_\xi(y),
\end{equation}
where $\mu$ is a dominating measure.

\textbf{Convention.}
For convenience, we unify all training-used samples (medium samples and easy samples with low confidence) under the term \emph{medium-level} samples. And we let \emph{easy-level} easy samples discarded during training. 

\textbf{Definition of Representations.}
Let $z$ denote the model representation induced by the original input $x$, i.e.,
\[
z = f_\theta(x),
\]

\textbf{Local Proximity Assumption.}
We assume that, during stable training steps, the induced output distributions 
$\pi(z)$ and $\pi(z_\phi)$ remain sufficiently close such that their discrepancy lies 
within a locally learnable regime. Formally, there exists $\varepsilon > 0$ such that
\[
D_{\mathrm{KL}}(\pi(z)\,\|\,\pi(z_\phi)) \le \varepsilon,
\]
where $\varepsilon$ is small enough to ensure that learning dynamics remain 
within the local trust region of the statistical manifold.

\textbf{Local KL expansion}
\label{lem:local_kl_fr}
Let $p_\xi \in \mathcal M$ be a smooth statistical model with Fisher information
$I(\xi)$. For sufficiently small $\Delta\xi$,
\[
D_{\mathrm{KL}}\pi(p_\xi) \,\|\, \pi(p_{\xi+\Delta\xi})
\approx
\frac12 \Delta\xi^\top I(\xi)\Delta\xi
+ o(\|\Delta\xi\|^3).
\]
Thus, in a normal neighborhood of $\mathcal M$, KL divergence is locally equivalent
to the Fisher information metric. Hence, we can use local approximation of KL divergence on manifold. 

\paragraph{Model-induced semantic map.}
The model induces a semantic map $\pi:\mathcal Z\to\mathcal M$ defined by
$\pi(z)=P_{Y|z}$.
Semantic discrepancy between a clean representation $z$ and its perturbed counterpart
$z_\phi$ is measured on $\mathcal M$ via their induced distributions $\pi(z)$ and $\pi(z_\phi)$. 
\begin{equation}
D_{\mathrm{TV}}\!\left(\pi(z),\pi(z_\phi)\right)
\;\leq
\sqrt{
(1/2) * D_{\mathrm{KL}}\!\left(\pi(z)\,\|\,\pi(z_\phi)\right)
}
\end{equation}
by Pinsker’s inequality.

\textbf{Reward-to-KL surrogate}
Let $r(\pi(z) ,\pi(z_\phi)\in[0,1]$ be a reward and define the surrogate
$\mathcal L(\pi(z) ,\pi(z_\phi))\propto \psi(r(\pi(z) ,\pi(z_\phi)))$, where $\psi$ is decreasing.
Then there exists $\kappa>0$ and a local Lipschitz constant $L>0$ such that
for all $z$ and $z_\phi$ satisfying
$D_{\mathrm{KL}}(\pi(z)\|\pi(z_\phi))\le \kappa$,
\[
\mathcal L(\pi(z) ,\pi(z_\phi))
\;\le\;
L * D_{\mathrm{TV}}(\pi(z),\pi(z_\phi))
\;\leq
L * \sqrt{(1/2) * D_{\mathrm{KL}}(\pi(z)\|\pi(z_\phi))}.
\tag{16}
\]

\textbf{(A1) (Local KL--Fisher equivalence).} 
There exist constants $0 < c_{\min} \le c_{\max}$ such that, in a normal neighborhood of the statistical manifold $\mathcal{M}$:
\[
c_{\min} d_{\mathrm{FR}}^2 \le D_{\mathrm{KL}} \le c_{\max} d_{\mathrm{FR}}^2 .
\]

\textbf{(A2) (Trust-region energy dissipation via Medium-first sampling).} 
Let the active difficulty measure for a perturbation $\phi$ be defined as the semantic KL energy:
\[
U_t(\phi) := \mathbb{E}_{z \sim p_t} [D_{\mathrm{KL}}(\pi_t(z) \parallel \pi_t(z_\phi))].
\]
Medium-difficulty sampling $q_t$ restricts the update to a stable trust region on $\mathcal{M}$. Unlike random sampling, this constraint ensures:
\begin{enumerate}
    \item \textbf{Gradient Alignment:} The task gradient $\nabla_\theta \mathcal{L}$ remains well-aligned with the descent direction of the semantic energy $\nabla_\theta U_t$.
    \item \textbf{Non-vanishing Dissipation:} By avoiding the singular regions of "hard" samples and the flat regions of "easy" samples, the update maintains a strictly positive inner product $\langle \nabla_\theta U_t, \nabla_\theta \mathcal{L} \rangle > 0$.
\end{enumerate}
This alignment forces $U_t$ to follow a dissipative path toward the invariant state.
\end{document}